\documentclass[a4paper,UKenglish,cleveref,autoref,thm-restate]{lipics-v2021}
\usepackage{times}  

\usepackage{hyperref}
\usepackage{newfloat}
\usepackage{listings}
\title{Diversity of Extensions in Abstract Argumentation}
\titlerunning{Diversity of Extensions in Abstract Argumentation}

\author{Johannes K. Fichte}{Department of Computer and Information Science (IDA), Link\"oping University, Sweden}{johannes.fichte@liu.se}{https://orcid.org/0000-0002-8681-7470}{}

\author{Markus Hecher}{University of Potsdam, Germany \& University of Artois, CNRS, UMR8188 (CRIL), France }{hecher@cril.fr}{https://orcid.org/0000-0003-0131-6771}{}

\author{Yasir Mahmood}{Data Science Group, Heinz Nixdorf Institute, Paderborn University, Germany}{yasir.mahmood@uni-paderborn.de}{https://orcid.org/0000-0002-5651-5391}{}

\author{Zhengjun Wang}{Data Science Group, Heinz Nixdorf Institute, Paderborn University, Germany}{zwang@mail.uni-paderborn.de}{}{}

\authorrunning{J.K.~Fichte, M.~Hecher, Y.~Mahmood, and Z.~Wang}
\Copyright{Yasir Mahmood, Markus Hecher, Johannes K. Fichte and Zhengjun Wang}

\ccsdesc[100]{Computing methodologies~Knowledge representation and reasoning}

\keywords{Abstract Argumentation, Diversity, Computational Complexity} 

\nolinenumbers 

\hideLIPIcs  

\usepackage{soul}
\usepackage{amsthm}
\usepackage{booktabs}
\usepackage{algorithm}
\usepackage{algorithmic}
\usepackage{tikz}
\usepackage{macros}
\usepackage{multicol}
\usepackage{bibentry}

\usepackage{pgf,tikz}
\usetikzlibrary{fit,positioning,arrows,shapes, decorations,automata,%
	petri,%
	topaths,calc}
\tikzstyle{dashedarrow} = [-stealth',dashed]
\tikzstyle{tdnode} = [draw,rounded corners,top color=vertexTopColor,bottom color=vertexBottomColor,minimum size=1.5em]
\tikzstyle{stdnode} = [tdnode, font=\scriptsize]
\tikzstyle{stdnodecompact} = [stdnode, inner sep = 1.5pt, outer sep = 0.1pt]
\tikzstyle{stdnodetable} = [stdnode, inner sep = 1.5pt, outer sep = 0]
\tikzstyle{stdnodenum} = [minimum size=1.5em, font=\scriptsize]
\tikzstyle{tdedge} = [-,draw,thick]
\tikzstyle{tdlabel} = [draw=none, rectangle, fill=none, inner sep=0pt, font=\scriptsize]

\newcommand{\futuresketch}[1]{}
\newcommand{\future}[1]{}

\usepackage{xcolor}
\colorlet{vertexTopColor}{white}

\colorlet{vertexBottomColor}{black!10}

\allowdisplaybreaks

\newcommand{\lefttriangle}{\hfill$\triangleleft$}
\AtEndEnvironment{claimproof}{\filledsquare}
\AtEndEnvironment{example}{\lefttriangle}

\usepackage{enumerate,enumitem}

\makeatletter
\providecommand*{\cupdot}{%
	\mathbin{%
		\mathpalette\@cupdot{}%
	}%
}
\newcommand*{\@cupdot}[2]{%
	\ooalign{%
		$\m@th#1\cup$\cr
		\hidewidth$\m@th#1\cdot$\hidewidth
	}%
}
\makeatother

\newcommand{\symdif}{\triangle}
\newcommand{\yasir}[1]{\textcolor{blue}{#1}}
\newcommand{\alert}[1]{\textcolor{red}{#1}}

\newcommand{\kdiversearg}{\mathsf{k\text-Div\text-Arg}}
\newcommand{\kdiverseext}{\mathsf{k\text-Div\text-Ext}}
\newcommand{\kdiversecover}{\mathsf{k\text-Div\text-Cov}}
\newcommand{\maxkdiversecover}{\mathsf{max\text-k\text-Div\text-Cov}}
\newcommand{\minkdiversecover}{\mathsf{min\text-k\text-Div\text-Cov}}
\newcommand{\maxkdiversearg}{\mathsf{max\text-k\text-Div\text-Arg}}

\newcommand{\maxkdiverseext}{\mathsf{max\text-k\text-Div\text-Ext}}
\newcommand{\minkdiverseext}{\mathsf{min\text-k\text-Div\text-Ext}}

\newcommand{\belowkdiversearg}{\,^{\leq}\mathsf{k\text-Div\text-Arg}}
\newcommand{\abovekdiversearg}{\,^{\geq}\mathsf{k\text-Div\text-Arg}}
\newcommand{\belowkdiverseext}{\,^{\leq}\mathsf{k\text-Div\text-Ext}}
\newcommand{\abovekdiverseext}{\,^{\geq}\mathsf{k\text-Div\text-Ext}}

\usepackage{microtype}

\usepackage{thmtools}
\usepackage{thm-restate}
\usepackage{cleveref}
\usepackage{apptools}
\usepackage{xpatch}
\makeatletter
\xpatchcmd{\thmt@restatable}
{\csname #2\@xa\endcsname\ifx\@nx#1\@nx\else[{#1}]\fi}
{\IfAppendix{\csname #2\@xa\endcsname}{\csname #2\@xa\endcsname\ifx\@nx#1\@nx\else[{#1}]\fi}}
{}{} 
\makeatother

\usepackage{boxedminipage}
\newcommand{\pbDef}[3]{%
	\noindent
	\begin{center}
		\begin{boxedminipage}{0.98 \columnwidth}
			#1\\[1pt]
			\begin{tabular}{l p{0.70 \columnwidth}}
				Input: & #2\\
				Question: & #3
			\end{tabular}
		\end{boxedminipage}
	\end{center}
}

\newcommand{\pbDefT}[3]{%
	\noindent
	\begin{center}
		\begin{boxedminipage}{0.98 \columnwidth}
			#1\\[5pt]
			\begin{tabular}{l p{0.70 \columnwidth}}
				Input: & #2\\
				Task: & #3
			\end{tabular}
		\end{boxedminipage}
	\end{center}
}
\usepackage{subcaption}

\usepackage{thmtools}
\usepackage{thm-restate}
\usepackage{cleveref}
\usepackage{apptools}
\usepackage{xpatch}
\makeatletter
\xpatchcmd{\thmt@restatable}
{\csname #2\@xa\endcsname\ifx\@nx#1\@nx\else[{#1}]\fi}
{\IfAppendix{\csname #2\@xa\endcsname}{\csname #2\@xa\endcsname\ifx\@nx#1\@nx\else[{#1}]\fi}}
{}{} 
\makeatother

\begin{document}

\maketitle

\begin{abstract}
	Argumentation is an important topic of AI for modeling and reasoning
	about arguments. In abstract argumentation, we consider directed
	graphs, so-called \emph{argumentation frameworks (AF)}, that express
	conflicts between arguments.
	The semantics is defined by the notion of \emph{extensions}, which
	are sets of arguments that satisfy particular relationship
	conditions in the AF. Usually, 
	%
	standard reasoning in argumentation do not reveal \emph{how
		far apart} extensions are.
	%
	%
	
	We introduce a quantitative notion of \emph{diversity of extensions}
	based on the symmetric-difference and provide a systematic
	complexity classification.
	%
	%
	%
	Intuitively, diversity captures whether extensions of a
	framework (accepted viewpoints) differ only marginally
	or represent fundamentally incompatible sets of arguments.  We
	study whether an AF admits k-diverse 
	extensions, admits k-diverse 
	extensions covering
	specific arguments, 
	and to compute the largest~$k$
	for which an AF admits $k$-diverse extensions.
	%
	%
	We outline a prototype and provide an evaluation for
	computing 
	diversity levels.
	
\end{abstract}

\section{Introduction}
Abstract
argumentation~\cite{Dung95a,Rahwan07a,BaroniCaminadaGiacomin11} is a
well-known concept in AI for modeling, relating, and grouping
arguments~\shortversion{\cite{AmgoudPrade09a,RagoCocarascuToni18a}}.
To express opposing positions, arguments are related to each other in
form of a directed graph, so-called \emph{argumentation framework
	(AF)}.
Sets of arguments (\emph{extensions}), which satisfy certain
conditions regarding the relationship among them (semantics), build
the solutions to an AF.
Usually, an AF admits multiple extensions, in other words, mutually
incompatible but internally coherent sets of jointly acceptable
arguments, which can be understood as alternative ``view-points'' on
the same argumentative situation.
%
%

Research in abstract argumentation has 
introduced concepts to facilitate insights into diverging extensions,
for example,
explaining and selecting a set of most representative
view-points~\cite{BernreiterMalyNardi24},
%
%
%
deciding between multiple extensions~\cite{DauphinCramerVan-der-Torre18},
%
exploring large solution spaces~\cite{DachseltGagglKrotzsch22},
%
%
%
deciding the significance of individual
arguments~\cite{MahmoodEtAl25}, 
%
conditional probability of one or multiple arguments occurring in an extension~\cite{FichteH0M23},
counting extensions
\cite{FichteHecherMeier24}, and 
incorporating preordering over arguments (extension-ranking
semantics)~\cite{SkibaRienstraThimm21}
%
%
%

However, existing concepts do not reveal \emph{how far apart} and
\emph{diverse} entire extensions or \emph{how diverse AFs} are.
This is particularly interesting in environments where we want to
negotiate positions. Different view-points may reasonably endorse
different extensions. If we consider only one arbitrarily chosen
extension, a representative extension, or parts of extensions, we
easily hide genuine trade-offs and impede agreement.
%
%
%
%
%
%
%
%
%
%
%
%
To this end, we study diverse extensions and 
%
%
%
%
%
introduce a quantitative notion of extension diversity based on the
symmetric-difference.
%
The \emph{symmetric-difference}~$A \triangle B$ of two sets~$A$ and
$B$ takes the elements that are in exactly one of them, but not in
both, i.e., $A\triangle B \eqdef (A\cup B)\setminus (A\cap B)$.
Example~\ref{ex:running} illustrates a situation where we are
interested in understanding the diversity of extensions.

\begin{figure}
	\centering
	\includegraphics{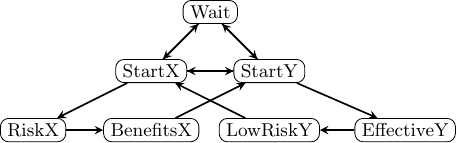}
	\caption{An Example AF modeling the medical decision making.}
	\label{ex:intro}
\end{figure}

\begin{example}\label{ex:running}
	Take an AF
	$F = (A,R)$ modeling a medical decision under time pressure, as depicted in Figure~\ref{ex:intro}.
	The arguments of $F$ have the following intuitive meaning:
	{\begin{itemize} \setlength{\itemsep}{0pt}%
			\setlength{\parskip}{0pt}%
			\item \textbf{sx} ({StartX}): Start Treatment X immediately.
			\item \textbf{sy} (StartY): Start Treatment Y immediately.
			\item \textbf{rx} ({RiskX}): Given no immediate decision, X's long-term risks must be carefully evaluated first.
			\item \textbf{ey} ({EffectiveY}): Given no immediate decision, Y's effectiveness must be further tested.
			\item \textbf{bx} {(BenefitsX)}: X has strong short-term benefits.
			\item \textbf{lry} ({LowRiskY}): Y has fewer side effects.
			\item \textbf{w} ({Wait}): collect more data before deciding.
	\end{itemize}}
	Under stable semantics, which intuitively expresses a
	self-consistent position that defeats all alternatives, $F$ admits exactly three extensions. 
	Precisely, 
	$E_X = \{sx,bx,ey\}, E_Y = \{sy,lry,rx\}$, and $E_W=\{w,rx,ey\}$,
	each highlighting either a type of treatment, or the argument to
	wait for more~data.
	
	Here, three most important arguments regarding the final decision are $sx, sy$ and $w$ which either select a type of treatment or argue to wait, respectively.
	Each of these arguments occur in some
	extension (credulously accepted).  Hence, we cannot tell much about
	either of the three options.
	But looking from a global perspective on all the available options (stable extensions),
	we observe that the extensions for either treatment do not share any
	argument, whereas they do share at least one argument with the
	extension that forces us to wait ($E_W$).
	Precisely, 
	$E_X \triangle E_Y = \{sx,bx,ey, sy,lry,rx\}$ meaning we consider options ``StartX'' and ``StartY'' to be $6$-diverse.
	%
	%
	%
	%
	However, we have $ E_X \triangle E_W = \{sx,bx,rx,w\}$, and
	$ E_Y \triangle E_W = \{sy,lry,ey,w\}$. 
	Consequently, we observe that
	the options ``StartX'' (``StartY'') and ``Wait'' are only $4$-diverse.
	Intuitively, this means that the justification for either type of treatment uses very ``different'' line of reasoning, whereas both treatment options share at least some reasoning with the third option to wait before deciding anything.
	Alternatively, the reasoning for ``StartX'' is somewhat \emph{similar} to the reasoning for ``Wait'' compared to ``StartY''.
\end{example}

\noindent\textbf{Contributions.} Our main contributions are as follows.
{\begin{enumerate}\setlength{\itemsep}{0pt}%
		\setlength{\parskip}{0pt}%
		\item We introduce the notion of symmetric-difference-based diversity
		to abstract argumentation and study its properties. We show how
		diversity differs from existing notions and that it provides a more fine-grained notion where concepts based
		on credulous and skeptical reasoning fail.
		\item We study the computational complexity of central problems
		(above, below, exactly $k$-diverse, max diversity) and provide a
		thorough 
		picture of the complexity landscape, which are listed in
		Table~\ref{tab:results}.
		\item We provide a prototypical implementation and an initial
		experimental evaluation, where we compute the range of diversity via
		logic programming (ASP).
\end{enumerate}}

\newcommand{\tref}[1]{\ensuremath{^{\footnotesize{\color{blue}\text{ Th.}\ref{#1}}}}}
\newcommand{\trefs}[2]{\ensuremath{^{\footnotesize{\color{blue}\text{ Th.}\ref{#1}}/\ref{#2}}}}
\newcommand{\rref}[1]{\ensuremath{^{\footnotesize{\color{blue}\text{ R.}\ref{#1}}}}}

\begin{table}
	\centering
	\begin{tabular}{lccc}%
			\toprule%
			Problems/$\sem$            & $\sigma_1 $                                                                & $\sigma_2$                                       & $\sigma_3 $                        \\
			\midrule
			$\belowkdiverseext_\sem$  & $\in\NP /$ Trivial\rref{rem:conf-atmost-div-arg}                           & $\NP$-c\tref{thm:div-ext}                        & $\SigmaP$-c\tref{thm:div-ext-semi} \\
			$\abovekdiverseext_\sem$ & $\NP$-c\tref{thm:conf-atleast-div-ext}                                       & $\NP$-c\tref{thm:div-ext}                        & $\SigmaP$-c\tref{thm:div-ext-semi} \\
			$\belowkdiversearg_\sem$  & $\NP$-c\tref{thm:nai-atmost-div-arg}/$\in\Ptime\tref{rem:conf-atmost-div-arg}$ & $\NP$-c\tref{thm:div-arg}                        & $\SigmaP$-c\tref{thm:div-arg-semi} \\
			$\abovekdiversearg_\sem$ & $\NP$-c\tref{thm:conf-atleast-div-arg}                                       & $\NP$-c\tref{thm:div-arg}                        & $\SigmaP$-c\tref{thm:div-arg-semi} \\
			Max                       & $\DP$-c\tref{thm:conf-max-k-div}$^/$\tref{thm:conf-max-k-div-arg}                                                                          & $\DP$-c\tref{thm:max-k-div}                       & $\DPtwo$-c\tref{thm:max-k-div}\\ 
			\bottomrule
	\end{tabular}
	\caption{%
		Overview of our complexity results for the semantics~$\sigma_1\in \{\cnf, \nai\}$,
		$\sigma_2\in \{\adm, \stab, \comp\}$, and 
		$\sigma_3\in \{\semi,$ $\stag\}$.
		The problem ``Arg'' asks if two arguments are $k$-diverse and ``Ext'' asks if an AF has two $k$-diverse $\sem$ extensions.
		The results for $\belowkdiversearg_\sem$ also apply to $\kdiversearg_\sem$ for each $\sem$.
	}
	\label{tab:results}%
\end{table}

Proofs to our statements can be found in the technical appendix attached at the end of this pdf . The implementation code is available online in our GitHub Repository~\footnote{\url{https://github.com/Yahahasir/argu_diversity/}}.

\paragraph{Related Work.}

\cite{CyrasRagoAlbini21} survey argumentation-based explanation under
the perspective of XAI.
\cite{LiaoTorre20} introduce and study explanation semantics, which
associates with each accepted argument a set of such explanation
arguments.
\cite{NiskanenJarvisalo20a} and \cite{SaribaturWallnerWoltran20}
study the opposite case where arguments are rejected and analyze
reasons for the rejection.
Although counting complexity for argumentation has been studied
\cite{FichteHecherMeier24}, the results do not apply here, as our focus is on the distances between elements in the extension space.
There are various practical solvers for decision and reasoning
tasks~\cite{EglyGagglWoltran08,NiskanenJarvisalo20,ThimmCeruttiVallati21,Alviano18,Alviano21}
that compete in the biannual ICCMA competition~\cite{ThimmEtAl24}.

\cite{MahmoodEtAl25} propose facets to abstract argumentation, which enable quantifying significance of decisions and study the
computational complexity. Facets do not capture information on extensions containing them, whereas diversity measures the (dis)similarity of accepted argument sets in an AF.
\cite{bohl2023representative} consider diversity and representative
sets in answer-set programming.
\cite{UlbrichtWallner21} consider strong explanation, which ensure
that a target set of arguments is acceptable in each sub-framework
containing the explaining set.
\cite{BesnardDoutreDuchatelle22} investigate concepts of contrastive
and non-contrastive questions with respect to explanations.
%
%
Finding similar and diverse solutions has been considered for
constraint programming~\cite{HebrardHnichOSullivan05} and answer-set
programming~\cite{EiterErdemErdogan13}.
\cite{Rodrigues19} compare different alternative representations of
extensions for systematic enumeration.
\cite{Coste-MarquisKoniecznyMailly15} employed Hamming distance to
define enforcement operators as previously also done
in the context of minimal
change~\cite{Baumann12}.
%
%
%
%
\cite{DachseltGagglKrotzsch22} developed a tool to navigate
argumentation frameworks using ASP-facets and more fine-grained
reasoning modes have also been studied in
ASP~\cite{FichteHecherNadeem22}.  Note that ASP-navigation is based on
forbidding or enforcing atoms in programs via integrity constraints.
%
%

\section{Preliminaries}

We assume familiarity with computational
complexity~\cite{DBLP:books/daglib/0092426}, graph
theory~\cite{DBLP:books/sp/BondyM08}, and Boolean
logic~\cite{DBLP:series/faia/336}.

\paragraph{Complexity Classes.}
We use standard notation for basic complexity classes and for example write $\Ptime$ ($\NP$) for the class of decision problems solvable in (non-deterministic) polynomial time. Additionally, we let $\co \NP$ be the class of decision problems whose complement is in $\NP$, and let $\DP$ be the class of decision problems representable as the intersection of a problem in $\NP$ and a problem in $\co \NP$.
On top, we use more classes from the polynomial hierarchy~\cite{StockmeyerMeyer73,Stockmeyer76,Wrathall76},
$\DeltaP0 \dfn \PiPtime0 \dfn
\SigmaPtime0 \dfn \Ptime$ and $\DeltaP{i} \dfn
\Ptime^{\SigmaPtime{i-1}}$, $\SigmaPtime{i} \dfn
\NP^{\SigmaPtime{i-1}}$, and $\PiPtime{i} \dfn
\co\NP^{\SigmaPtime{i-1}}$ for $i>0$ where $C^{D}$ is the class~$C$ of
decision problems augmented by an oracle for some complete problem in
class $D$.
The complexity class $\DP_k$ is defined as
$\DP_k\dfn \SB L_1 \cap L_2 \SM L_1\in \SigmaPtime{k}, L_2 \in
\PiPtime{k}\SE$, $\DP{=}\DP_1$~\cite{LohreyRosowski23}.

The Boolean {\em satisfiability} problem (SAT) for formulas in {\em conjunctive normal form} (CNF), asks: given $\varphi=\bigwedge_{i=1}^m C_i$ where each $C_i$ is a clause, decide whether $\varphi$ admits at least one satisfying assignment.
SAT is the one of the most famous $\NP$-complete problems, whereas its complementary problem to check {\em unsatisfiability}, is $\co \NP$-complete.
Moreover, the problem to check whether $\varphi$ is satisfiable and $\psi$ unsatisfiable for a given pair of formulas $(\varphi, \psi)$ (the SAT-UNSAT problem) is $\DP$-complete.
For $\PiP$ one usually considers the evaluation problem for a {\em quantified Boolean formula} of the form $\forall X \exists Y . \varphi$ where $X$ and $Y$ are two disjoint sets of variables and $\varphi$ a formula in CNF over $X$ and $Y$.
Finally, for $\SigmaP$ the problem is to check whether $\forall X \exists Y . \varphi$ is false.

\paragraph{Abstract Argumentation.}
We use Dung's argumentation framework~(\cite{Dung95a}) and consider only non-empty and finite sets of arguments~$A$.
An \emph{(argumentation) framework~(AF)} is a directed graph~$F=(A, R)$, where $A$ is a set of arguments and $R \subseteq A\times A$, consisting of pairs of arguments
representing direct attacks between them.
\longversion{Let the \emph{direct attack relationship} between two sets~$E,E'\subseteq A$ of arguments be defined by relations $\rightarrowtail_R$ and $\leftarrowtail_R$ as follows:
$E$ \emph{directly attacks}, i.e.,  $E\rightarrowtail_R E'\eqdef \{a\in E \mid (\{a\}\times E')\cap R \neq \emptyset\}$,
and $E\leftarrowtail_R E'\eqdef \{a\in E \mid (E' \times \{a\}) \cap R\neq \emptyset\}$. }%
Let $E\subseteq A$ and $a\in A$ be an argument.
Then we say that $a$ is \emph{defended by $E$ in $F$} if for every $(a', a) \in R$, there exists $a'' \in E$ such that $(a'', a') \in R$.
The family~$\adef_F(E)$ is defined by $\adef_F(E) \eqdef\{\; a \mid a \in A, a \text{ is defended by $E$ in $F$}  \;\}$.
In abstract argumentation, one aims at  computing so-called \emph{extensions}, which are subsets~$E \subseteq A$ of the arguments that have certain properties.
The set~$E$ of arguments is called \emph{conflict-free in~$E$} if $(E\times E) \cap R = \emptyset$; $E$ is \emph{admissible in $F$} if
(1) $E$ is \emph{conflict-free in $F$}, and
(2) every $a \in E$ is \emph{defended by $E$ in $F$}.
Let $E^+_R\eqdef E\cup\{\, a\mid (b,a)\in R, b \in E\, \}$ and $E$ be conflict-free.
Then, $E$ is (1) \emph{naive} in $F$ if no $E' \supset E$ exists that is {conflict-free in $F$}, and
(2) \emph{stage in $F$} if there is no conflict-free set~$E'\subseteq A$ in~$F$ with~$E^+_R\subsetneq (E')^+_R$.
An admissible set $E$ is
(1) \emph{complete in~$F$} if $\adef_F(E) = E$;
(2) \emph{preferred in~$F$}, if no $E' \supset E$ exists that is \emph{admissible in $F$};
(3) \emph{semi-stable in $F$} if no admissible set $E' \subseteq A$ in~$F$ with~$E^+_R\subsetneq (E')^+_R$ exists; and
(4) \emph{stable in~$F$} if every $a \in A \setminus E$ is \emph{attacked} by some $a' \in E$.
%
%
For a semantics~$\sem \in \{\cnf, \nai, \adm, \comp, \stab, \pref,\semi,\stag\}$, we write $\sem(F)$ for the set of \emph{all extensions} of semantics~$\sem$ in $F$.
%
%
%
Let~$F=(A,R)$ be an AF.
Then, the problem $\Exist\sem$ asks if $\sem(F)\neq\allowbreak\emptyset$. 
%
%
The problems $\cred{\sem}$ and $\skep{\sem}$ question for 
$a{\,\in\,}A$, whether~$a$ is in some $E\in \sem(F)$ (``\emph{credulously} accepted'') or every~$E\in\sem(F)$  (``\emph{skeptically} accepted''), respectively.
We let $\Cred\sigma$ (resp., $\Skep\sigma$) denote the set of all credulously (skeptically) accepted arguments under semantics $\sigma$.
The complexity of AF reasoning is well-known,~e.g.,
\cite{flap/DvorakD17}.

\section{Diversity in Abstract Argumentation}
In this section, we introduce a quantitative notion of \emph{diversity
of extensions} based on the symmetric-difference.

\begin{definition}
Let $F=(A,R)$ be an AF, $\sigma$ be a semantics and $k\geq 0$ be an
integer.  For two sets $S\subseteq A$ and $T\subseteq A$ of
arguments, we define by $d(S,T)$ the \emph{distance} between $S$
and~$T$.  Formally, we let
\begin{align*}
	d(S,T) &\dfn |S\symdif T|= |(S\cup T)\setminus (S\cap T)|
\end{align*}
We say that two extensions $S,T \in \sem(F)$ are \emph{$k$-diverse
	if $d(S,T)= k$}.
Moreover, for an extension~$S\in \sem(F)$ and a set~$X$, we say that
$S$ \emph{covers} $X$ if $X \subseteq S$.
\end{definition}

We continue our example from above.
\begin{example}
Consider the AF from Example~\ref{ex:running}, its extensions, and
their symmetric difference.
We can see that one extension supports treatment $X$ while another
supports $Y$ under stable semantics.  These extensions are not only
\emph{different}, but are very \emph{diverse} due to their large
symmetric difference.

The arguments ``StartX'' and ``StartY'' are both credulously accepted under
the stable semantics. 
However, asking whether
both arguments 
are $k$-diverse for some $k\in\mathbb N$
reveals how incompatible these acceptable choices actually are, or
how radically different two justified medical decisions in this setting can be.
In this context, smaller values of $k$ highlight that the two
treatments differ only marginally,~i.e., same medical reasoning, but
different preference.  
In contrast, larger values of $k$ depict two
entirely different diagnostic strategies.  Thus, in decision-making,
higher diversity indicates deeper disagreement between choices.
\end{example}

This consideration leads us to the following problem.

\pbDef{$\kdiversearg_\sem(F,K,a,b)$}{an AF $F$, $k\in\mathbb N$,
and arguments $a,b$.}{does $F$ admit two $k$-diverse
$\sem$-extensions $S$ and $T$ such that $a\in S$, $b\in T$?}

Note that we can extend $\kdiversearg_\sem$ from individual arguments to the sets of arguments, which further allows us to compute the diversity when ``absence'' (or even complements) of arguments are considered. 
%
Since obtained complexity results are identical, we did not explicitly mention these results.

We can also ask the diversity-related question at the level of an AF.
That is, does an AF have two $k$-diverse $\sem$-extensions for a given
semantics $\sem$.  Here, diversity measures how deep the disagreement
encoded in the AF is, beyond mere multiplicity of extensions. 
Our notion of diversity applies to the extension space (all acceptable sets of arguments) under some semantics. Therefore, disagreement is interpreted in terms of extensions (or accepted viewpoints).
High diversity implies that the given AF supports radically different and internally coherent positions, thus indicating higher level of disagreement and low robustness of conclusions.  On the other hand, low diversity implies that extensions differ only marginally with most reasons being shared.
Consequently, they indicate near-consensus and robust conclusions with shallower disagreements.
%
%
%
%


\pbDef{$\kdiverseext_\sem(F,K)$}{an AF $F$ and $k\in\mathbb N$}{does
$F$ admit two \textbf{non-empty} $k$-diverse $\sem$-extensions?}

%
%
%
The $\kdiverseext_\sem(F,K)$ problems 
immediately give rise to whether there are $\sem$-extension that are
at least/most $k$-diverse, which we abbreviate by $\abovekdiverseext$
and $\belowkdiverseext$, respectively. Similarly,
$\kdiversearg_\sem(F,K,a,b)$ gives rise to $\abovekdiversearg$ and
$\belowkdiversearg$, respectively.


\future{
\subsection{$k$-Diverse Covers}
Here, we extend our analysis from extensions containing arguments to those that \emph{cover} sets of literals ($\kdiversecover_\sem$).
The difference here lies in that one can model the presence or absence of certain arguments from a witnessing extension.
It is easy to see that $\kdiversecover_\sem$ generalizes $\kdiversearg_\sem$ as for two arguments $a,b\in A$ in an AF $F=(A,R)$: $\kdiversearg_\sem(F,k,a,b)$ is true iff $\kdiversecover_\sem(F,k,\{a\},\{b\})$ is true.
Nevertheless, $\kdiversecover_\sem(F,k,A,B)$ is more general as one can additionally ask for extensions that do not overlap over certain arguments, e.g., with $\kdiversecover_\sem(F,k,\{a,\bar b\},\{b,\bar a\})$.
It turns out that this additional generality does not increase the complexity.

\begin{theorem}
	The problem $\kdiversecover_\sem$ is 
	(i) $\NP$-complete for $\sem\in \{\cnf,\nai,\adm,\comp,\stab\}$, and 
	(ii) $\SigmaP$-complete for $\sem\in \{\semi,\stag, \pref\}$.
	\yasir{Moreover, problems ``below'' and ``above'' $k$ diverse covers are bla-complete.}
\end{theorem}
\begin{proof}
	The hardness follows from the corresponding result for $\kdiversearg_\sem$.
	For membership, one can guess witnessing sets and verify their correctness in polynomial time.
	Let $F=(A,R)$ be an AF, $T,U$ be two sets of literals, and $k\in\mathbb N$.
	We guess $S_1, S_2\subseteq A$ such that (i) $S_1$ \emph{properly covers} $T$, (ii) $S_2$ properly covers $U$, (iii) $S_1,S_2$ are $\sem$-extensions of $F$, and (iv) $d(S_1,S_2)\geq k$.
\end{proof}

\clearpage
}

Finally, we define the problem to obtain the maximum~$k$.
\pbDefT{$\maxkdiverseext_\sem(F,A,B)$}{an AF~$F$ and sets~$A$,$B$ of
arguments}{Output the maximum~$k$ for which $A$ and $B$ are covered
by $k$-diverse extensions.}
Similarly, we can define the corresponding minimization
task $\minkdiverseext_\sem(F,A,B)$.


\future{
\subsection*{Relationship to Facets}

First,
we provide insights on how the notions of diversity and facets relate.
%
%
Let $F=(A,R)$ be an AF and $\sem$ be a semantics.  A
\emph{$\sem$-facet} is an
argument~$a \in \Cred\sigma\setminus\Skep\sigma$, i.e.,
contained in some but not in all $\sem$-extensions of~$F$.
We let $\Facet{\sigma}(F)$ consist of all $\sem$-facets of~$F$.
\begin{observation}
	If $a \notin \Facet{\sigma}(F)$ and $b\in A$, then there are no
	$k$-diverse extensions for the arguments $a$ and $b$ for any $k\in\mathbb N$.
	%
	\alert{As a result, only facets contribute to the diversity of extensions.}
\end{observation}


Suppose $a,b\in A$ be two facets in $\Facet{\sem}(F)$ and there is only one $\sem$-extension $S$ with $a,b\in S$.
Then $a$ and $b$ are not $k$-diverse for any $k\geq 1$. Hence, two facets can still be non $k$-diverse for any $k\geq 1$.
\yasir{YM: Tried to reformulate. But not super-important/interesting, can be removed}
\todo{JF: I'm not sure that I understand the above part correctly}

If there are at least two distinct $\sem$-extensions $S_1,S_2$ with
$a,b\in S_1\cap S_2$, then taking the pair $S_1,S_2$ has the consequence that the arguments $a$ and $b$ are still $k$-diverse for some $k\in\mathbb N$ since $S_1\neq S_2$.
\todo{JF: we need some more words here to express what we want to say
	here.}

\paragraph{Coverage via Facets.}
Suppose $A$ and $B$ are two sets of facets.  The problem
$\kdiversecover(F,k,A,B)$ allows flexibility in terms of what
arguments should be (dis)allowed in the witnessing extensions.
Example~\ref{ex:cover} illustrates this and
\yasir{this should only be addressed above where we define this problem (if needed)}
%
shows that two sets cannot be $\sem$-covered in~$F$ via $k$-diverse
extensions.

\begin{example}\label{ex:cover}
	Consider arguments~$a$ and $b$ and take $A=\{a, \bar b\}$ and
	$B= \{b,\bar a\}$ where $\bar x$ denotes the absence of $x$.  Then,
	the problem $\kdiversecover_\sem(F,k, A,B)$ asks whether there are
	two $\sem$-extensions $S_1$ and $S_2$ with $a\in S_1$,
	$b\not\in S_1$, $b\in S_2,a\not\in S_2$, and $d(S_1,S_2)= k$.
	%
\end{example}
}

\paragraph{Levels of Diversity.}
It could be that some arguments are $k$-diverse and $k'$-diverse for
some $k'< k\in\mathbb N$ without being $\ell$-diverse for any
$k'\leq \ell\leq k$. We illustrate how an AF might have different
\emph{levels} of diversity among its extensions. 

\begin{figure}[t]
\centering
\begin{subfigure}[t]{0.9\textwidth}
	\centering
	\includegraphics{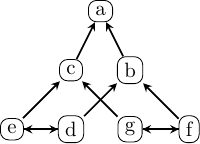}
	\caption{%
		Stable $4$ and $6$-diversity 
		and
		admissible $1$--$3$ and $5$-diversity.
	}
	\label{fig:levels-stab}
\end{subfigure}\\[0.5em]

\begin{subfigure}[t]{0.9\textwidth}
	\centering
	\includegraphics{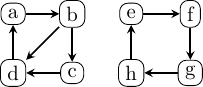}
	\caption{%
		Neither $3$-diverse nor $5$-diverse adm-extensions.
	}
	\label{fig:levels-adm}
\end{subfigure}
\caption{Example AFs showing different levels of diversity.}
\end{figure}


\begin{example}\label{ex:levels-stab}\label{ex:levels-adm}
Consider the AF $F_1=(A,R)$ as depicted in
Figure~\ref{fig:levels-stab}.
$F_1$ admits the following stable sets
$E_1=\{a,e,f\}, E_2=\{b,e,g\}, E_3=\{a,d,g\}$, and $E_4=\{c,d,f\}$.
The pair-wise symmetric difference of these sets yield diversity
values in $\{4,6\}$.  In particular $F_1$ does not admit $k$-diverse
stable extensions for any $k\in \{1,2,3,5\}$.
When considering admissible semantics, $F_1$ additionally admits the
following singleton $\{e\}$, $\{d\}$, $\{g\}$, $\{f\}$, and
two-element extensions $\{e,g\}$, $\{e,f\}$, $\{d,g\}$, and
$\{d,f\}$.  Here, it can be seen that under admissible semantics,
$F_1$ also admits diversity values of $1$ (due to extensions
$\{e\}$, $\{e,f\}$), $2$ (due to $\{e\},\{d\}$), and $3$ (due to
$\{e\}$, $\{d,f\}$).  Finally, by considering sets $\{a,e,f\}$ and
$\{d,g\}$, we also obtain two $5$-diverse $\adm$-extension.

Next, consider the AF $F_2=(A,R)$ as depicted in
Figure~\ref{fig:levels-adm}.  The AF~$F_2$ admits the 
admissible sets
$E_1=\{a,c\}$, $E_2 = \{e, g\}$, $E_3 = \{f,h\}$, $E_4 =\{a,c,f,h\}$, and
$E_5=\{a,c,e,g\}$.  The pair-wise symmetric difference of these sets
yield diversity values in $\{2,4,6\}$. Notable, there are neither
$3$-diverse nor $5$-diverse $\adm$-extensions.
\end{example}

\section{Complexity of Reasoning about Diversity}

We first consider the complexity of deciding whether an AF admits two $k$-diverse extensions ($\kdiverseext_\sem$) for a semantics $\sem$, or whether additionally contains two arguments in the question ($\kdiversearg_\sem$). 
Moreover, we also consider the complexity of deciding whether an AF admits some diversity level above, or below 
$k$, i.e., $\abovekdiversearg_\sem$ and $ \belowkdiversearg_\sem$, respectively.
Finally, we consider analogous problems ($\abovekdiverseext_\sem$ and $ \belowkdiverseext_\sem$) for extensions.

We find it convenient to present our hardness reductions on top of the \emph{standard translation} of SAT/QBF instances to AFs.
The following reduction is relevant for some hardness results.
\begin{definition}[\cite{flap/DvorakD17}]\label{def:translation}
Let $\varphi\dfn\bigwedge_{i=1}^m C_i$, be a CNF-formula where each $C_i$ is a clause.
Consider the AF $F_\varphi=(A,R)$ constructed as follows:
\begin{align*}
	A\dfn &\{\varphi, C_1,\dots,C_m\}\cup\{\,x,\bar x\mid x\in\var(\varphi)\} \\
	R \dfn &\{\,(C_i,\varphi)\mid i\leq m\,\}\cup \{\,(x,\bar x),(\bar x,x)\mid x\in\var(\varphi)\,\}\\
	&\cup\{\,(x,C_i)\mid x\in C_i\,\}\cup\{\,(\bar x,C_i)\mid \bar x\in C_i\,\}.
\end{align*}
We call $F_\varphi$ the argumentation framework of  $\varphi$ {generated} via the \emph{standard translation} for admissible semantics.
\end{definition}
Then, $\varphi$ is satisfiable iff the argument $\varphi$ is credulously accepted in $F_\varphi$ under semantics $\sigma\in\{\adm,\comp,\stab\}$.


\subsubsection{Above, Below, and Exact $k$ Diversity}%
The problem $\kdiversearg_\sem(F,k,a,b)$ asks: given two arguments $a$ and $b$ in an AF $F=(A,R)$ and a number $k\in\mathbb N$, are there two $\sem$-extensions $S_1,S_2$ with $a\in S_1, b\in S_2$, and $d(S_1,S_2)= k$?
We prove that this problem (i) shows higher complexity than credulous reasoning for conflict-free, naive, and preferred semantics, whereas (ii) is of same complexity as credulous reasoning under all other considered semantics.

We also consider the problem $\kdiverseext_\sem(F,k,a,b)$ asking: given an AF $F=(A,R)$ and a number $k\in\mathbb N$, are there two $\sem$-extensions $S_1,S_2$ with  $d(S_1,S_2)= k$?

\subsection{Conflict-free and Naive Semantics}\label{sec:conf}
We first consider the \emph{lightest} semantics.
Surprisingly, here we get hardness already for the conflict-free semantics.
This case is interesting as almost all other reasoning problems (extension existence, credulous and skeptical acceptance, etc.) are decidable in polynomial time under conflict-free semantics.
Nevertheless, the hardness is more involved in some cases (e.g., for below $k$ diversity) and in fact both semantics incur different complexity in those cases (under reasonable complexity assumptions).


\begin{restatable}[$\star$]{theorem}{confatleastdivarg}\label{thm:conf-atleast-div-arg}
The problems $\kdiversearg_\sem$ and $\abovekdiversearg_\sem$ are $\NP$-complete for $\sem\in\{\cnf, \nai\}$. 
\end{restatable}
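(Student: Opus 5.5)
Membership in $\NP$ is the easy direction, and I would handle it uniformly for all four problems. Given $(F,k,a,b)$, we guess two sets $S,T\subseteq A$ and check in polynomial time that
\begin{itemize}
\item $a\in S$ and $b\in T$;
\item $S$ and $T$ are conflict-free;
\item for $\nai$, each of $S,T$ is maximal, which we test by checking that no single argument $c\notin S$ can be added to $S$ without creating a conflict (and likewise for $T$);
\item $d(S,T)=k$, respectively $d(S,T)\geq k$.
\end{itemize}
All these tests are polynomial because the verification problem for $\cnf$ and $\nai$ is in $\Ptime$.

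For hardness I would reduce from \textsc{Independent Set}: given a graph $G=(V,E)$ and $k$, decide whether $G$ has an independent set of size at least $k$. Since independent sets are closed under subsets, this is equivalent to asking for one of size exactly $k$, and that equivalence serves both the exact and the ``at least'' variants. In every construction below, each edge $\{u,v\}\in E$ becomes mutual attacks $(u,v),(v,u)$, so that conflict-free subsets of $V$ are exactly the independent sets of $G$.

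For $\cnf$, add fresh arguments $a,b$ with mutual attacks between $b$ and $a$ and between $b$ and every $v\in V$. Any conflict-free $T\ni b$ is then exactly $\{b\}$, and any conflict-free $S\ni a$ is $\{a\}\cup I$ for an independent set $I$ of $G$. Hence $d(S,T)=|I|+2$, and both $\kdiversearg_{\cnf}$ and $\abovekdiversearg_{\cnf}$ with target $k+2$ hold iff $G$ has an independent set of size $k$ (at least $k$, respectively).

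For $\nai$, the obstacle is maximality: $S$ would be forced to be a \emph{maximal} independent set, which breaks the exact-$k$ direction. I would neutralise this with private twins. For each $v\in V$, add $v'$ with mutual attacks $v\leftrightarrow v'$ only, and keep $a\leftrightarrow b$ and $b\leftrightarrow v$ for all $v\in V$; the twins do not conflict with $b$ or $a$. The intended properties are as follows.
\begin{itemize}
\item Any naive $T\ni b$ is forced to be $\{b\}\cup V'$: all of $V$ and $a$ conflict with $b$, while $V'$ is conflict-free together with $b$.
\item The naive sets containing $a$ are exactly $S=\{a\}\cup I\cup\{v' \mid v\notin I\}$ for an \emph{arbitrary} independent set $I$. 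Each $v\notin I$ is blocked by its twin, so $I$ need not be maximal.
\item Consequently $S\symdif T=\{a,b\}\cup I\cup\{v'\mid v\in I\}$, so $d(S,T)=2+2|I|$.
\end{itemize}
Thus with target $2k+2$ both the exact and the ``at least'' variants hold iff $G$ has an independent set of size $k$ (at least $k$).

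The main step to verify carefully is this naive characterisation: that every independent $I$ indeed yields a maximal conflict-free $S$, and that $T$ is unique. I expect this to be the main obstacle; the rest is routine.
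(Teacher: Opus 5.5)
Your proof is correct, but it takes a different route from the paper. The paper reduces from 3SAT and uses one AF for both semantics. Each clause becomes a triangle of mutually attacking literal-occurrence arguments, complementary occurrences attack each other, and the same $a$/$b$ gadget you use is added ($b$ in mutual conflict with everything), with target $m+2$. For conflict-free semantics this is exactly your construction composed with the textbook 3SAT-to-Independent-Set reduction, so there your version is simply more modular.

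The real difference is the naive case. The paper gets maximality for free: a witness for $a$ of size $m+1$ must take one argument from every triangle, and such a set automatically blocks every other literal argument. This is why the paper says one cannot reduce directly from Independent Set. Your private twins answer exactly that objection, because they decouple maximality from independence. Every independent set $I$, maximal or not, is then realised by the naive extension $\{a\}\cup I\cup\{v' \mid v\notin I\}$ against the unique $\{b\}\cup V'$. The price is a separate AF and the doubled target $2k+2$; your verification of both characterisations is right.

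Two small remarks. First, the twins are not needed for the at-least variant. Without them the naive witnesses for $a$ are $\{a\}\cup I$ with $I$ maximal independent, and extending an independent set to a maximal one only increases its size. Second, the paper's triangle structure buys something you give up: every conflict-free set is capped at one argument per clause. That cap pins the maximal achievable diversity, and the paper reuses it in later reductions (the extension version via shadow arguments, and the DP-hardness of maximal diversity). An arbitrary Independent Set instance gives no such upper bound.
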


\begin{proof}
For membership, one guesses two witness sets $S_a, S_b\subseteq A$ such that (i) $S_a$ and $S_b$ are $\sem$-extension, (ii) $a\in S_a$, $b\in S_b$, and  (iii) $d(S_a,S_b)= k$.
The verification requires polynomial time for each $\sem\in\{\cnf,\nai\}$. 
For naive, one checks conflict-freeness and additionally that adding any argument not in the set breaks the conflict-freeness.
For $\abovekdiversearg_\sem$, we substitute (iii) by $d(S_a,S_b)\geq k$.

For hardness, we reduce from the canonical $\NP$-complete problem $3\BSAT$.
Here, the reduction in Definition~\ref{def:translation} can not be directly used as it is based on the notion of admissibility, which both semantics $\sem\in\{\cnf,\nai\}$ lack.
Our reduction builds on the idea of the standard reduction for proving the hardness of independent sets existence (IS) problem implicitly.
However, one can not directly reduce from IS as we do not have control over the size of solution sets.

Let $\varphi = \bigwedge_{i\leq m} C_i$ where each $C_i= (\ell_{i1}\lor \ell_{i2}\lor \ell_{i3})$ is a clause with three literals over the set $X$ of variables.
In our AF, we only require  symmetric edges. Hence we find it convenient to write $\{u,v\}$ for two attacks $(u,v)$ and $(v,u)$.
We construct an AF $F=(A,R)$ where:
\begin{align*}
	A\dfn & \{\ell_{ij} \mid 1\leq i\leq m, 1\leq j\leq 3 \} \cup \{a,b\},\\
	R \dfn & \{\{\ell_{i1}, \ell_{i2}\}, \{\ell_{i2}, \ell_{i3}\}, \{\ell_{i3}, \ell_{i1}\} \mid 1\leq i\leq m\} \\
	& \cup \{\{\ell,\ell'\} \mid  \ell=x, \ell'=\bar x \text{ for some } x\in X\} \\
	& \cup \{\{a,b\}\} \cup \{\{b,x\} \mid x\in A\setminus\{b\}\},
\end{align*}
for fresh arguments $a,b\not\in X$.
Moreover, we let $k=m+2$.
Here, we observe that $\{b\}$ is a naive and hence also a conflict-free set in $F$.
Intuitively, we let each position of a literal in a clause as an argument thus resulting in three arguments for each clause of $\varphi$.
For attacks, we add a ``triangle'' of symmetric attacks between all three arguments for a clause, and further attacks between two literals of opposite parity.
For correctness, we prove the following claim.

\begin{restatable}[$\star$]{claim}{claimsatnaive}\label{claim:sat-naive}
	Arguments $a$ and $b$ are $(m+2)$-diverse in $F$ iff formula $\varphi$ is satisfiable.
\end{restatable}
We conclude by observing that the presented reduction also applies to the problem that asks above $k$ diverse extensions covering $a$ and $b$, that is, to $\abovekdiversearg_\sem$.
\end{proof}


Next, we consider the problem of (general) $k$-diverse extensions in an AF (hence, to determine whether an AF admits two at least $k$-diverse extensions).
The reduction in the proof above does not apply directly as $F$ may have two $\sem$ extensions only coming from the arguments in triangles.
This has the consequence that $\varphi$ might not be satisfiable but still $F$ has two $k$-diverse extensions.
Luckily, we can still utilize the ideas and expand our reduction to obtain the following.

\begin{theorem}\label{thm:conf-atleast-div-ext}
The problems $\kdiverseext_\sem$ and $\abovekdiverseext_\sem$ are $\NP$-complete for $\sem\in\{\cnf, \nai\}$. 
\end{theorem}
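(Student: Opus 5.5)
For membership we proceed as in the proof of Theorem~\ref{thm:conf-atleast-div-arg}: guess two non-empty sets $S_1,S_2\subseteq A$ and verify in polynomial time that both are $\sem$-extensions for $\sem\in\{\cnf,\nai\}$ and that $d(S_1,S_2)=k$ (resp.\ $d(S_1,S_2)\geq k$). Maximality for naive sets is checked argument by argument.

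For hardness we again reduce from $3\BSAT$ and reuse the clause triangles and the complementary-literal attacks from the proof of Theorem~\ref{thm:conf-atleast-div-arg}. The obstacle noted in the text is that two extensions built only from triangle arguments may already have distance up to $2m$, independently of satisfiability. To remove it, we replace the single argument $b$ by a large ``block'' $B=\{b_1,\dots,b_N\}$ with $N\dfn 2m+1$. There are no attacks inside $B$, and every $b_j$ symmetrically attacks every literal argument $\ell_{ij}$. The argument $a$ is dropped, and we set $k\dfn N+m$. This construction has polynomial size.

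The key step is a case analysis on two non-empty conflict-free sets $S_1,S_2$. Since every $b_j$ attacks every literal argument, each conflict-free set lies entirely inside $B$ or entirely inside the literal part.
\begin{itemize}
\item If both sets lie inside $B$, then $d(S_1,S_2)\leq N<k$.
\item If both sets lie in the literal part, each contains at most one argument per triangle, so $d(S_1,S_2)\leq 2m<N\leq k$.
\item Otherwise, say $S_1\subseteq B$ and $S_2$ lies in the literal part. The sets are disjoint, so $d(S_1,S_2)=|S_1|+|S_2|\leq N+m$.
\end{itemize}
Equality $d(S_1,S_2)=N+m$ holds iff $S_1=B$ and $S_2$ picks exactly one literal from every clause triangle without choosing complementary literals. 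Such an $S_2$ exists iff $\varphi$ is satisfiable: a satisfying assignment yields one true literal per clause, and conversely the chosen literals are consistent and hit every clause.

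Because $k$ is the maximum attainable distance, the same instance works for both $\kdiverseext_\sem$ and $\abovekdiverseext_\sem$.

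For naive semantics we must also check that the witnesses are maximal conflict-free sets. The set $B$ is naive. A consistent one-per-clause selection $S_2$ is naive as well: every $b_j$ attacks it, and every further literal is attacked through either its triangle or its complement. The case analysis above covers naive extensions too, since they are in particular conflict-free.

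The main point to get right is the choice of $N$. It must exceed the maximal distance $2m$ between two triangle-only sets, which is why we take $N=2m+1$, so that only the mixed case can reach $k$.
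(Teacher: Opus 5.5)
Your proposal is correct, and it takes a different route from the paper's proof.

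\textbf{The paper's construction.} It keeps the threshold $k=2m$ and doubles the clause gadget. Every argument $s$ gets a shadow $s_*$ that attacks $s$ and inherits all conflicts of $s$, towards both originals and shadows. As a result, each clause block still admits at most one argument in any conflict-free set. A satisfying one-per-clause selection $S$ then has a disjoint twin $S_*$ that is again conflict-free (resp.\ naive). This needs a separate shadow lemma, and the converse direction projects shadow arguments back onto literals to read off an assignment.

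\textbf{Your construction.} You keep a single copy of the triangle gadget and drop $a$. You replace $b$ by an independent block $B$ that conflicts with every literal argument. The only pair reaching $k=N+m$ is then $B$ against a full consistent selection.

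\textbf{Comparison.} Your argument is more elementary: it needs no shadow lemma, and naive-maximality of both witnesses is immediate. Since $k$ is the global maximum distance in your AF, the exact and at-least versions coincide at once. The same holds in the paper's AF, because no conflict-free set there exceeds $m$ arguments. Your padding idea is essentially the one the paper itself uses later for the $\mathrm{DP}$-hardness of maximal diversity, namely the block of mutually non-attacking auxiliary arguments in the proof of Theorem~\ref{thm:conf-max-k-div}. The price is that one of your witnesses is a trivial extension unrelated to $\varphi$, whereas in the paper both witnesses encode satisfying assignments.

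\textbf{A minor correction.} $N$ does not have to exceed $2m$. The literal-only case only needs to stay strictly below $k=N+m$, so any $N>m$ already suffices. Your choice $N=2m+1$ is safe, just larger than necessary.
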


\begin{proof}
For membership, guess two witness sets as before.

For hardness, we reuse the reduction from the proof of Theorem~\ref{thm:conf-atleast-div-arg}.
The intuition why the same reduction may fail is the following.
An AF $F$ might have two small conflict-free/naive sets such that they give jointly a diversity of $k$. 
However, we are looking for one satisfying assignment for all the $m$ clause in $\varphi$.
To overcome this issue, we duplicate each argument in $F$ and ask for a larger diversity value instead. 

Let $F=(A,R)$ be the AF constructed in the proof of Theorem~\ref{thm:conf-atleast-div-arg} for a formula $\varphi$ with $m$ clauses.
Here, we do not need arguments $a$ and $b$, hence our arguments set $A$ and attacks in $R$ does not include these auxiliary arguments.
As before, we only consider symmetric edges in our AF and write $\{u,v\}$ for attacks. 
We consider an AF $F'=(A',R')$ where
\begin{align*}
	A' \dfn & A\cup \{s_* \mid s\in A\}, \\
	R'\dfn & R \cup \{\{s,s_*\} \mid s\in A\} \\
	& \cup \{\{s,t_*\}, \{s_*,t_*\}, \{s_*,t\}\mid \{s,t\}\in R\}.
\end{align*}
Intuitively, each argument $s$ now has a \emph{shadow} vertex $s_*$ which participates in all the edges as $s$.
Additionally, $s_*$ also have edges to the shadow vertices of each neighbor $t$ of $s$ in $F$.
This encodes that whenever a set $S\subseteq A$ is conflict-free in $F'$, then its shadow set $S_*= \{s_* \mid s\in S\}$ is also conflict-free whereas neither $S\cup \{t_*\}$ nor $S_*\cup\{t\}$ is conflict-free for any $t\in S$ and its shadow $t_*\in S_*$, respectively.
Moreover, in this case, the set $S\setminus\{t\}\cup \{t_*\}$ and $S_*\setminus\{t_*\}\cup\{t\}$ are both still conflict-free in $F'$.
We first prove this claim in the following.
For brevity, we denote $\ell_*=a_*$ if $\ell=a$ and $\ell_*=a$ for $\ell=a_*$ (intuitively, the shadow of a shadow argument is the argument).
\begin{restatable}[$\star$]{claim}{claimshadow}\label{claim:shadow}
	(I) $S$ is conflict-free (naive) in $F'$ iff $S_*$ is conflict-free (naive) in $F'$ for any $S\subseteq A$.
	(II) For any conflict-free set $L\subseteq A'$ and $\ell\in L$: the sets $(L\setminus \{\ell\})\cup \{\ell_*\}$ is also conflict-free.
\end{restatable}
We next prove the correctness via the following claim.
\begin{restatable}[$\star$]{claim}{claimcorrcnfnaive}\label{claim:corrcnfnaive}
	$\varphi$ is satisfiable if and only if $F$ admits two $\sem$-extensions which are $2m$-diverse for $\sem\in\{\cnf, \nai\}$.
\end{restatable}
We conclude by observing that the reduction also applies to $\abovekdiverseext_\sem$ for both semantics $\sem\in\{cnf,\nai\}$.
\end{proof}

We next turn to the problem of asking whether two arguments are \emph{below} $k$ diverse.
Here, the hardness holds only for naive semantics as we establish next.
Observe that the reduction from before does not apply here as in Claim 3: a ``small'' naive set in $F$ may yield two above-$k$-diverse extensions for $a$ and $b$ while the corresponding literals still do not satisfy all the clauses (e.g., when a witness naive set for $a$ has size $<m$).
In fact, this also explains why the hardness does not apply to the conflict-free semantics.

\begin{theorem}\label{thm:nai-atmost-div-arg}
The problem $\belowkdiversearg_\nai$ is $\NP$-complete.
\end{theorem}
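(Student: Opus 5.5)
\emph{Membership} in $\NP$ is shown as in Theorem~\ref{thm:conf-atleast-div-arg}. We guess $S_a,S_b\subseteq A$ and check in polynomial time that both are naive (conflict-free, and no outside argument can be added without creating a conflict), that $a\in S_a$ and $b\in S_b$, and that $d(S_a,S_b)\leq k$.

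For \emph{hardness} I reduce from $3\BSAT$ and reuse the literal-occurrence gadget of Theorem~\ref{thm:conf-atleast-div-arg}. For each clause $C_i$ there is a triangle of symmetric attacks on $\ell_{i1},\ell_{i2},\ell_{i3}$, and every pair of complementary literal occurrences attacks each other symmetrically. The difficulty with ``below $k$'' is that small distances tend to reward \emph{small} naive sets, for instance ones that leave triangles empty because each of their literals is blocked by a complementary chosen literal. So the gadget must make an unhit triangle cost distance. I add, for each clause $i$, a fresh argument $d_i$ with symmetric attacks to $\ell_{i1},\ell_{i2},\ell_{i3}$. I also add fresh arguments $a,b$ with a symmetric attack $\{a,b\}$, plus symmetric attacks $\{b,d_i\}$ for all $i$. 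I set $k=2$. The key structural observation is that, in a naive set containing $a$ (and hence not $b$), the argument $d_i$ is present if and only if no literal of triangle $i$ is chosen. This holds by maximality, since the only possible attackers of $d_i$ are $b$ and the triangle $i$ literals.

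\textbf{($\Rightarrow$)} Let $\varphi$ be satisfiable and fix a satisfying assignment. In each clause pick one true literal occurrence, and call the resulting set $N$. Then $N$ is conflict-free: one argument per triangle, and no complementary pair since all picked literals are true. It is also maximal among the literal arguments, because every other literal occurrence lies in a triangle that already contains a chosen element. Now set $S=\{a\}\cup N$ and $T=\{b\}\cup N$. Every $d_i$ is attacked by $N$, and additionally by $b$ in $T$; $b$ is blocked in $S$ by $a$, and $a$ is blocked in $T$ by $b$. Hence both sets are naive and $d(S,T)=|\{a,b\}|=2$.

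\textbf{($\Leftarrow$)} Let $S\ni a$ and $T\ni b$ be naive with $d(S,T)\leq 2$. Since $a$ and $b$ attack each other, $a\in S\setminus T$ and $b\in T\setminus S$. These already account for distance $2$, so $S\setminus\{a\}=T\setminus\{b\}$. Since $b\in T$, the set $T$ contains no $d_i$, and therefore neither does $S$. By the observation above, the literal part of $S$ then hits every triangle. Because it contains no complementary pair, it induces a consistent partial assignment satisfying every clause, so $\varphi$ is satisfiable.

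The main obstacle is exactly this design step: with the plain construction of Theorem~\ref{thm:conf-atleast-div-arg}, maximality under complement attacks permits naive sets missing whole clauses. The per-clause witnesses $d_i$, blocked only on $b$'s side, convert every missed clause into an extra unit of distance. Beyond that, I expect only a careful check of maximality in both directions, in particular that no $d_i$ can be added to $S$ when all triangles are hit.
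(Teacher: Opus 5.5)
Your proof is correct, and it takes a genuinely different route from the paper's.

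\textbf{The paper's route.} The paper reduces from a domination problem. It takes the graph $G$ itself as a symmetric AF and adds $a,b$, with $b$ attacking $a$ and every vertex. Then $\{b\}$ is the unique naive set containing $b$, and $d(S,\{b\})=|S|+1$. So ``below $(k+2)$-diverse'' becomes ``$G$ has a maximal conflict-free set of size at most $k$'', i.e., the distance bound is used as a \emph{size} bound on the witness for $a$. Strictly speaking, this needs the \emph{independent} dominating set problem as its source. A dominating set that is not independent does not yield a naive set $S\cup\{a\}$, so the forward direction as stated for plain dominating sets fails, e.g., on a double star. Your reduction avoids this issue entirely.

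\textbf{Your route.} You fix $k=2$ and exploit that $a$ and $b$ alone already contribute distance $2$. This forces $S\setminus\{a\}=T\setminus\{b\}$, which turns the distance bound into an \emph{equality} constraint between the two witnesses rather than a size constraint. Your gadget $d_i$ can be blocked only by $b$ or by triangle $i$. It then converts maximality on $a$'s side, combined with the absence of every $d_i$ forced by $b$'s side, into coverage of every clause. Both directions check out, including maximality of $\{a\}\cup N$ and $\{b\}\cup N$.

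\textbf{Comparison.} The paper's construction is simpler and scales the threshold with the solution size. Yours yields a stronger statement: the problem is $\NP$-hard already for the constant $k=2$. Moreover, every pair of witnesses has distance at least $2$, so the same instances also show $\NP$-hardness of $\kdiversearg_\nai$ for $k=2$.
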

\smallskip\noindent\emph{Proof. } 
For membership, guess two witness sets as before.

For hardness, we reduce from the dominating set problem.
Given a graph $G=(V,E)$ and an integer $k\in\mathbb N$: the question asks whether there are $k$ vertices $S\subseteq V$ such that for every vertex $v\in V$, either $v\in S$ or there exists $u\in S$ with $\{u,v\}\in E$.
We consider an AF $F=(A,R)$ with symmetric attacks where, for fresh $a,b\not\in V$: \qquad $A\dfn V\cup \{a,b\},$
\begin{align*}
R\dfn \{\{x,y\} \mid \{x,y\}\in E\} \cup \{\{a,b\}, \{b,x\} \mid x\in V\}.
\end{align*}%
Then, we prove the following claim:%
\begin{restatable}[$\star$]{claim}{claimdom}\label{claim:dom}
$G$ admits a dominating set of size $k$ iff arguments $a$, $b$ are below $(k+2)$-diverse in $F$ (naive semantics). \qed
\end{restatable}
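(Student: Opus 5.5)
The plan is to first determine $\nai(F)$ exactly, then read off the possible distances between a naive set containing $a$ and one containing $b$, and finally compare those distances with dominating sets of $G$.

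\emph{Step 1 (the side of $b$).} Every attack is symmetric, and $b$ attacks every other argument. So any conflict-free set containing $b$ is $\{b\}$ itself. Since $\{b\}$ cannot be extended, it is the unique naive extension containing $b$, and we may fix $T=\{b\}$.

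\emph{Step 2 (the side of $a$).} The argument $a$ conflicts only with $b$. Hence a set $S'\ni a$ is conflict-free iff $S'=\{a\}\cup S$ with $S\subseteq V$ independent in $G$. Such a set is naive iff no vertex $v\in V\setminus S$ can be added, that is, iff $S$ is a maximal independent set of $G$. Equivalently, $S$ is an \emph{independent dominating} set. The arguments $b$ and $a$ cannot be added, since $b$ conflicts with $a$ and $a\in S'$ already.

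\emph{Step 3 (distances).} Because $b\notin S'$, we get $d(\{a\}\cup S,\{b\})=|S|+2$. Therefore $a,b$ are below $(k+2)$-diverse iff $G$ has an independent dominating set of size at most $k$. We then compare this with the statement's ``dominating set of size $k$'', treating each direction separately.

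\emph{Direction ($\Leftarrow$).} Let $S$ be independent dominating with $|S|\le k$. Then $S$ is dominating, and adding arbitrary further vertices keeps it dominating. This yields a dominating set of size exactly $k$, assuming $k\le|V|$; otherwise we reject the instance trivially.

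\emph{Direction ($\Rightarrow$), the main obstacle.} From a dominating set $D$ with $|D|=k$ we need a \emph{maximal independent} set of size at most $k$. For arbitrary graphs this fails, because the domination number $\gamma(G)$ can be strictly smaller than the independent domination number $i(G)$. So this step does not hold verbatim for general $G$, and the claim as worded is not true for all graphs.

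To make the claim correct exactly as worded while keeping $\NP$-hardness, I would restrict the source problem to a class of graphs where $\gamma(G)=i(G)$. The standard example is claw-free graphs, by the Allan--Laskar theorem, and Dominating Set remains $\NP$-complete there, e.g.\ on line graphs. For such $G$, a dominating set of size $k$ gives $i(G)=\gamma(G)\le k$. Hence there is a maximal independent set $S$ with $|S|\le k$, and Step 3 makes $a,b$ below $(k+2)$-diverse.

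I would therefore state explicitly that the reduction is from Dominating Set on claw-free graphs. The only delicate point is citing $\gamma=i$ for that class; every other step is a direct verification.
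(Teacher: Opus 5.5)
Your analysis is correct, and it is more careful than the paper's own proof. The paper uses the same construction and argues the converse direction as your Steps 1--3 do: $\{b\}$ is the only naive set containing $b$, and naivety of $\{a\}\cup S$ forces $S$ to dominate $G$. For the forward direction, however, the paper just asserts that a dominating set $S$ of size at most $k$ yields the naive extension $S\cup\{a\}$. That silently requires $S$ to be independent, which is exactly the step you flag.

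Your objection is valid, and a concrete instance would make your ``not true verbatim'' remark airtight. Take the double star with adjacent centres $u,v$ and two leaves on each. Then $\{u,v\}$ is a dominating set of size $2$. Every maximal independent set, however, has at least $3$ vertices. Hence every naive set containing $a$ is at distance at least $5>4$ from $\{b\}$, and the claim fails for $k=2$.

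Your repair is sound. On claw-free graphs $\gamma(G)=i(G)$ (Allan--Laskar). Dominating Set stays NP-complete on line graphs, where it is Edge Dominating Set. So the NP-hardness in Theorem~\ref{thm:nai-atmost-div-arg} survives with the corrected claim.

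A lighter repair needs only one citation: reduce directly from Independent Dominating Set (minimum maximal independent set), which is NP-complete on general graphs. Your Step 3 is then the entire correctness proof, with no comparison of domination numbers needed.

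Either way, the ``size exactly $k$'' wording needs your padding argument and the side condition $k\le|V|$. Alternatively, read the source problem as ``size at most $k$''.
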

%
\future{
\alert{The reduction does not work for general extensions and the previous idea does not work either, so
	$\belowkdiverseext_\nai$ is open.}
}

The reduction 
does not work for conflict-free semantics, but 
turns out to be trivial as any two non-self-attacking arguments are trivially below $k$-diverse for any $2\leq k\in \mathbb N$. 

\begin{remark}\label{rem:conf-atmost-div-arg}
Let $F=(A,R)$ be an AF and $a,b\in A$ be two arguments such that none of these arguments have a self-attack. Then, considering conflict-free semantics 
%
	$a$ and $b$ are at least ($\geq$) $2$-diverse iff either $(a,b)\in R$ or $(b,a)\in R$.
	Thus, any two arguments in an AF (without self-attacks) are trivially (below) $k$-diverse for any $k\neq 1$.
	The answer for $k=1$ depends on whether two arguments are conflicting.
	{In fact, no two conflicting arguments can be $1$-diverse in any AF}. 
\end{remark}

\subsection{Admissible, Complete, and Stable Semantics}\label{sec:adm}
We now consider semantics $\sem\in\{\adm,\comp,\stab\}$ and prove that the complexities are similar to credulous reasoning.
That is, reasoning on diverse arguments and extensions does not incur additional complexity compared to argument acceptance.%
\begin{restatable}[$\star$]{theorem}{thmdivarg}\label{thm:div-arg}
	Problems $\kdiversearg_\sem, \abovekdiversearg_\sem$, and $\belowkdiversearg_\sem$ are $\NP$-complete for $\sem\in \{\adm,\comp,\stab\}$.
\end{restatable}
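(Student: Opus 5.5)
Membership is the routine part. We guess two sets $S_a,S_b\subseteq A$ and check in polynomial time four things: $S_a,S_b\in\sem(F)$, $a\in S_a$, $b\in S_b$, and $d(S_a,S_b)=k$ (respectively $\geq k$ or $\leq k$). Verifying admissibility, completeness ($\adef_F(E)=E$) and stability are all polynomial-time checks, and computing the symmetric difference is trivial. Hence all three problems lie in $\NP$ for $\sem\in\{\adm,\comp,\stab\}$.

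For hardness I reduce from the $\NP$-complete problem $\cred\sem$. The idea is to make the distance constraint vacuous. Given an instance $(F,c)$ of credulous acceptance with $F=(A,R)$, ask whether $c$ and $c$ itself are $k$-diverse, i.e.\ set $a=b=c$. Taking $S_a=S_b=E$ for any $\sem$-extension $E\ni c$ gives $d=0$. So for $\belowkdiversearg_\sem$ with any $k\geq 0$, and for $\kdiversearg_\sem$ with $k=0$, the answer is yes iff $c$ is credulously accepted. If the reading of the problem forbids $a=b$ or $k=0$, I instead add a fresh unattacked, non-attacking argument $u$. Then $u$ lies in every complete and every stable extension, and can be freely added to any admissible one. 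Asking for $(c,u)$ with $k=0$ (or below $k$) then again reduces exactly to credulous acceptance of $c$, via $S=T=E\cup\{u\}$.

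The main obstacle is $\abovekdiversearg_\sem$ (and $\kdiversearg_\sem$ for $k\geq 1$), where a large distance must be forced. Here I plan to use the standard translation $F_\varphi$ of Definition~\ref{def:translation}. Add a fresh gadget: an argument $g$ with a symmetric attack $\{g,\bar g\}$, plus $n$ pairs $p_i,q_i$ where $g$ attacks each $q_i$, $\bar g$ attacks each $p_i$, and $p_i,q_i$ attack each other. The gadget is disjoint from $F_\varphi$. It has exactly two stable/complete/preferred ``sides'', $\{g,p_1,\dots,p_n\}$ and $\{\bar g,q_1,\dots,q_n\}$, each admissible on its own. Now ask whether $\varphi\wedge g$-side and $\varphi\wedge \bar g$-side coexist. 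Concretely, add an argument $a$ attacked by $\bar g$ and by all clause arguments, defended by $\varphi$ and $g$, and symmetrically $b$ for $\bar g$. The cleaner choice is simply to ask for $a=\varphi$ and $b=\varphi$ at distance $\geq 2n+2$. Any two extensions containing $\varphi$ differ on the gadget by at most $2n+2$ and on $F_\varphi$ by at most $2|\var(\varphi)|+m+1$. Taking $n$ larger than $|A_\varphi|$, distance $\geq 2n+2$ then essentially requires opposite gadget sides plus $\varphi$ in both extensions, which is possible iff $\varphi$ is satisfiable.

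The step needing care is checking this argument uniformly for $\adm$, $\comp$ and $\stab$, and in particular hitting the \emph{exact} value $k$. For exactness I would fix the $F_\varphi$ part of the two extensions to be identical, taking the same satisfying assignment on both sides, so that the distance is exactly $2n+2$. I would also verify that the admissible case cannot reach $2n+2$ without both full gadget sides, since a full side is needed to have $n+1$ elements there.
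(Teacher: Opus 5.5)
Your proof is correct, but it follows a different route from the paper.

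\textbf{The paper's reduction.} The paper uses a single instance for all three variants. It adds to the standard translation $F_\varphi$ one argument $b$ that attacks every argument of $F_\varphi$ and is attacked only by $\varphi$. Then $\{b\}$ is the only extension containing $b$. Every extension containing $\varphi$ consists of $\varphi$ plus a consistent set of literals, so $d(S,\{b\})=|S|+1\le n+2$, with equality exactly for full satisfying assignments. With $k=n+2$, the exact, at-least and at-most questions for the pair $(\varphi,b)$ therefore all coincide with satisfiability.

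\textbf{Your reduction.} You instead decouple acceptance from distance. Acceptance of $\varphi$ comes from credulous acceptance, and the distance is supplied by a disjoint gadget, using that extensions of a disjoint union are unions of extensions of the parts (Lemma~\ref{lem:union}). This is more modular: any disjoint gadget with two extensions at a prescribed distance lifts credulous-acceptance hardness to all three diversity variants, and you never reason about the sizes of extensions of $F_\varphi$. The paper's construction, in exchange, works with two distinct query arguments and a $k$ tied directly to the formula.

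\textbf{Simplifications.} Part of your analysis is unnecessary. If $\varphi$ is unsatisfiable, no extension of the union contains $\varphi$ at all. So you do not need $n>|A_\varphi|$, nor the argument that a large distance forces opposite full sides of the gadget. The gadget consisting only of $g\leftrightarrow\bar g$, with $k=2$, already works for $\adm$, $\comp$ and $\stab$.

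\textbf{The below-$k$ case.} The same gadget instance also settles $\belowkdiversearg_\sem$, and it does so with two distinct extensions (the two gadget sides with the same satisfying assignment). You therefore do not need the degenerate $S=T$, $k=0$ trick. That trick is formally admissible under the paper's definitions, but it depends on reading ``two extensions'' as allowing $S=T$. Your fallback for the case ``$k=0$ is forbidden'' also still uses $k=0$, so it does not cover the case it is meant for.

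\textbf{Distinct query arguments.} If you want distinct query arguments, ask for $(\varphi,g)$ instead of $(\varphi,\varphi)$. The same witnesses work: $\theta\cup\{\varphi,\bar g\}$ and $\theta\cup\{\varphi,g\}$.
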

Details are given in the appendix, as we mostly adapt standard reductions. We next establish that the problem with diverse extensions admits the same complexity.

\begin{restatable}[$\star$]{theorem}{thmdivext}\label{thm:div-ext}
	Problems $\kdiverseext_\sem, \abovekdiverseext_\sem$, and $\belowkdiverseext_\sem$ are $\NP$-complete for $\sem\in \{\adm,\comp,\stab\}$.
\end{restatable}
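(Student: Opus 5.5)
Membership is immediate and follows the pattern used for Theorem~\ref{thm:conf-atleast-div-arg}. We guess two sets $S_1,S_2\subseteq A$ and check in polynomial time that both are non-empty $\sem$-extensions and that $d(S_1,S_2)=k$ (respectively $\geq k$ or $\leq k$). This check is possible because verifying an admissible, complete, or stable extension is polynomial: test conflict-freeness, defence, $\adef_F(E)=E$, or that every outside argument is attacked.

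For hardness we reduce from SAT using the standard translation $F_\varphi$ of Definition~\ref{def:translation}, modified so that every non-empty extension encodes a satisfying assignment. We add one fresh argument $t$ that attacks every argument $x,\bar x$ and $\varphi$, and is attacked by $\varphi$. We also give $t$ a self-attack, so $t$ is never in an extension.

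Take an admissible $E\neq\emptyset$. Any $x$ or $\bar x$ in $E$ is attacked by $t$, so $E$ must contain $\varphi$ to counter-attack $t$. Then $\varphi$ must be defended against all $C_i$, which forces $E$ to contain literals attacking each $C_i$, i.e.\ a satisfying partial assignment. Conversely, $E$ can contain no $C_i$, since $C_i$ is attacked by literals and would need defence, but nothing attacks those literals other than their complements; we should check this carefully. Conflict-freeness between $x$ and $\bar x$ then gives consistency. Hence $F'$ has a non-empty admissible extension iff $\varphi$ is satisfiable, and the same holds for complete and stable extensions, completing a full assignment for the stable case.

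Non-emptiness alone does not give diversity: with $k$ fixed we still need two extensions at a prescribed distance. To control this we make the extension space rich in a known way. We add two fresh mutually attacking arguments $p,q$ that are each attacked by $t$. Both then need $\varphi$ for defence, so they are only acceptable when $\varphi$ is satisfiable. If $\varphi$ is satisfiable, pick a satisfying assignment $E$ and take $S_1=E\cup\{\varphi,p\}$ and $S_2=E\cup\{\varphi,q\}$. These are admissible, and also stable and complete provided we ensure totality for the stable case (e.g.\ choose $E$ a full assignment). They satisfy $d(S_1,S_2)=2$, so we set $k=2$. For "above" this yields $\geq 2$, and for "below" we use $\leq 2$ together with the requirement of two distinct extensions.

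If $\varphi$ is unsatisfiable, there is no non-empty extension at all, so every variant is a no-instance. A subtle point is that $\kdiverseext$ may permit $S_1=S_2$ for "below"; we avoid this by asking for $k=2$ exactly, or by using $\leq$ while noting that with no non-empty extensions the answer is no anyway.

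The main obstacle is the extension-by-extension verification for complete semantics. We must check that $t$, the $C_i$, $p$, and $q$ behave as claimed under $\adef$, and in particular that adding $\varphi$ does not force both $p$ and $q$ into a complete extension. It does not, since $p$ and $q$ attack each other, so neither is defended unless it is already present. We must also check that the grounded extension stays empty, which it does because $t$ is self-attacking and every other argument is attacked. For the stable case we must confirm that $t$, the unchosen literals, the $C_i$, and the other of $p,q$ are all attacked by $S_1$, which holds by construction.
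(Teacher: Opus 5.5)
Your proof is correct, but it takes a different route from the paper.

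\textbf{The paper's route.} It reuses the instance from Theorem~\ref{thm:div-arg}. A fresh argument $b$ attacks all of $F_\varphi$ and is attacked only by $\varphi$, so $\{b\}$ is always an extension and serves as a fixed anchor. With $k=n+2$, a second (necessarily distinct) non-empty extension exists iff $\varphi$ is satisfiable, because a full satisfying assignment together with $\varphi$ is disjoint from $\{b\}$.

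\textbf{Your route.} You eliminate all non-empty extensions in the unsatisfiable case, since the self-attacking $t$ forces $\varphi$ into every non-empty admissible set. You then use the symmetric pair $p,q$ to produce two extensions at distance exactly $2$ in the satisfiable case.

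\textbf{What your version buys.} First, it shows hardness already for the fixed value $k=2$. Second, it does not depend on whether the two extensions must be distinct. The paper's argument for the ``below'' variant does need distinctness, because $\{b\}$ paired with itself has distance $0$. Third, it needs no size bookkeeping. The paper's remark that every extension containing $\varphi$ has size $n+1$ is actually false for admissible and complete semantics, where partial satisfying assignments also yield extensions. This is harmless there only because one can pick a full assignment.

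\textbf{What the paper's version buys.} A single instance serves both the argument version and the extension version, so there is no new gadget to verify.

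\textbf{The step you flagged.} Your stated reason for why no $C_i$ can lie in an admissible $E$ is incomplete, since the complementary literals do attack the literals of $C_i$. The correct argument is a chain:
\begin{itemize}
\item defending $C_i$ requires some literal in $E$;
\item that literal is attacked by $t$, so $\varphi\in E$ is needed to counter-attack $t$;
\item but $(C_i,\varphi)\in R$, which contradicts conflict-freeness.
\end{itemize}
This argument, like the claim that $\{C_i\}$ alone is not admissible, uses that every clause is non-empty. You may assume this without loss of generality.
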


$\NP$-hardness for preferred semantics follows from admissible semantics. However, we do not have membership, as preferred extensions cannot be verified in polynomial time.

\subsection{Semi-stable, Stage, and Preferred Semantics}\label{sec:semi}
The following lemma is essential for proving claims regarding semi-stable and stage semantics.
Let $F=(A,R)$ and $a,b\in A$ be two arguments.
We say that $a$ and $b$ are \emph{$R$-equivalent} in $F$ if they have identical attacks in $F$, i.e., if $(a,c)\in R \iff (b,c)\in R$ and $(c,a)\in R \iff (c,b)\in R$ for any $c\in A$.

\begin{restatable}[$\star$]{lemma}{lemrange}\label{lem:range}
	Let $F=(A,R)$ be an AF and $a,b\in A$ be two $R$-equivalent arguments in $F$; for any $S\subseteq A$ with $a\in S$:
	\begin{enumerate}
		\item $S$ is conflict-free (admissible) in $F$ iff $(S\setminus \{a\}) \cup \{b\}$ is conflict-free (admissible) in $F$.
		\item if $(a,b)$ and $(b,a)\in R$, then  $S^+_R = S'^+_R$ for $S'=(S\setminus \{a\}) \cup \{b\}$.
	\end{enumerate}
\end{restatable}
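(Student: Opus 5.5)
The plan is a direct argument from the definitions. The whole proof rests on one observation: replacing $a$ by $b$ changes neither the set of arguments attacked by the set nor the set of arguments attacking it. Fix $S\subseteq A$ with $a\in S$ and let $S'=(S\setminus\{a\})\cup\{b\}$. For $T\subseteq A$ write $T^{\rightarrow}=\{c\mid (t,c)\in R,\ t\in T\}$ and $T^{\leftarrow}=\{c\mid (c,t)\in R,\ t\in T\}$. Since $a$ and $b$ have identical out- and in-neighbourhoods, I first show $S^{\rightarrow}=S'^{\rightarrow}$ and $S^{\leftarrow}=S'^{\leftarrow}$. The neighbourhoods contributed by $a$ in $S$ are exactly those contributed by $b$ in $S'$. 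This also covers the degenerate case $b\in S$, where $S'=S\setminus\{a\}$ and $b$ still supplies $a$'s neighbourhoods.

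For conflict-freeness in item 1, I note that $S$ is conflict-free iff $S\cap S^{\rightarrow}=\emptyset$. A conflict in $S'$ must involve $b$ or lie inside $S\setminus\{a\}$. The latter kind is a conflict of $S$. A conflict $(b,c)$ or $(c,b)$ with $c\in S\setminus\{a\}$ becomes $(a,c)$ or $(c,a)$ by $R$-equivalence. A self-attack $(b,b)$ gives $(a,b)$ and then $(a,a)$, again by $R$-equivalence. Each case yields a conflict in $S$, and the converse direction is the same argument with the roles of $a$ and $b$ swapped.

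For admissibility, I take $c\in S'$ and an attacker $d$ of $c$, and I must show $d\in S'^{\rightarrow}$. If $c\neq b$, then $c\in S$, so $d\in S^{\rightarrow}=S'^{\rightarrow}$ by admissibility of $S$. If $c=b$, then $d$ also attacks $a\in S$, so $S$ counter-attacks $d$ and the equality of attacked sets transfers this to $S'$. The reverse implication is symmetric.

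For item 2, I use $S^+_R=S\cup S^{\rightarrow}$. Since $S^{\rightarrow}=S'^{\rightarrow}$, the two ranges can differ only in $a$ and $b$. Because $(b,a)\in R$, we get $a\in S'^{\rightarrow}$, and because $(a,b)\in R$, we get $b\in S^{\rightarrow}$. Hence both $a$ and $b$ lie in both ranges, so $S^+_R=S'^+_R$.

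I expect no real obstacle. The only delicate points are the degenerate case $b\in S$ and the self-attack case, and both are handled by the neighbourhood-equality observation above.
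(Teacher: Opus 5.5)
Your proof is correct and takes essentially the same direct route as the paper. Both arguments unfold the definitions and use $R$-equivalence to transfer attacks between $a$ and $b$; your identities $S^{\rightarrow}=S'^{\rightarrow}$ and $S^{\leftarrow}=S'^{\leftarrow}$ just package the paper's element-wise case distinctions. Your version is also a little more complete than the paper's write-up, which checks defence only for arguments of $S\setminus\{a\}$ and proves only the inclusion $S^+_R\subseteq S'^+_R$, whereas you verify that $b$ itself is defended in $S'$ and get both inclusions of the range equality at once.
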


\begin{theorem}\label{thm:div-arg-semi}
	The problems $\kdiversearg_\sem$, $\abovekdiversearg_\sem$ and $\belowkdiversearg_\sem$ are  $\SigmaP$-complete for $\sem\in \{\semi,\stag\}$.
\end{theorem}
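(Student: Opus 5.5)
Membership is the easy half. We guess two sets $S_a,S_b\subseteq A$ and check three things. First, $a\in S_a$ and $b\in S_b$. Second, $d(S_a,S_b)=k$; for the above and below variants this becomes $\geq k$ or $\leq k$, which is polynomial. Third, each of $S_a,S_b$ is a $\sem$-extension. For $\sem\in\{\semi,\stag\}$ this check is in $\co\NP$. Admissibility and conflict-freeness are polynomial. Range-maximality is refuted by guessing an admissible (resp.\ conflict-free) $E'$ with $S^+_R\subsetneq E'^+_R$. Hence the whole procedure is an $\NP^{\NP}=\SigmaP$ algorithm.

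For hardness we start from the known $\SigmaP$-hardness of credulous acceptance under semi-stable and stage semantics (\cite{flap/DvorakD17}). That result gives an AF $F$ and an argument $c$ such that $c$ is credulously accepted iff the $\forall\exists$-QBF is false. We build $F'$ by adding a fresh argument $c'$ that is $R$-equivalent to $c$ and additionally satisfies $(c,c'),(c',c)\in R'$. Concretely, $c'$ copies all incoming and outgoing attacks of $c$, and we add the mutual attack. We must handle self-attacks carefully. The copying should be set up so that $c$ and $c'$ have identical attack behaviour, including towards each other, which makes them $R$-equivalent in the sense of the paper.

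We then ask whether $c$ and $c'$ are $k$-diverse for $k=2$. The key step uses Lemma~\ref{lem:range}. Item~1 says that swapping $c$ for $c'$ preserves conflict-freeness and admissibility. Item~2 says the swap preserves the range $S^+_{R'}$. Together these show that $S\mapsto (S\setminus\{c\})\cup\{c'\}$ is a bijection between the $\sem$-extensions of $F'$ containing $c$ and those containing $c'$: if another set had strictly larger range than the swapped set, swapping back would give a set with strictly larger range than $S$. Thus, if $c$ is in some extension $S$ of $F'$, then $S$ and its swap witness $2$-diversity. Conversely, any witness pair gives an extension containing $c$.

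It remains to show that $c$ is credulously accepted in $F'$ iff it is in $F$. For this we show that the $\sem$-extensions of $F'$ not containing $c'$ are exactly the $\sem$-extensions of $F$, with ranges shifted only by adding or omitting $c'$ uniformly. This holds because $c'$ is attacked exactly when $c$ is attacked or $c$ is selected. We expect this to be the main obstacle: range-maximality in $F'$ must be compared with range-maximality in $F$, and ranges gain $c'$ exactly when they gain $c$ or something attacking $c$. We would verify that $S^+_{R'}=S^+_R\cup\{c'\}$ whenever $c\in S^+_R$, and $S^+_{R'}=S^+_R$ otherwise, so that the inclusion order is preserved.

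Finally, extensions containing $c$ and $c'$ have equal size, so exact, above and below diversity all coincide at $k=2$. We must still rule out witnesses with diversity $0$ for the below variant. This is immediate: $c$ and $c'$ conflict, so they never share an extension, and any witness pair already has diversity at least~$2$. The same reduction therefore gives hardness for all three problems.
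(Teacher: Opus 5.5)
Your proposal is correct and follows essentially the paper's proof. Membership is the same $\NP^{\NP}$ guess-and-check. Hardness is the same reduction from credulous acceptance with $k=2$: the paper's $\bar\varphi_c$ is exactly your mutually attacking copy $c'$ of $c=\bar\varphi$ in the Dvo\v{r}\'ak--Woltran AF, and Lemma~\ref{lem:range} supplies the swapped extension. Your version is more careful than the paper's in showing that the $\sem$-extensions of $F'$ avoiding $c'$ coincide with those of $F$ via the range shift by $c'$; to finish that step, handle any competitor set containing $c'$ by swapping it back to $c$ with the same lemma.
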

\begin{proof}
	For membership, observe that one can guess two sets of arguments $S_a,S_b\subseteq A$ and verify that both are $\sem$-extensions, $k$-diverse, and contain $a$ and $b$, respectively. 
	The verification for $\sem$-extensions requires an $\NP$ oracle to decide range-maximality.
	%
	%
	%
	%
	Moreover, the same also applies to extensions with diversity $\leq k$ and $\geq k$.
	
	We next turn to prove hardness. 
	To this aim, we utilize an existing reduction by~\cite{DVORAK2010425} proving credulous acceptance under both considered semantics.
	Given a QBF $\Phi = \forall Y \exists Z.\varphi$ where $\varphi\dfn\bigwedge_{i=1}^m C_i$ is a CNF-formula with clauses $C_i$ over variables $X = Y \cup Z$. 
	We let $Y_*=\{y_*\mid y\in Y\}$ and $\bar Y_*=\{\bar y_*\mid y\in Y\}$ as auxiliary sets of arguments and consider the AF $F_\Phi=(A,R)$ constructed as follows:
	\begin{align*}
		A \dfn & \{\varphi,\bar \varphi, b\} \cup \{C_1 , \dots, C_m\} \cup X \cup \bar X \cup Y_* \cup \bar Y_*\\
		R \dfn &\{\,(C_i,\varphi)\mid i\leq m\,\} \cup \{\,(x,\bar x),(\bar x,x)\mid x\in X\,\}\\
		&\cup\{\,(x,C_i)\mid x\in C_i\,\}\cup\{\,(\bar x,C_i)\mid \bar x\in C_i\,\} \\
		& \cup \{(\varphi, \bar\varphi), (\bar\varphi, \varphi), (\varphi, b), (b, b)\,\}\\
		& \cup \{(y,y_*), (\bar y, \bar y_*) ,(y_*,y_*), (\bar y_*,\bar y_*)\mid y\in Y\}.
	\end{align*}
	It is known that $\Phi$ is true iff $\varphi$ is contained in each semi-stable (stage) extension of $F_\Phi$ iff $\bar\varphi$ is not contained in any semi-stable (stage) extension of $F_\Phi$.
	To complete the reduction, we construct an AF $F'=(A',R')$ by taking a fresh argument $\bar\varphi_c\not\in A$, such that $A'\dfn A\cup \{\bar\varphi_c\},$
	\begin{align*}
		R'\dfn R \cup \{(\varphi,\bar\varphi_c), (\bar\varphi_c,\varphi), (\bar\varphi,\bar\varphi_c),(\bar\varphi_c,\bar\varphi)\}.
	\end{align*}
	Intuitively, the argument $\bar\varphi_c$ serves as a copy of $\bar\varphi$. Whereby, it is accepted in some $\sem$-extension iff $\bar\varphi$ is accepted in some $\sem$-extension for $\sem\in\{\semi,\stag\}$ due to Lemma~\ref{lem:range}.
	We prove correctness of our reduction in the following claim.
	\begin{restatable}[$\star$]{claim}{claimsemisigmap}\label{claim:semi-sigmap}
		$\Phi$ is false iff argument $\bar\varphi$ is contained in some $\sem$-extension of $F_\Phi$ for $\sem\in\{\semi,\stag\}$ iff arguments $\bar\varphi$ and $\bar\varphi_c$ are $2$-diverse in $F'$ under $\sem$.
	\end{restatable}
	
	Note that the reduction in the proof above also applies to the case of ${\leq k}$- and ${\geq k}$-diverse arguments for $\sem\in\{\semi,\stag\}$.
	This holds as one can still take $k=2$.
\end{proof}

For deciding diverse extensions, the reduction in the proof of Theorem~\ref{thm:div-arg-semi} does not directly apply for $\sem\in\{\semi,\stag\}$.
This holds due to the following issue: when the CNF $\varphi$ in $\Phi$ is satisfiable, $F_\Phi$ already admits a $\sem$-extension of size $|X|+1$ where  $X= Y\cup Z$ and hence $|X|>|Y|$.
It might happen that $\Phi$ admits two satisfying assignments which only differ at one literal.
Thus, their corresponding $\sem$-extensions also differ at only one argument and hence are $2$-diverse.
Nevertheless, we can expand our reduction to consider sufficiently many copies of the argument $\bar\varphi$ in $F$ to avoid this case. The proof of the following theorem formalizes this intuition.
Interestingly, this trick with multiple copies does not work in the case of $\belowkdiverseext_\sem$.
This holds since smaller than $k$ diversity is already achievable via any satisfying assignments for $\varphi$.

\begin{restatable}[$\star$]{theorem}{thmdivextsemi}\label{thm:div-ext-semi}
	The problems $\kdiverseext_\sem$ and $\abovekdiverseext_\sem$  are $\SigmaP$-complete for semantics $\sem\in \{\semi,\stag\}$. 
\end{restatable}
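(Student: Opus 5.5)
Membership is as in Theorem~\ref{thm:div-arg-semi}. We guess two non-empty sets $S_1,S_2\subseteq A$ and check that $d(S_1,S_2)=k$ (resp.\ $\geq k$). We then verify with an $\NP$ oracle that neither set is range-dominated by an admissible (resp.\ conflict-free) set. This places both problems in $\SigmaP$.

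For hardness, I will modify the reduction behind Claim~\ref{claim:semi-sigmap}, as suggested in the paragraph preceding the theorem. Start from $F_\Phi$ and let $N\dfn 2|X|+2$ (any value exceeding $2(|X|+1)$ works). Add fresh arguments $\bar\varphi_1,\dots,\bar\varphi_N$. Each $\bar\varphi_j$ is an $R$-equivalent copy of $\bar\varphi$: it attacks and is attacked by $\varphi$, and it attacks and is attacked by $\bar\varphi$ and by every other copy. The copies together with $\bar\varphi$ then form a symmetric clique, all of whose members attack $\varphi$. By Lemma~\ref{lem:range}, replacing $\bar\varphi$ by any $\bar\varphi_j$ in an extension preserves admissibility/conflict-freeness and the range. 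Hence for every $\sem$-extension $S\ni\bar\varphi$ of the new AF $F'$ and every $j$, the set $(S\setminus\{\bar\varphi\})\cup\{\bar\varphi_j\}$ is again a $\sem$-extension. I set the target to $k\dfn 2$ (exact case) and keep $k=2$ for $\abovekdiverseext_\sem$ as well. The intended witnesses are the two extensions $S\ni\bar\varphi$ and $(S\setminus\{\bar\varphi\})\cup\{\bar\varphi_1\}$, which are $2$-diverse.

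The direction ``$\Phi$ false $\Rightarrow$ yes-instance'' is immediate from Claim~\ref{claim:semi-sigmap} and the copy argument. The converse is the main obstacle. The problem is that when $\Phi$ is true, two extensions containing $\varphi$ (coming from satisfying assignments that differ in one $Z$-literal) could already be $2$-diverse. The copies must therefore be used to \emph{separate} the diversity values, rather than the constant $k=2$. My first attempt is to make the copies range-relevant. Give each $\bar\varphi_j$ a self-attacking private argument $p_j$ that it alone attacks (mirroring the $y_*$ gadgets). Then any range-maximal extension containing some copy covers all the $p_j$ only via all copies, which is impossible since the copies conflict. So instead I would choose $k\dfn N$ and arrange the gadget so that an extension on the $\bar\varphi$ side differs from one on the $\varphi$ side in at least $N$ arguments. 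Concretely, $\varphi$ is duplicated into $N$ non-conflicting $R$-equivalent copies $\varphi_1,\dots,\varphi_N$ forming an independent set. Each $\varphi_j$ then appears in every extension containing $\varphi$, so such an extension contains all $N$ copies, while an extension on the $\bar\varphi$ side contains none of them.

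Under this design, extensions containing the $\varphi$-copies pairwise differ only on $X\cup\bar X$ and $Y_*$-related arguments, hence by at most $2|X|<N$. So $\Phi$ true (all extensions contain $\varphi$) implies that no pair is $N$-diverse or at least $N$-diverse. Conversely, if $\Phi$ is false, an extension $T\ni\bar\varphi$ exists. If $\varphi$ is also credulously accepted, combining $T$ with an extension containing all $\varphi_j$ gives diversity at least $N$. For the exact value, I pad with self-attacking dummies or choose $N$ relative to the resulting sizes so that equality can be forced. If $\varphi$ is not credulously accepted, I instead compare $T$ with its copy variant. The remaining technical check, which is the hardest step, is to verify that the duplicated $\varphi_j$ preserve the range-maximality behaviour of $F_\Phi$, again via Lemma~\ref{lem:range}, and to tune $k$ so that the exact and at-least versions both hold.
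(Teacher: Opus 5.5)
Your membership argument is fine, but the hardness reduction does not go through. You make $\bar\varphi,\bar\varphi_1,\dots,\bar\varphi_N$ a symmetric clique, so every conflict-free set contains at most one of them. The copies can therefore only produce the one-for-one swap $T\mapsto (T\setminus\{\bar\varphi\})\cup\{\bar\varphi_j\}$, which gives diversity $2$. That is exactly the weakness of the reduction for Theorem~\ref{thm:div-arg-semi} that the copies were meant to remove.

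Your fallback, with $k=N$ and $N$ independent copies of $\varphi$, does not repair this. If $\Phi$ is false and $\varphi$ is unsatisfiable, there is no extension on the $\varphi$-side at all. The only witness you offer in that case, $T$ against its copy variant, is $2$-diverse, far below $N$.

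For $\kdiverseext_\sem$ the exact value is also not reached. When $\varphi$ is satisfiable, your pair has $\bar\varphi,\varphi,\varphi_1,\dots,\varphi_N$ in its symmetric difference, so it is at least $(N+2)$-diverse, not $N$-diverse. Padding with self-attacking dummies cannot adjust this, since a self-attacking argument lies in no conflict-free set and hence in no extension. Finally, you leave unchecked whether the added $\varphi_j$ preserve range-maximality.

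The missing idea is to make the copies of $\bar\varphi$ pairwise \emph{non}-attacking, with each $\bar\varphi_i$ in mutual conflict only with $\varphi$ and $\bar\varphi$. Take any $\sem$-extension $S\ni\bar\varphi$, which exists iff $\Phi$ is false, and let $S_c=(S\setminus\{\bar\varphi\})\cup\{\bar\varphi_1,\dots,\bar\varphi_N\}$. This set is conflict-free and still admissible: the copies defend themselves against $\varphi$ and $\bar\varphi$, and $\bar\varphi$ defended nothing else in $S$. It also has exactly the same range as $S$. So $S_c$ is again a $\sem$-extension, and $d(S,S_c)=N+1$ whether or not $\varphi$ is satisfiable.

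If $\Phi$ is true, no extension contains $\bar\varphi$ or a copy. Every extension then has the form $\theta\cup\{\varphi\}$ for a total satisfying assignment $\theta$, and any two are at most $2|X|$-diverse. Taking $N=2|X|+1$ and target $N+1$ settles the exact and at-least versions at once. This is the paper's reduction; it needs neither duplication of $\varphi$ nor a case split on the satisfiability of $\varphi$.
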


We next turn towards preferred semantics which possesses the same complexity, modulo a different reduction.
\begin{restatable}[$\star$]{theorem}{divargpref}\label{thm:div-arg-pref}
	The problem $\kdiversearg_\pref$ and $\belowkdiversearg_\pref$ are both $\SigmaP$-complete.
\end{restatable}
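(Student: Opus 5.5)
For membership in $\SigmaP$, we guess two sets $S_a,S_b\subseteq A$ with $a\in S_a$ and $b\in S_b$. We check in polynomial time that both are admissible and that $d(S_a,S_b)=k$ (respectively $\le k$). Preferredness, i.e.\ $\subseteq$-maximality among admissible sets, is a $\co\NP$ property: the question whether some admissible $E'\supsetneq S$ exists is in $\NP$. So one $\NP$-oracle call per set suffices, and both problems lie in $\NP^{\NP}=\SigmaP$.

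For hardness, we reduce from the complement of $\forall\exists$-QBF validity, i.e.\ from checking that $\Phi=\forall Y\exists Z.\varphi$ is false. We follow the same scheme as in Theorem~\ref{thm:div-arg-semi}, but base it on a reduction for preferred semantics. The natural choice is the known $\PiP$-hardness reduction for skeptical preferred acceptance (Dvo\v{r}\'ak--Bench-Capon style) applied to the standard translation of Definition~\ref{def:translation}. There one adds a fresh argument $\bar\varphi$ with $\varphi,\bar\varphi$ attacking each other, and the $Y$-variables are free to be chosen while the $Z$-literals are attacked by $\bar\varphi$ or forced via $\varphi$. The known property is that $\Phi$ is true iff $\varphi$ is in every preferred extension iff $\bar\varphi$ is in no preferred extension. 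So $\Phi$ is false iff $\bar\varphi$ is credulously preferred-accepted. Since credulous preferred acceptance coincides with credulous admissible acceptance (and is only $\NP$), I would use the variant in which the target is skeptical rejection of a copy. Concretely, we ask whether some preferred extension contains $\bar\varphi$ in the specific framework where such extensions witness a falsifying $Y$-assignment. To lift this to diversity we add a fresh argument $\bar\varphi_c$ that is $R$-equivalent to $\bar\varphi$ and mutually attacks it. By Lemma~\ref{lem:range}, for every preferred $S$ containing $\bar\varphi$, the set $S'=(S\setminus\{\bar\varphi\})\cup\{\bar\varphi_c\}$ is admissible. It is also preferred: any admissible proper superset of $S'$ swaps back to an admissible proper superset of $S$ under Lemma~\ref{lem:range}. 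Hence $d(S,S')=2$. We take $k=2$, $a=\bar\varphi$, $b=\bar\varphi_c$; the same instance works for $\belowkdiversearg_\pref$ because no two distinct extensions are $1$-... wait, that is not needed. We only need ``below $2$'' to hold iff the argument is accepted, and $0$ is impossible since $a$ and $b$ conflict.

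For correctness: if $\Phi$ is false, the swap argument gives the $2$-diverse pair. Conversely, any witnessing pair in particular contains a preferred extension containing $\bar\varphi$ (or $\bar\varphi_c$, which swaps to one containing $\bar\varphi$), and that extension yields falsity of $\Phi$.

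The main obstacle is the base reduction. Credulous preferred acceptance is only $\NP$-complete, so a pure ``$a\in S$'' condition cannot carry $\SigmaP$-hardness alone; the hardness has to come from combining containment with maximality. I would therefore first try the following pair: $a=\bar\varphi$, and $b$ a fresh unattacked argument that is trivially in every preferred extension. Then the question becomes ``is there a preferred extension containing $\bar\varphi$ and another (possibly the same) at distance exactly/at most $k$''. I would tune the construction so that admissible sets containing $\bar\varphi$ extend to a preferred one avoiding $\varphi$ exactly when some $Y$-assignment has no $Z$-extension satisfying $\varphi$. This is the standard mechanism where $\varphi$, once acceptable, would enlarge the extension. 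Verifying this maximality equivalence carefully is the crux; the diversity layer via Lemma~\ref{lem:range} is then routine.
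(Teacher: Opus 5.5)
\textbf{There is a genuine gap in the hardness part.} Your membership argument is fine and matches the paper's. The problem is that every construction you propose reduces to credulous preferred acceptance of a single argument. That coincides with credulous admissible acceptance and is in $\NP$, so it cannot carry $\SigmaP$-hardness.

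Concretely, take $R$-equivalent, mutually attacking $\bar\varphi$ and $\bar\varphi_c$ with $k=2$. Your own swap argument shows that $(F',2,\bar\varphi,\bar\varphi_c)$ is a yes-instance iff some preferred (equivalently, admissible) set contains $\bar\varphi$. In the variant you describe, $\varphi$ and $\bar\varphi$ attack each other and $\bar\varphi$ has no other attackers. Then $\{\bar\varphi\}$ is already admissible, so every produced instance is a yes-instance regardless of $\Phi$.

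The base property you cite (``$\Phi$ true iff $\bar\varphi$ is in no preferred extension'') is not what the Dvo\v{r}\'ak--Dunne reduction provides. There $\bar\varphi$ is self-attacking and never accepted. The characterization is that $\Phi$ is true iff $\varphi$ is \emph{skeptically} preferred-accepted. Equivalently, $\Phi$ is false iff some preferred extension (namely some $\theta_Y$) omits $\varphi$.

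Your fallback with an unattacked $b$ has the same defect. Every admissible set extends to a preferred one, and a preferred set containing $\bar\varphi$ automatically avoids $\varphi$. So the ``tuning'' you describe is again an $\NP$ condition, and the parameter $k$ is never used.

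\textbf{The missing idea} is to let the diversity value do the work of maximality by measuring the \emph{size} of the witness. The paper starts from the Dvo\v{r}\'ak--Dunne AF $F_\Phi$ and adds two arguments:
\begin{itemize}
\item $b$, which attacks every argument of $F_\Phi$ and is attacked by $\varphi$ and all $Y$-literals;
\item $\bar b$, which attacks and is attacked by $b$.
\end{itemize}
Then $\{b\}$ is the only preferred extension containing $b$. The preferred extensions containing $\bar b$ are $E\cup\{\bar b\}$ with $E$ preferred in $F_\Phi$, and they split by size:
\begin{itemize}
\item those with $\varphi\in E$ contain a full assignment and have size $|X|+2$;
\item the $\varphi$-free ones are $\theta_Y\cup\{\bar b\}$ of size $|Y|+1$, and these exist iff $\Phi$ is false.
\end{itemize}
With $k=|Y|+2$ and $Z\neq\emptyset$, the arguments $b$ and $\bar b$ are $k$-diverse, and equally at most $k$-diverse, iff $\Phi$ is false. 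The distance bound certifies that the witnessing preferred extension is \emph{small}, i.e., that $\varphi$ could not have been added to it. This is exactly the maximality information that credulous acceptance cannot see. The diversity layer is therefore not routine: it is where the hardness comes from.
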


However, the precise complexity of $\abovekdiversearg_\pref$ is open. 
The issue why the above reduction does not work for $\abovekdiversearg_\pref$ is the following.
If $\varphi$ is satisfiable, one can construct an extension containing $\bar b$ from a satisfying assignment, of size $|X|+2$, which together with $\{b\}$ gives two $(|X|+3)$-diverse extensions which is above $k$ anyway.

We also extablish the complexity for $k$-diverse extensions. 
\begin{restatable}[$\star$]{theorem}{thmdivextpref}\label{thm:div-ext-pref}
	The problem $\kdiverseext_\pref$ is $\SigmaP$-complete. 
\end{restatable}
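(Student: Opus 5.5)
Membership in $\SigmaP$ is the easy direction. We guess two sets $S_1,S_2\subseteq A$. In polynomial time we check that both are non-empty, admissible, and satisfy $d(S_1,S_2)=k$. We then use a single $\co\NP$ (i.e., $\NP$-oracle) call for each set to confirm that no admissible proper superset exists, which establishes preferredness.

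For hardness we reduce from the complement of $\forall\exists$-QBF, i.e., from deciding whether $\Phi=\forall Y\exists Z.\varphi$ is false. I would start from the standard $\SigmaP$-hardness construction for credulous preferred acceptance, which is the one behind the proof of Theorem~\ref{thm:div-arg-pref} (the argument $\bar b$ mentioned there). In that construction, a designated argument $t$ lies in some preferred extension iff there is an assignment to $Y$ that cannot be extended to satisfy $\varphi$.

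The difficulty is the same one the paper notes for Theorem~\ref{thm:div-ext-semi}: other pairs of preferred extensions, e.g.\ those coming from assignments satisfying $\varphi$, might accidentally realise the target distance $k$. I would handle this with the copy trick. We add $N$ pairwise $R$-equivalent copies $t^1,\dots,t^N$ of $t$ that mutually attack each other. We then fix $k$ to a value that is only reachable when some preferred extension contains a copy of $t$. By Lemma~\ref{lem:range}, admissibility is preserved when one copy is swapped for another, so any preferred extension containing $t^i$ yields, after swapping $t^i$ for $t^j$, a preferred extension at distance exactly $2$. That distance is too small to separate the cases. It is therefore better to let the copies be conflict-free among themselves: then an extension containing $t$ contains all $N$ copies. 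We choose $N > 2(|X|+m+3)$, which exceeds the maximal size of any extension built from the remaining part of the framework.

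With this choice, set $k=N+c$, where $c$ is the exact size difference between an extension containing the block of copies and one that avoids it. The cleanest way to make $c$ exact is to let the copies attack, and be attacked by, a single gadget argument $g$ with $\{g\}$ preferred whenever the copies are rejected. Each copy must also be defended exactly when $t$ is.

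The two directions then go as follows. If $\Phi$ is false, the copy-block extension $E$ and a copy-free preferred extension give a pair at distance $N\pm$ (a bounded term). If $\Phi$ is true, no preferred extension contains copies, so every pair has distance at most $2(|X|+m+3)<k$. The main obstacle is pinning the distance to an \emph{exact} value $k$ rather than merely ``$\geq k$''. The sizes of copy-free extensions vary with the assignment, so the symmetric difference with the copy block is not a fixed number.

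My first fix for this obstacle is to enumerate over the offsets: pad every copy-free extension to a uniform size, by giving each variable pair $x,\bar x$ and each clause a forced, size-balanced representation, so that all extensions on the non-copy part have the same cardinality $s$. Then the pair consisting of $E$ and the extension $\{g\}$ has $d=N+s'+1$, with $s'$ the fixed size of $E\setminus\{t^1,\dots,t^N\}$, and we set $k$ to this value. If uniform padding fails, the fallback is to make $E$ the unique extension containing the copies, so that its partner can be chosen as the fixed $\{g\}$, which again determines $k$ exactly.
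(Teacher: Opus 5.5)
There is a genuine gap at the foundation of your reduction. You assume a ``standard $\SigmaP$-hardness construction for credulous preferred acceptance'' in which a designated argument $t$ lies in some preferred extension iff $\Phi$ is false. No such construction can exist unless the hierarchy collapses. Every admissible set extends to a preferred one, so credulous acceptance under $\pref$ coincides with credulous acceptance under $\adm$, which is only $\NP$-complete.

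Your copy block inherits exactly this defect. The copies are $R$-equivalent to $t$ and, in your final version, pairwise conflict-free. So ``some preferred extension contains the copy block'' is again just credulous admissible acceptance of $t$. Your two directions ($\Phi$ false yields a copy-block extension; $\Phi$ true excludes one) would therefore place a $\SigmaP$-complete problem in $\NP$. The argument $\bar b$ in the proof of Theorem~\ref{thm:div-arg-pref} does not play the role you give it. It is credulously accepted regardless of $\Phi$; what carries the hardness there is the \emph{size} of the preferred extension containing it.

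The missing idea is that $\SigmaP$-hardness for preferred semantics must come from the maximality check, i.e.\ from the \emph{absence} of arguments.

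\begin{enumerate}
\item The paper starts from the skeptical-acceptance reduction. There, $\Phi$ is false iff some assignment $\theta_Y$ to the universal variables is by itself a preferred extension, containing neither $\varphi$ nor any $Z$-literal. If $\Phi$ is true, every preferred extension contains $\varphi$ together with a full assignment to $X$.
\item It adds an argument $b$ that attacks everything else and is attacked by $\varphi$ and by all $Y$-literals, so $\{b\}$ is always preferred.
\item It then detects that $\Phi$ is false through a \emph{small} extension paired with $\{b\}$.
\end{enumerate}

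This also removes the exactness obstacle you flag, without making extension sizes uniform. Uniform sizes are hopeless here anyway, since $\{b\}$ and the assignment-based extensions necessarily differ in size. Instead the paper uses parity. It pads the instance with dummy arguments so that the distance $k$ between $\{b\}$ and the extension built from $\theta_Y$ is odd. When $\Phi$ is true, every preferred extension other than $\{b\}$ consists of $\varphi$, a full assignment and the dummies. Any two of them then differ in an even number of literal arguments, and each is at even distance from $\{b\}$. Hence no pair of preferred extensions is at distance exactly $k$.
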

\future{As one would expect, the reasoning in the reduction of Theorem~\ref{thm:div-ext-pref} does not extend to $\belowkdiverseext_\pref$ and $\abovekdiverseext_\pref$ due to its concrete nature.
	\yasir{Therefore, the precise complexity for $\belowkdiverseext_\pref$ and $\abovekdiverseext_\pref$ remain open. Nevertheless, we have membership in $\SigmaP$. 
	}
}
\subsection{Maximal Diversity}\label{sec:max}
Finally, we compute the threshold for maximum diversity.
Precisely, we ask whether an AF has $k$-diverse $\sem$-extensions (for a pair of arguments) but not $k'$-diverse for any $k'{>}k$.

We reduce SAT-UNSAT (for QBF) to the considered semantics by encoding two formulas into one AF and choosing a diversity threshold $K$ that separates them.
The construction ensures that the AF has two $k$-diverse extensions iff the first formula is satisfiable (true), and no $k'$-diverse extensions for $k'>k$ iff the second formula is unsatisfiable (false).

\begin{restatable}[$\star$]{theorem}{thmconfmaxkdiv}\label{thm:conf-max-k-div}
	The problem $\maxkdiverseext_\sem$ is $\DP$-complete for each $\sem\in\{\cnf, \nai\}$.
\end{restatable}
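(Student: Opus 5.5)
The statement concerns the decision version of $\maxkdiverseext_\sem$: given $F$ and $K$, is $K$ the largest $k$ for which $F$ admits two $k$-diverse $\sem$-extensions? The plan is to show membership by splitting this into an $\NP$ part and a $\co\NP$ part, and hardness by reducing SAT-UNSAT to a disjoint union of two copies of the shadow construction from Theorem~\ref{thm:conf-atleast-div-ext}.

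\emph{Membership.} The answer is ``yes'' iff two conditions hold.
\begin{enumerate}
\item $F$ admits two $\sem$-extensions with $d\ge K$. This is in $\NP$ by Theorem~\ref{thm:conf-atleast-div-ext}.
\item $F$ admits no two $\sem$-extensions with $d\ge K+1$. This is the complement of an $\NP$ problem, hence in $\co\NP$.
\end{enumerate}
So the problem lies in $\DP$.

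\emph{Key property of the shadow construction.} Let $F'_\varphi$ be the AF from the proof of Theorem~\ref{thm:conf-atleast-div-ext} for a CNF with $m$ clauses. I would first prove that the maximum diversity of $F'_\varphi$ is at most $2m$, with equality iff $\varphi$ is satisfiable.
\begin{itemize}
\item Every argument lies in a clause clique. An argument $s$ conflicts with $s_*$ and with the shadows of its neighbours. Hence a conflict-free set contains at most one element of $\{s,s_*\mid s \text{ in clause } i\}$ for each $i$, and so has size at most $m$.
\item Two sets of size at most $m$ have symmetric difference at most $2m$.
\item Equality forces both sets to have size $m$ and be disjoint, i.e.\ to pick one consistent literal per clause, which gives a satisfying assignment.
\item Conversely, a satisfying assignment yields a set $S$, and $S$ and $S_*$ are $2m$-diverse by Claim~\ref{claim:shadow}. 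For naive semantics I would check that these sets, extended if necessary, remain maximal without destroying the count. This needs a small argument, since maximal extensions of $S$ and $S_*$ are still bounded by $m$.
\end{itemize}
I would generalise the clause triangles to cliques so that clauses of any width are allowed.

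\emph{Hardness.} Reduce from SAT-UNSAT on a pair $(\varphi,\psi)$.
\begin{itemize}
\item First pad $\psi$ so that it always comes ``one clause short''. Take a fresh variable $u$ and let $\psi'=\bigwedge_{i\le n}(C_i\lor u)\wedge(\neg u)$, which has $n+1$ clauses. Then $\psi'$ is satisfiable iff $\psi$ is. Moreover, setting $u$ true satisfies the first $n$ clauses, so $F'_{\psi'}$ always has two disjoint conflict-free (and naive) sets of size $n$. Hence its maximum diversity is $2n+2$ if $\psi$ is satisfiable and exactly $2n$ otherwise; the value $2n+1$ is impossible if the sets are of size $\le n$.
\item Take $G$ to be the disjoint union of $F'_\varphi$ and $F'_{\psi'}$, and set $K=2m+2n$.
\item Conflict-free and naive extensions of a disjoint union are exactly unions of extensions of the components. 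Symmetric differences add up, so the maximum diversity of $G$ is the sum of the two component maxima.
\item If $\varphi$ is satisfiable and $\psi$ is unsatisfiable, the maximum is $2m+2n=K$.
\item If $\psi$ is satisfiable, the maximum is at least $K+2$.
\item If $\varphi$ is unsatisfiable and $\psi$ is not, the maximum is at most $2m-1+2n<K$.
\end{itemize}
Hence the maximum equals $K$ iff $(\varphi,\psi)$ is a yes-instance of SAT-UNSAT.

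\emph{Main obstacle.} I expect the hardest step to be pinning down the exact maximum values for the naive semantics. One must make sure that maximality extensions neither exceed the bound of one element per clause group nor prevent the witnessing sets from being disjoint. The case where the unsatisfiable $\psi'$ yields exactly $2n$ needs the same care. If the naive case causes trouble, I would add the clause-group argument explicitly as a separate claim: every naive set hits each clause group in exactly one argument, which follows since each argument attacks its whole group.
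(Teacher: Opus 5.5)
\textbf{Gap in the case analysis of the hardness reduction.} Your membership argument and your analysis of the shadow gadget are fine. In fact, a conflict-free set of size $m$ meets every clause clique, so it is automatically naive; the ``main obstacle'' you flag is therefore not real. The problem is the case where $\psi$ is satisfiable but $\varphi$ is not.

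Write $\mu(H)$ for the maximum diversity of $H$, so that $\mu(G)=\mu(F'_\varphi)+\mu(F'_{\psi'})$. Your bullet ``if $\psi$ is satisfiable, the maximum is at least $K+2$'' silently uses $\mu(F'_\varphi)=2m$. That holds only when $\varphi$ is satisfiable. If $\varphi$ is unsatisfiable you only know $\mu(F'_\varphi)\le 2m-2$, and this bound is attained whenever some assignment satisfies $m-1$ clauses of $\varphi$. Take the corresponding set $S$ of size $m-1$ and its shadow $S_*$. They are disjoint, and both are naive because no conflict-free set of size $m$ exists. Then $\mu(G)=(2m-2)+(2n+2)=K$, although $(\varphi,\psi)$ is a no-instance.

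Concretely, take $\varphi=(x)\wedge(\bar x)$ and $\psi=(y)$. Then $F'_\varphi$ is a $4$-clique with $\mu=2$, $F'_{\psi'}$ has $\mu=4$, and $\mu(G)=6=2m+2n$. The same happens for the $3$-CNF consisting of all eight sign patterns over three variables. The root cause is that both components have a gap of exactly $2$, so the loss on the $\varphi$-side cancels the gain on the $\psi$-side.

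\textbf{Repair and comparison with the paper.} Make the $\varphi$-gap strictly larger than the $\psi$-gap. For example, take two disjoint copies of $F'_\varphi$ (or duplicate every clause of $\varphi$) and set $K=4m+2n$. The four cases (sat/unsat, sat/sat, unsat/sat, unsat/unsat) then give $\mu(G)=K$, $K+2$, $\le K-2$ and $\le K-4$, so the reduction becomes correct.

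With this fix your route genuinely differs from the paper's. The paper places the $\varphi$-block, the $\psi$-block and a buffer set $B$ of size $2n-1$ in mutual full conflict, so every extension lives inside a single block. It then separates the cases using $n$ copies of $F_\varphi$ and a padding set $B_\varphi$. You instead use a disjoint union, in which extensions combine freely and diversities add up. Your version is easier to verify. Moreover, your clause-clique bound applies to every conflict-free set, not only to maximal ones, so it treats conflict-free and naive semantics uniformly. By contrast, the paper's padding set $B_\varphi$ is forced into extensions only by maximality.
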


We next prove that the argument version of the diversity problem is also $\DP$-complete for $\sem\in\{\cnf,\nai\}$.
Our construction works similar as in the proof of Theorem~\ref{thm:conf-max-k-div}.
\begin{restatable}[$\star$]{theorem}{thmmaxext}\label{thm:conf-max-k-div-arg}
	The problem $\maxkdiversearg_\sem$ is $\DP$-complete for each $\sem\in\{\cnf, \nai\}$.
\end{restatable}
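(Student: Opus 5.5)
We have to show that $\maxkdiversearg_\sem$ is $\DP$-complete for $\sem\in\{\cnf,\nai\}$. We read it as the decision version: given $(F,k,a,b)$, is $k$ the maximum diversity of two $\sem$-extensions $S_a\ni a$ and $S_b\ni b$?

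\textbf{Membership.} The problem is the intersection of two languages. The first is $\abovekdiversearg_\sem(F,k,a,b)$, which is in $\NP$ by Theorem~\ref{thm:conf-atleast-div-arg}. The second is the complement of $\abovekdiversearg_\sem(F,k+1,a,b)$, which is in $\co\NP$. If we want exactly $k$ to be attained, we intersect with $\kdiversearg_\sem(F,k,a,b)$ instead; this is also in $\NP$ by Theorem~\ref{thm:conf-atleast-div-arg}. Either way the problem lies in $\DP$.

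\textbf{Hardness.} We reduce from SAT-UNSAT. Take a pair $(\varphi,\psi)$ of 3-CNFs with $m$ and $m'$ clauses, and pad so that $m'\geq m$ (for instance by repeating clauses). We reuse the gadget from Theorem~\ref{thm:conf-atleast-div-arg}.

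\begin{itemize}
\item Build the clause-triangle AFs $G_\varphi$ and $G_\psi$ on disjoint argument sets. Keep the fresh arguments $a$ and $b$, with the symmetric attack $\{a,b\}$.
\item Let $b$ attack every other argument symmetrically, as before. Then any $\sem$-extension containing $b$ is exactly $\{b\}$.
\item Add a fresh selector pair $p_\varphi,p_\psi$ with a symmetric attack. Let $p_\varphi$ attack every argument of $G_\psi$, and let $p_\psi$ attack every argument of $G_\varphi$. Hence an extension with $a$ uses literals from at most one of the two gadgets.
\end{itemize}

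With $S_b=\{b\}$, the distance is $d(S_a,\{b\})=|S_a|+1$. The triangles allow at most one literal per clause, so a set of the form $\{a,p_\varphi\}\cup L$ with $L$ inside $G_\varphi$ has size at most $m+2$. It reaches $m+2$ iff $L$ picks one literal per clause with no complementary pair, i.e.\ iff $\varphi$ is satisfiable; this is Claim~\ref{claim:sat-naive}.

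To make $\psi$ dominate $\varphi$, we scale the $\psi$ side. Replace each $\psi$-literal argument by a block of $c$ mutually $R$-equivalent, non-attacking copies, with $c$ large, say $c=m+2$. Then a consistent $\psi$-side extension has size $c\cdot(\text{number of clauses hit})+2$. Set $k=m+3$ (that is, $|S_a|+1$ with $|S_a|=m+2$). The intended correctness statement is that the maximum distance is exactly $k$ iff $\varphi$ is satisfiable and $\psi$ is unsatisfiable.

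\textbf{Main obstacle.} The difficulty is calibration, and it is the step I expect to be delicate. Even when $\psi$ is unsatisfiable, a $\psi$-side partial assignment hitting $m'-1$ clauses yields distance about $c(m'-1)+3$, which already exceeds $k$. A single threshold therefore does not separate the cases directly.

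My first attempt is to switch to a gap encoding: reduce from the $\DP$-complete problem ``$\max$-SAT value of $\varphi$ is exactly $t$''. This is the standard route to $\DP$-hardness of exact optimization, via the Papadimitriou--Yannakakis exact-clique-style result. Concretely, I take a single 3-CNF $\varphi$ together with $t$ and ask whether the maximum is exactly $t+2$. This is natural because the maximum distance in the gadget of Theorem~\ref{thm:conf-atleast-div-arg} equals $1+1+(\text{max number of simultaneously satisfiable clauses})+\dots$.

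For that to work, I must verify a claim. The maximum size of a conflict-free set containing $a$ must equal $1+$ the max-SAT value. This needs the fact that choosing consistent literals, at most one per clause, corresponds exactly to a partial assignment satisfying the chosen clauses. For naive semantics we must additionally check maximality. Every conflict-free set extends to a naive one, and that extension cannot exceed the max-SAT bound, so the maximum is unchanged.

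Proving this equivalence, and invoking $\DP$-hardness of exact max-3SAT, which follows from exact independent set, is the crux.
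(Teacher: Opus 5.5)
Your final route is correct, and it differs genuinely from the paper's. You correctly diagnosed why your first SAT-UNSAT attempt cannot be calibrated. The selector pair makes the distance the \emph{maximum} of a $\varphi$-side and a $\psi$-side quantity, so the scaled $\psi$ side dominates regardless of satisfiability.

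The paper keeps SAT-UNSAT as the source but makes the distance a \emph{sum}. The signal $a_1$ sits in a $\varphi$-part: $n$ copies of the clause-triangle AF of $\varphi$, plus $mn$ padding arguments $B_\varphi$. The signal $a_2$ conflicts with that part. It can only be completed by two copies of the AF of $\psi$ or by a fallback set $B$ of $2n-1$ arguments, and all parts are pairwise in conflict. Hence the best extension containing $a_1$ has size $2mn+1$ if $\varphi$ is satisfiable and at most $2mn-n+1$ otherwise. The best one containing $a_2$ has size $2n+1$ or $2n$ depending on $\psi$. So the maximum is $K=2mn+2n+1$ exactly when $\varphi$ is satisfiable and $\psi$ is not.

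Your switch to an exact-optimization source makes all of this unnecessary. In the unchanged AF of Theorem~\ref{thm:conf-atleast-div-arg}, the only $\sem$-extension containing $b$ is $\{b\}$. So the maximum diversity for $a,b$ is exactly $\alpha+2$, where $\alpha$ is the independence number of the literal-occurrence graph, and $\alpha$ equals the max-SAT value. Your remark that a maximum-cardinality conflict-free set is automatically naive covers $\nai$. The only external ingredient is $\DP$-hardness of the exact version, i.e.\ the Papadimitriou--Yannakakis result for exact clique.

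You can even drop the detour through exact max-3SAT, which would need its own weighting reduction. The AF of Theorem~\ref{thm:nai-atmost-div-arg} (a graph $G$ with symmetric attacks plus $a,b$) already has maximum diversity $\alpha(G)+2$ for $a,b$ under both semantics. It therefore reduces exact independent set directly.

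Your argument is shorter and more transparent. It relies, however, on that black box and on the argument version pinning one witness to $\{b\}$. The paper's self-contained SAT-UNSAT weighting template is the one it reuses for the extension version and for the $\DPtwo$ results, where no comparably standard exact-optimization source is at hand.
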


Next, we consider the connection between extensions. 

\begin{restatable}[$\star$]{lemma}{lemunion}\label{lem:union}
	Let $F_1=(A_1,R_1)$, $F_2=(A_2,R_2)$ be two mutually disjoint AFs, and let $F=(A_1\cup A_2, R_1\cup R_2)$. 
	Then, for each semantics $\sem$ and sets $S_i\subseteq A_i$:
	$S_1\cup S_2$ is a $\sem$-extension in $F$ iff $S_i$ is a $\sem$-extension in $F_i$ for $i=1,2$.
\end{restatable}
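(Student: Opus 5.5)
The plan is to first record the structural facts about the disjoint union $F$ that every semantics depends on, and then go through the semantics one at a time. Since $A_1\cap A_2=\emptyset$ and $R=R_1\cup R_2$ contains no edge between $A_1$ and $A_2$, every attacker of an argument in $A_i$ lies in $A_i$. Writing $S=S_1\cup S_2$, I would verify three basic facts.
\begin{enumerate}
\item $S$ is conflict-free in $F$ iff each $S_i$ is conflict-free in $F_i$, because $(S\times S)\cap R=\bigcup_i (S_i\times S_i)\cap R_i$.
\item For $a\in A_i$, $a$ is defended by $S$ in $F$ iff $a$ is defended by $S_i$ in $F_i$. Hence $\adef_F(S)=\adef_{F_1}(S_1)\cup\adef_{F_2}(S_2)$.
\item The range splits: $S^+_R=(S_1)^+_{R_1}\cup(S_2)^+_{R_2}$, with the $i$-th part contained in $A_i$.
\end{enumerate}
These immediately give the claim for $\cnf$, $\adm$ and $\comp$; for $\comp$ I would compare the two sides of $\adef_F(S)=S$ componentwise inside $A_1$ and inside $A_2$. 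They also give $\stab$: an argument of $A_i\setminus S_i$ can only be attacked from $S_i$.

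For the maximality-based semantics I would use one uniform argument. Let $P$ be the base property ($\cnf$ or $\adm$), which is componentwise by the facts above. Let $\preceq$ be the order being maximised: $\subseteq$ on sets for $\nai$ and $\pref$, and $\subseteq$ on ranges for $\semi$ and $\stag$. By fact 3, $\preceq$ is also componentwise: $S\preceq T$ iff $S_i\preceq T_i$ in $F_i$ for both $i$. Strictness behaves accordingly: $S\prec T$ iff $S_i\preceq T_i$ for both $i$ and $S_j\prec T_j$ for some $j$.

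For the direction ($\Leftarrow$), suppose both $S_i$ are maximal in $F_i$ and some $P$-set $T$ satisfies $S\prec T$. Then $T_j=T\cap A_j$ is a $P$-set in $F_j$ with $S_j\prec T_j$, which is a contradiction. For the direction ($\Rightarrow$), suppose $S_1$ is not maximal, witnessed by $T_1$. Then $T_1\cup S_2$ is a $P$-set of $F$ strictly above $S$, which is again a contradiction.

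The step that needs care, and the one I expect to be the main obstacle, is pinning down what ``each semantics'' covers. The paper's list $\{\cnf,\nai,\adm,\comp,\stab,\pref,\semi,\stag\}$ all decompose as shown. However, I would state explicitly that the proof relies only on these three decomposition facts, so the statement is meant for these semantics. I would also check the range-based cases carefully: ranges of sets from different components never overlap, so range inclusion really is componentwise. This is where an argument coming from the other component could otherwise slip in.
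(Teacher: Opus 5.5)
Your proposal is correct and follows essentially the same route as the paper's proof. Both decompose conflict-freeness, admissibility and range componentwise, then transfer maximality in both directions: a dominating set is restricted to one component, or a component witness is padded with the other $S_j$. Your version is somewhat tidier. It handles $\nai,\pref,\semi,\stag$ uniformly via a componentwise preorder, whereas the paper proves only the naive case in detail and merely asserts the range-based ones. It also explicitly covers $\comp$ through $\adef_F(S)=\adef_{F_1}(S_1)\cup\adef_{F_2}(S_2)$, a case the paper's proof omits.
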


By Lemma~\ref{lem:union}, when considering disjoint AFs, their extensions are sub-extensions in the individual sub-AFs.

We next establish that the argument version of the diversity problem is $\DP$-complete for $\sem\in\{\adm,\comp,\stab\}$.
\begin{restatable}[$\star$]{theorem}{maxext}\label{thm:max-k-div-adm}
	The problem $\maxkdiversearg_\sem$ is $\DP$-complete for each $\sem\in\{\adm,\comp,\stab\}$.
\end{restatable}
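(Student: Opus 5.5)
We need to show that $\maxkdiversearg_\sem$, read as the decision problem ``given $F$, arguments $a,b$ and $k$, do $a$ and $b$ admit $k$-diverse $\sem$-extensions but no $k'$-diverse ones for $k'>k$'', is $\DP$-complete for $\sem\in\{\adm,\comp,\stab\}$. The plan is to establish membership directly and obtain hardness by reducing SAT-UNSAT, combining two standard translations (Definition~\ref{def:translation}) with Lemma~\ref{lem:union}.

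\textbf{Membership.} The instance is the intersection of two languages. The first, ``there exist $\sem$-extensions $S_a\ni a$, $S_b\ni b$ with $d(S_a,S_b)=k$'', is in $\NP$ by Theorem~\ref{thm:div-arg}. The second, ``there are no such extensions with $d(S_a,S_b)>k$'', is the complement of $\abovekdiversearg_\sem$ with threshold $k+1$, which is in $\NP$ by Theorem~\ref{thm:div-arg}. Hence the second language is in $\co\NP$, and the intersection is in $\DP$.

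\textbf{Hardness.} Let $(\varphi,\psi)$ be a SAT-UNSAT instance. Rename variables so the two formulas share none, and let $n=|\var(\varphi)|+|\var(\psi)|+2$.
\begin{enumerate}
\item Take the disjoint union of $F_\varphi$ and $F_\psi$. By Lemma~\ref{lem:union}, extensions of the union are exactly unions of extensions of the parts.
\item Add a gadget that forces the witness for $a$ to contain the $\varphi$ argument, with a large, controlled contribution $L$ to the distance. Concretely, let $a$ be a fresh argument attacked by a fresh $c$, where $c$ is attacked by the $\varphi$ argument. Then $a$ is defended only when $\varphi$ is accepted. Under stable semantics, $a$ is in some extension iff $\varphi$ is satisfiable.
\item Let $b$ be an argument that is always acceptable, namely an unattacked argument. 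Alternatively, use a symmetric pair $b,\bar b$ so that $\{b\}\cup\dots$ can be paired with a set that excludes $b$.
\item Pad with $L$ copies of $R$-equivalent unattacked-style arguments tied to $\varphi$ being accepted. This makes every witness containing $a$ contain at least $L\gg n$ arguments absent from the $b$-side.
\item Arrange $\psi$'s component so that a satisfying assignment of $\psi$ could add $M\gg n$ further distance. For this, attach a second padding block of $M$ arguments defended by the $\psi$ argument, with $M$ larger than all variable-induced fluctuation. Choose the threshold $k$ equal to the maximum distance achievable when $\psi$ is unsatisfiable.
\end{enumerate}

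The hard part is making that maximum exactly computable, because the assignment parts vary in size. To handle this, I would pick $S_b$ to be a fixed, essentially minimal extension, such as $\{b\}$ together with grounded-type arguments, and use stable/complete-friendly gadgets in which every extension picks exactly one literal per variable. The variable parts then contribute a fixed count $|\var(\varphi)|$, the maximum on the $\varphi$-side is exactly $L+|\var(\varphi)|+1$, and the $\psi$-side adds either a fixed value (unsat) or $M$ more (sat).

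The main obstacle is this exact-value calibration uniformly across $\adm$, $\comp$ and $\stab$. For admissible semantics, variables may be left undecided, which creates intermediate distances. The fix I would try first is to make the maximum attained when all variables are decided, since that maximizes size, and to note that the exact value $k$ is still realizable. This works because $\varphi$ satisfiable gives a full assignment, and with $\psi$ unsatisfiable all non-padding $\psi$-literal choices still give a full assignment of $\psi$'s variables.

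It then remains to check the two directions:
\begin{itemize}
\item If $\varphi$ is satisfiable and $\psi$ is unsatisfiable, distance $k$ is attained and nothing larger is.
\item If $\varphi$ is unsatisfiable, then $a$ is unacceptable and no $k$ is attained.
\item If $\psi$ is satisfiable, the $M$-block yields a distance greater than $k$.
\end{itemize}
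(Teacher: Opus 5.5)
Your membership argument is correct and is the paper's. Your high-level plan is also the paper's: reduce from SAT-UNSAT, take a disjoint union via Lemma~\ref{lem:union}, and let satisfiability of $\psi$ trigger a large gap. The hardness construction, however, has a genuine gap exactly where you flag the difficulty.

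A many-one reduction must output $k$ in polynomial time. Your threshold, ``the maximum distance achievable when $\psi$ is unsatisfiable'', is not something you can compute for your AF. Nothing constrains the $b$-witness: an unattacked $b$ lies in every complete and stable extension and can be added to any admissible set. Nothing constrains the $\psi$-component at all. So the maximum ranges over all pairs of extensions of the two standard translations, including those encoding falsifying assignments. Under $\stab$ and $\comp$, a full assignment forces every clause argument it falsifies into the extension. Under $\adm$ you additionally get partial assignments and sets containing clause arguments. Hence the maximum depends on quantities such as how many clauses one assignment can falsify, and how these clause sets differ between two assignments; already the former is $\NP$-hard to determine.

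For the same reason, your step-4 claim that the $L$ padding arguments are ``absent from the $b$-side'' is not enforced, since the $b$-witness may itself contain $\varphi$. Likewise, the $\psi$-side does not add ``a fixed value'' when $\psi$ is unsatisfiable. Your remedy of picking $S_b$ as a fixed minimal extension is not available: $\maxkdiversearg_\sem$ quantifies over all pairs $(S_a,S_b)$, so the construction itself must exclude unwanted witnesses. The padding block $M$ only separates the two $\psi$-cases by a margin, whereas the problem asks for the exact maximum.

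The missing idea is the paper's blocker, which removes every extension of a component that does not contain its formula argument. Add an argument $b_\psi$ that attacks all of $A_\psi$ and is attacked only by $\psi$. Also add a block $B$ of $M=2\max\{|A_\varphi|,|A_\psi|\}$ fresh, mutually non-attacking arguments, each attacking all of $A_\varphi$ and attacked only by $\varphi$. Now an extension of the $\varphi$-component either contains $\varphi$ or is a subset of $B$. In the first case it is a (possibly partial) satisfying assignment plus $\varphi$, of size at most $k_1+1$, where $k_1$ is the number of variables of $\varphi$. If $\psi$ is unsatisfiable, the $\psi$-component has only the extension $\{b_\psi\}$, plus $\emptyset$ under $\adm$ and $\comp$.

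Take $\varphi$ and $b_\psi$ as the two arguments; your $a$/$c$ gadget and $L$-padding become unnecessary. The $b_\psi$-witness is then pinned on the $\psi$-side. On the $\varphi$-side its distance-maximizing choice is $B$, which is larger than any other option and disjoint from every set containing $\varphi$. So the maximum is an explicit number, $k_1+M+1$ under $\stab$. If $\psi$ is also satisfiable, pairing $\theta\cup\{\varphi\}\cup\theta'\cup\{\psi\}$ (for satisfying assignments $\theta,\theta'$) with $B\cup\{b_\psi\}$ exceeds it. If $\varphi$ is unsatisfiable, no witness contains $\varphi$ at all.

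One calibration detail to watch: under $\adm$ and $\comp$, the $\varphi$-witness may take the empty extension of the $\psi$-component, i.e., omit $b_\psi$. There the positive-case maximum is therefore $k_1+M+2$. The paper's uniform choice $K=k_1+M+1$ overlooks this, so the threshold has to be set per semantics.
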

Also the extension-version of the problem is $\DP$ and $\DPtwo$-c.




\begin{restatable}[$\star$]{theorem}{thmmaxkdiv}\label{thm:max-k-div}
	The problem $\maxkdiverseext_\sem$ is
	\begin{enumerate}
		\item $\DP$-complete for each $\sem\in\{\adm,\comp,\stab\}$ and
		\item  $\DPtwo$-complete for semantics $\sem\in \{\semi,\stag\}$.
	\end{enumerate}
\end{restatable}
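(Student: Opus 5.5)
Both parts follow the same scheme: membership by splitting the ``max is exactly $k$'' question into an existential and a universal part, and hardness by reducing a SAT-UNSAT style problem via a disjoint union of two gadget AFs, using Lemma~\ref{lem:union}. We read $\maxkdiverseext_\sem$ in its decision form: given $F$ and $k$, is $k$ the maximum diversity over pairs of $\sem$-extensions?

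\emph{Membership.} The answer is yes iff (a) $F$ has two $\sem$-extensions with $d\geq k$ and (b) $F$ has no two $\sem$-extensions with $d\geq k+1$. Part (a) is $\abovekdiverseext_\sem$, which is in $\NP$ for $\sem\in\{\adm,\comp,\stab\}$ (Theorem~\ref{thm:div-ext}) and in $\SigmaP$ for $\sem\in\{\semi,\stag\}$ (Theorem~\ref{thm:div-ext-semi}). Part (b) is the complement of the same problem with threshold $k+1$, so it is in $\co\NP$, respectively $\PiP$. The intersection places the problem in $\DP$, respectively $\DPtwo$.

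\emph{Hardness for $\adm,\comp,\stab$.} Reduce from SAT-UNSAT on $(\varphi,\psi)$. Build $F_1$ from $\varphi$ so that it has two extensions at distance exactly some known $K_1$ iff $\varphi$ is satisfiable, and never more than $K_1$. Build $F_2$ from $\psi$ so that its maximal diversity is a known base value $K_2$ when $\psi$ is unsatisfiable, and strictly larger when $\psi$ is satisfiable. Take $F=F_1\cup F_2$ disjointly. By Lemma~\ref{lem:union}, extensions of $F$ are exactly unions of extensions of $F_1$ and $F_2$. Since the two parts are disjoint, distances add:
\[
d(S_1\cup S_2,\,T_1\cup T_2)=d(S_1,T_1)+d(S_2,T_2).
\]
Hence the maximum for $F$ is the sum of the two maxima, and we set $k=K_1+K_2$.

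To make the maxima controllable, I would pad the standard translation (Definition~\ref{def:translation}) with $N$ copies of $\varphi$, each an $R$-equivalent argument as in Lemma~\ref{lem:range}, so that satisfying extensions gain a large fixed weight. The assignment part then contributes a bounded spread that I can make exact, for instance by doubling each variable with a shadow copy so that complementary assignments are at a fixed distance. Stable semantics needs a variant in which an extension always exists, e.g.\ by adding odd-cycle escapes; the gadget from Theorem~\ref{thm:div-ext} should be reused here.

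\emph{Hardness for $\semi,\stag$.} Use the same union scheme, reducing from the $\DPtwo$-complete problem that asks, for QBFs $(\Phi_1,\Phi_2)$, whether $\Phi_1$ is false and $\Phi_2$ is true. The gadget is the $F_\Phi$ of Theorem~\ref{thm:div-arg-semi}, with many copies of $\bar\varphi$ as in Theorem~\ref{thm:div-ext-semi}, so that an extension containing $\bar\varphi$ exists iff $\Phi$ is false. When such an extension exists, the distance to an extension containing $\varphi$ jumps above a fixed threshold. Lemma~\ref{lem:union} holds for all semantics, since range-maximality decomposes over disjoint components, so additivity again gives the threshold.

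\emph{Main obstacle.} The hard step is pinning the maximum \emph{exactly}. The ``no'' side must have a maximum equal to the threshold, not merely at most it, so that in the union the total equals $k$ exactly when both conditions hold. I would handle this by giving each gadget a guaranteed baseline pair of extensions at a precisely computable distance, e.g.\ $\emptyset$ versus a canonical extension for admissible semantics. I would then choose the number of copies $N$ larger than all assignment-induced variation, so that satisfiability shifts the maximum by a multiple of $N$ that dominates everything else.
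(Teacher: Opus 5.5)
You have the membership part right, and it matches the paper: ``max equals $k$'' is an $\NP$ (resp.\ $\SigmaP$) condition intersected with a $\co\NP$ (resp.\ $\PiP$) one. The disjoint-union framework with additive distances (Lemma~\ref{lem:union}) is also the paper's. The hardness argument, however, has a logical hole. Your two conditions are: $\max(F_1)\le K_1$ with equality iff $\varphi$ is satisfiable, and $\max(F_2)\ge K_2$ with equality iff $\psi$ is unsatisfiable. These do not imply that $\max(F)=K_1+K_2$ only on positive instances. If $\varphi$ is unsatisfiable and $\psi$ satisfiable, a deficit on the first side can exactly cancel a surplus on the second. Your remedy (``satisfiability shifts the maximum by a multiple of $N$'') invites exactly this if both gadgets are padded alike. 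For example, the shifts are equal when $\varphi$ and $\psi$ have equally many variables. You need an explicit asymmetry: the drop of $\max(F_1)$ when $\varphi$ becomes unsatisfiable must exceed every surplus $F_2$ can produce. One way is to pad only $F_1$, with more than $|A(F_2)|$ arguments usable only alongside $\varphi$. Padding a baseline block instead, as the paper does with $B$, gives such a drop only under $\stab$. Under $\adm$ and $\comp$ the empty set is still an extension on that side, so the drop is merely $k_1+1$. You would then also need $k_1$ large relative to $|A(F_2)|$, e.g.\ via dummy variables.

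Second, nothing in your sketch pins the maximum \emph{exactly}. Yet on positive instances $k$ must equal it and be computable in polynomial time.

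Under $\adm$ your $R$-equivalent copies of $\varphi$ are not forced together, so two satisfying admissible sets may split them. With $\{b\}$ as baseline, $\max(F_1)=N+1+\max\{k_1+1,\max|\theta_1\triangle\theta_2|\}$ over satisfying partial assignments. This depends on the solution space of $\varphi$. Shadow variables do not help: complementary assignments are always $2k_1$ apart, and what matters is whether both are available. Force the padding all-or-nothing, e.g.\ by a cycle through self-attacking auxiliaries as the paper does for $B$. Also use the $b$-gadget of Theorem~\ref{thm:div-ext} for all three semantics and both formulas. It is what reduces the extensions of an unsatisfiable formula to a known set, not merely a way to make a stable extension exist.

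For $\semi$ and $\stag$ your sketch has no exactness mechanism at all. For a true $\forall Y\exists Z.\psi$ the extensions are $\theta\cup\{\psi\}$ for full satisfying $\theta$. Their maximal distance can be anything between $2|Y|$ and $2|Y\cup Z|$. The best partner of an extension carrying the $\bar\varphi$-copies likewise depends on which assignments survive. A large $N$ separates the cases but does not determine $k$.

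Note that the paper's device for this step does not work as written. Once $\bar a_\Phi$ attacks all clause arguments, $\theta\cup\{\varphi,\bar a_\Phi\}$ is admissible for every $\theta$. For suitable $\theta$ its range contains $b$ and strictly includes the range of any given conflict-free set containing $\bar\varphi$. So $\bar\varphi$ then lies in no semi-stable or stage extension.

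What you need is a guarantee that a canonical pair attains the maximum. One ingredient is to replace the matrix by $\bigwedge_i(u\vee C_i)\wedge\bigwedge_i(\bar u\vee\bar C_i)$, where $u$ is a fresh universal variable and $\bar C_i$ is $C_i$ with all literals flipped. This preserves the truth value and makes flipping all literal arguments an automorphism of the AF, so complementary partners always exist.
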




\paragraph{A Note On Minimal Diversity.}
Since diversity is neither monotonic, nor monotonic, the problem of determining the smallest $k$ for which an AF admits two $k$-diverse $\sem$-extensions is not complementary to its maximum counterpart.
Moreover, our hardness-reductions in Section 4.4 does not immediately extend to the case of minimal diversity.

\future{
	\yasir{The reduction here only work for diverse arguments and have issues for diverse extensions. Need to check! So, two satisfying assignments for the SAT instance $\varphi$ may yield smaller diversity values than $K$, which should not happen.}
	
	Since diversity is neither monotonic, nor monotonic, the problem of determining the smallest $k$ for which an AF admits two $k$-diverse $\sem$-extensions is interestingly not complementary to its maximum counterpart.
	Moreover, the problem still remains $\DP$-complete.
	Nevertheless, the reduction from above can not be reused for all the semantics.
	

	\begin{theorem}\label{thm:min-k-div}
		The problem $\minkdiversecover_\sem$ is $\DP$-complete for each $\sem\in\{\adm,\comp,\stab\}$.
	\end{theorem}
	
	\begin{proof}
		Membership follows similar arguments as in the proof of Theorem~\ref{thm:max-k-div}. 
		An AF $F=(A,R)$ has minimum $k$-diverse $\sem$-extensions iff 
		(i) $F$ admits a pair $S_p, T_p$ of $\sem$-extensions with $d(S_p,T_p)=k$, and 
		(ii) for each pair $S_n,T_n$ of $\sem$-extensions in $F$, $d(S_n,T_n)\geq k$.
		Here, (i) is an $\NP$ and (ii) is a $\co\NP$ problem. Thus the problem $\minkdiversecover_\sem$ is in $\DP$ for each $\sem\in \{\adm,\comp,\stab\}$.
		
		For hardness, we reduce from the SAT-UNSAT problem following the similar idea as in the proof of Theorem~\ref{thm:max-k-div}.
		However, we need to distinguish two cases based on considered semantics.
		
		The case of \textbf{stable semantics} follows the exact same line of reasoning.
		Here, our claim amounts to proving the following equivalence instead.
		
		\begin{claim}
			$\varphi$ is satisfiable and $\psi$ is not satisfiable iff $F$ admits two $K$-diverse stable extensions but no $K'$-diverse stable extensions for any $K'<K$.
		\end{claim}
		\begin{proof}[Proof of Claim]
			If $\varphi$ is satisfiable and $\psi$ is not, we construct a stable extension $S$ from a satisfying assignment for $\varphi$ of size $k_1+2$ in $F$. 
			This, together with the set $B'= B\cup\{a_\varphi,a_\psi\}$ of arguments, results in two $K$-diverse stable extensions of $F$.
			
			We also need to prove the claim that \textbf{$F$ does not admit two $K'$ stable extensions for any $K'< K$:}
			\alert{Currently does not work for extensions, but still works for arguments. \\
				Issue: What happens when $\varphi$ has two satisfying assignment. The diversity of corresponding extensions can be between $(1, 2k_1)$ when satisfying assignments share variables.}
			
			Conversely, we have the same three cases as in the proof before.
			If $\psi$ is satisfiable, this yields a stable extension of size $k_2+2$ in $F$, which together with $B\cup\{a_\varphi,a_\psi\}$ results in $K'$-diverse stable sets in $F$ for $K'= k_2+ k_1\times k_2 + 2$.
			Since $k_2<k_1$, this is a contradiction to $K$ being the smallest achievable diversity.
			In contrast, if $\varphi$ is not satisfiable, one can only achieve diverse extensions via the set $B$ of arguments. However, in this case a diversity of $K$ can not be achieved.
			
			The case of \textbf{admissible semantics} requires a separate construction.
			In the reduction above, while a satisfying assignment for $\varphi$ yields a stable set of size $k_1+1$ ($k_1$ is the number of variables in $\varphi$), the presence of a ``smaller/partial" satisfying assignment will destroy the size of our witnessing extension. 
			This holds when a formula is satisfiable already by a partial assignment and hence one is not forced to take values for all the literals in the extension. In such an event, $\varphi$ is still satisfiable but there also exists smaller and hence less-diverse extensions.
			
			\textbf{To handle admissible semantics},
			we enforce in our reduction that only \emph{full} assignments are allowed to satisfy the given formula. 
			This can be achieved in a fashion similar to the idea used in Example~\ref{ex:levels-adm} and creating cyclic dependency between arguments such that either all arguments from a set are taken in an admissible set, or none is taken.
			Our reduction hence yields the following AF using sub-AFs $F_\varphi$ and $F_\psi$ as before.
			Recall that we can write $V_\varphi=\{x_i \mid 1\leq i\leq k_1\}, V_\psi=\{y_j\mid 1\leq j\leq k_2\}$ and $B=\{b_\ell \mid 1\leq \ell\leq k_1\times k_2\}$.
			\begin{align*}
				A = A' & \cup B \cup \{p_i \mid i\leq k_1\} \cup \{q_j \mid j\leq k_2\}\cup \{r_\ell \mid \ell\leq k_1k_2\}\\
				R = R' & \cup \{(\varphi,b) \mid b\in B\} \cup \{(b,s)\mid b\in B, s\in F_\varphi\} \\
				& \cup \{(\psi,b) \mid b\in B\} \cup \{(b,t)\mid b\in B, t\in F_\psi\} \\
				& \cup \{(x_i,p_i), (\bar x_i,p_i), (p_i,x_{i+1}), (p_i, \bar x_{i+1}) \mid 1\leq i<k_1 \}\\
				& \cup \qquad \{(p_{k_1}, x_1), (p_{k_1},\bar x_1)\}\\
				& \cup \{(y_j,q_j), (\bar y_j,q_j), (q_j,y_{j+1}), (q_j, \bar y_{j+1}) \mid 1\leq j<k_2 \}\\
				& \cup \qquad \{(q_{k_2}, y_1), (q_{k_2},\bar y_1)\}\\
				& \cup \{(b_\ell,r_\ell), (r_\ell,b_{\ell+1})\mid 1\leq \ell<k_1k_2 \}\\
				& \cup \qquad \{(r_{k_1k_2}, x_1)\}\\
				& \cup \{(p_1,p_2), (q_1,q_2), (r_1,r_2)\}
			\end{align*}
			A snapshot of this construction is depicted inside Figure~\ref{fig:const-circles} for arguments in $B$ and those corresponding to variables in $V_\varphi$.
			\begin{figure}[t]
				\centering
				\includegraphics{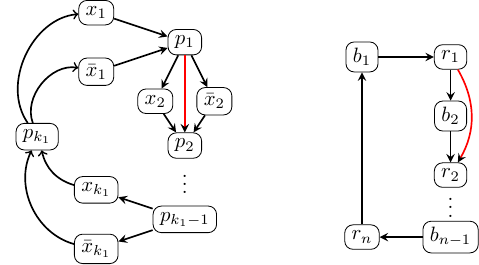}  
				\caption{A snapshot of additional attacks in $F$ to force admissible sets of certain size (See proof of Theorem~\ref{thm:min-k-div}). We write $k_1\times k_2=n$}\label{fig:const-circles}
			\end{figure}
			The additional arguments and attacks in $R$ encodes the following intuition.
			\begin{itemize}
				\item Each auxiliary set of arguments is used to create cycles in the AF around arguments for variables $V_\varphi,V_\psi$ and $B$, respectively.
				\item The additional attacks in $R$ not used in the earlier construction are used to create the aforementioned cycle.
				\item We additionally want to avoid taking an admissible set containing all these newly added arguments, this is achieved by adding attacks in the last line (alternatively, one can add self-attacks for these arguments), i.e., the attack $(x,x)$ encodes that the corresponding arguments on the cycle does not belong to any admissible set.
			\end{itemize}
			Then, we have the following equivalence for our claim.
			\begin{itemize}
				\item $\varphi$ is satisfiable and $\psi$ is unsatisfiable.
				\item There is a satisfying (\emph{full}) assignment for $\varphi$ but no assignment for $\psi$.
				\item $F_\varphi$ admits an admissible extension $E_\varphi$ of size $k_1+1$, while $F_\psi$ does not admit any admissible extension. \\
				Recall that an admissible set in either sub-AF must contain the corresponding argument for the formula ($\varphi/\psi$) to defend arguments against attack from $B$.
				\item $F$ admits two $K$-diverse admissible extensions (namely, $E_\varphi$ and $B$) but no $K'$-diverse admissible extensions for any $K'<K$
			\end{itemize}
		\end{proof}
		
		We let $K= k_1 + k_1\times k_2 + 1$ and prove the following claim.
		
		\begin{claim}
			$\varphi$ is satisfiable and $\psi$ is not satisfiable iff $F$ admits two $K$-diverse admissible extensions but no $K'$-diverse admissible extensions for any $K'<K$.
		\end{claim}
		\begin{proof}[Proof of Claim]
			If $\varphi$ is satisfiable and $\psi$ is not, we construct an admissible extension $S$ from a satisfying assignment for $\varphi$ of size $k_1+1$ in $F$. 
			This, together with the set $B$ of arguments, results in two $K$-diverse stable extensions of $F$.
			Notice that, here an admissible set must be of size $k_1+1$ and can not be smaller than that. 
			This would have been problematic without cycles when $\varphi$ was satisfiable by a partial assignment.
			Precisely, one can still obtain $K'$-diverse admissible extensions for $K' <K$ although $\varphi$ is satisfiable.
			
			Conversely, we have the same three cases as in the claim for stable semantics.
			If $\psi$ is satisfiable, this yields a admissible extension $E_\psi$ of size $k_2+1$ in $F$.
			Together with $B$, this results in $K'$-diverse admissible sets in $F$ for $K'= k_2+ k_1\times k_2 + 1$.
			Since $k_2<k_1$, this is a contradiction to $K$ being the smallest achievable diversity.
			The case when both formulas are satisfiable follows the same reasoning of when $\psi$ is satisfiable, resulting in less than $K$ diverse admissible extensions.
			Finally, if $\varphi$ is not satisfiable, one can only achieve diverse extensions via the set $B$ of arguments. However, in this case a diversity of $K$ can not be achieved.
		\end{proof}
		
	\end{proof}
	
	\yasir{Once again: to obtain the hardness for diverse arguments, we can reuse the Claims for arguments.
		I.e., $(\varphi,\psi)$ is a positive instance of SAT-UNSAT iff arguments $b_1$ and $\varphi$ are minimally $K$-diverse in $F$.}
}

\section{Experiments}
To investigate practical feasibility of diversity, we
implemented our approach and conducted experiments.
%

\subsection{Implementation}
We base our implementation on an ASP-encoding and the argumentation
solver
ASPARTIX~\cite{EglyGagglWoltran08,EglyGagglWoltran10,DvorakKonigWallner21}
for rapid prototyping.
Our encoding computes, given an AF~$F$, two extensions that are the
minimally or maximally~$k$-diverse in~$F$.
The core computational component of our approach is an ASP encoding
implementing the proposed diversity computation, which we solve using
clingo (5.8.0).
We place a simple 
wrapper to orchestrate the experiments,
and collect runtime and optimization statistics.
For each instance and semantics, we record the runtime required to
solve the corresponding optimization problem, as well as the resulting
diversity value~$k$.  We consider both the maximal and minimal
diversity variants.


\subsection{Instances and Setup}
As benchmark data, we use AFs from the ICCMA 2025 benchmark suite. For
each benchmark instance, we evaluate the proposed diversity
computation under the admissible, complete, and stable semantics.
Each run was subject to a fixed timeout of 300 seconds. Our analysis
focuses on instances for which an optimal solution was found within
the given timeout. To avoid the influence of a small number of
extremely large AFs, we further restrict the evaluation to AFs with at
most 1000 arguments, leaving us with 270 instances.
We conduct the experiments on a desktop machine equipped with an Intel
i9-11900K CPU running Windows 10, which is sufficient as we are
primarily interested in obtaining the maximum $k$ and minimum~$k$ for
the considered frameworks.

\begin{table}[t]
	\centering
		\begin{tabular}{l|ccc|ccc}
			\hline
			& \multicolumn{3}{c|}{Maximum diversity ($k_{\text{max}}$)} & \multicolumn{3}{c}{Minimum diversity ($k_{\text{min}}$)}\\
			Semantics  & $\mathrm{avg}(k)$ & $\mathrm{med}(k)$ & $\mathrm{avg}(t)$ (s) & $\mathrm{avg}(k)$ & $\mathrm{med}(k)$ & $\mathrm{avg}(t)$ (s) \\
			\hline
			$\adm$      & 102.352 & 71.0 & 1.591 & 1.0 & 1.0 & 0.91\\
			$\comp$     & 88.683  & 77.0 & 1.359 & 6.894 & 1.0 & 0.907\\
			$\stab$     & 101.5   & 93.0 & 0.911 & 12.478 & 2.0 & 0.587\\
			\hline
		\end{tabular}
	\caption{Summary of experimental results for AFs with at most 1000 arguments. For each semantics, we report the average and median diversity values and the average runtime (in seconds) for both maximal ($k_{\max}$) and minimal ($k_{\min}$) variants.}
	\label{tab:summary_1000}
\end{table}

\subsection{Results} In total, we evaluated 270 instances, each under
three semantics. Among these, 145 instances reached optimality within
the timeout under at least one semantics, while the remaining
instances either where unsatisfiable or exceeded the time limit
under all semantics. Table~\ref{tab:summary_1000} provides a summary
of the results aggregated over all instances in this range, reporting
the average and median diversity values, as well as the average
runtime per semantics, for both maximal and minimal variants.
To analyze how the minimum and maximum diversity distributes, we
provide Figure~\ref{fig:exfig} that plots per instance (X-axis) the
min (orange), max (blue), and median (dashed green) $k$-diversity
value (Y-axis).

\begin{figure}[t]
	\centering \includegraphics[width=.75\textwidth]{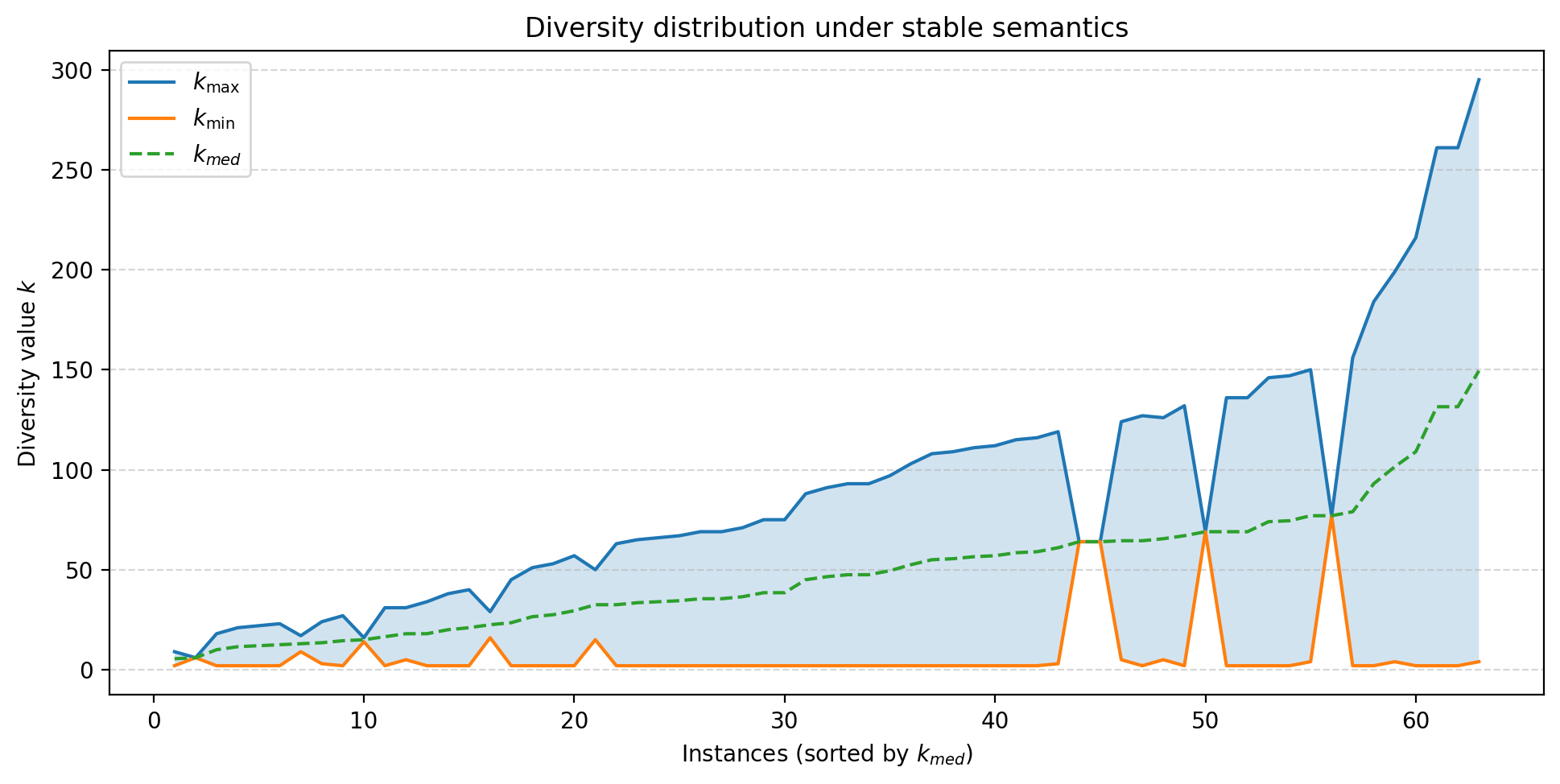}
	\caption{Distribution of min, med, and max diversity for instances
		under stable semantics. The instances are sorted by median
		diversity.}
	\label{fig:exfig}
\end{figure}

\subsection{Discussion}
The experimental results reveal characteristic patterns in how diversity values are distributed under the considered semantics.
Figure~\ref{fig:exfig} shows that, diversity values vary substantially across instances, indicating that the structure of extensions can
differ markedly even within the same AF.
In particular, the instance-wise distribution reveals that for a large
portion of instances, extensions remain relatively close to each
other, while a smaller set of instances admits extensions that are
structurally far apart.
This suggests that high diversity is not uniformly present, but rather
concentrated in specific frameworks that allow for fundamentally
different solutions.
Additional instance-wise diversity distributions under admissible and
complete semantics are provided in the appendix.

\section{Conclusion}
We introduce a quantitative notion of \emph{extension
	diversity} for abstract argumentation based on symmetric difference,
which makes explicit how strongly alternative, internally coherent
viewpoints in an AF diverge.
%
%
%
We provide a \emph{comprehensive complexity classification} for reasoning about
existence of $k$-diverse extensions, diversity covering specific
arguments, and optimization variants. 
We implement a $k$-diversity
computation 
and conduct an initial experimental
study, 
which indicates practical feasibility 
for medium-sized frameworks.

For future work, we 
are interested in completing the complexity landscape for open cases (e.g., for minimal $k$-diversity). 
To tame the high complexity, one could consider parameterized complexity or structure-aware encoding into Boolean satisfiability~\cite{FichteH0M21,MahmoodHGF26}.
%
Moreover, we propose diversity as a useful measure to consistent query answering in data- or knowledge-bases, following the natural connections to argumentation~\cite{MahmoodVBN24,0002HN25}.
Finally, extending the diversity notion to logic-based argumentation~\cite{0002M023,Hecher0M024} is another venue for future work.
%
%
\section*{Acknowledgments}
%
This research was funded in whole or in part by
the  Deutsche Forschungsgemeinschaft (DFG, German Research Foundation), grant TRR 318/3 2026 – 438445824 and the Ministry of Culture and Science of North Rhine-Westphalia (MKW NRW) within projects WHALE (LFN 1-04) funded under the Lamarr Fellow Network programme and project SAIL, grant NW21-059D,
the Austrian Science Fund (FWF) grant J4656, the French Agence
nationale de la recherche (ANR) grant ANR-25-CE23-7647; and 
Excellence Center at Linköping -- Lund in Information Technology
(ELLIIT) funded by the Swedish government and the Wallenberg AI,
Autonomous Systems and Software Program (WASP) funded by the Knut and
Alice Wallenberg Foundation.

\bibliography{main}


\appendix
\cleardoublepage

\section{Complete Proof Details}

\subsection{Proofs of Section~\ref{sec:conf}}

\claimsatnaive*
\begin{proof}[Proof of Claim]
Suppose $\varphi$ is satisfiable and let $\theta$ be a satisfying assignment, seen as a set of literals over $X$.
We let $S_\theta= \{\ell_{ij}\mid 1\leq i\leq m, 1\leq j\leq 3, \ell_{ij}\in C_i\cap \theta \}$ be a set of arguments.
Then, the set $S_\theta$ has size $m$ as each clause is satisfied and $S$ is conflict-free since $\theta$ is an assignment.
We let $S=S_\theta\cup\{a\}$ and $B=\{b\}$ be two sets.
Observe that $S$ is a naive extension in $F$ as no other argument from $A\setminus S$ can be added to it.
We conclude by observing that $d(S,B)=m+2$ since two sets do not share any argument.

Conversely, suppose $a$ and $b$ are $k$-diverse in $F$. 
The only naive set in $F$ containing $b$ is $B=\{b\}$, hence a witness for $b$ must be the set $B$ itself and it can not contain any other argument from $A\setminus\{b\}$ since $b$ attacks all those arguments.
By our claim, there must be a witness set $S$ for $a$ of size $m+1$.
Here, $S$ must also contain exactly one argument from each ``triangle'' to attain a size of $m$.
Therefore, there is a set $L$ of arguments $\ell_{ij} \in A$ for each $i\leq m$ and $j\leq 3$ such that $S=L\cup \{a\}$.
This yields a satisfying assignment $L$ for $\varphi$ as we prove next. 
First, $L$ does not contain two literals of opposite parity since set $S$ is conflict-free. Therefore $L$ is a valid assignment.
Secondly, $L$ contains one argument (or literal) for each $i\leq m$ (clause $C_i$) and therefore $L\models \varphi$.
\end{proof}

\claimshadow*
\begin{proof}[Proof of Claim]
\textbf{For (I)}, let $S\subseteq A$ and consider the set $S_*=\{s_*\mid s\in S\}$.
The only attacks for arguments in $S_*$ are of the form $\{s_*,t_*\}$ by definition of $R$. Therefore, for any $s_*,t_*\in S_*$: $\{s_*,t_*\}\in R$ iff $S_*$ is not conflict-free iff $\{s,t\}\in R$ (by definition) iff $S$ is not conflict-free.
\textbf{For naive semantics}, suppose $S$ is naive.
Then, $S_*$ is conflict-free due to the previous claim as $S$ is conflict-free.
Now, to prove that $S_*$ is naive, we let $\ell\in A'\setminus S_*$.
We consider the following two cases based on whether $\ell\in A$ or not.
(i) $\ell\in A$: we again have two sub-cases, 
(i-a) if $\ell\in S$ then $\ell_*\in S_*$ and hence $\{\ell,\ell_*\}\in R$ implies that $S_*\cup\{\ell\}$ is not conflict-free.
However, (i-b) if $\ell\not\in S$ then there is some $s\in S$ with $\{s,\ell\}\in R$ as otherwise $S$ would not be naive.
Then, $s_*\in S_*$ and hence $\{s_*,\ell\}\in R$, consequently $S_*\cup\{\ell\}$ is not conflict-free.
(ii) $\ell\in A'\setminus A$: due to $\ell\not \in S_*$, we have that $\{\ell,s\}\in R$ for some $s\in S$ as $S$ is naive. As a result, $\{\ell,s_*\}\in R$ by definition of $R$. Hence $S_*\cup\{\ell\}$ is not conflict-free.

From (i) and (ii), we conclude that $S_*$ is also naive. 
By analogous reasoning, we prove that if $S_*$ is naive then $S$ must also be naive, thus completing the proof to our first claim.

\textbf{For (II)}, let $L\subseteq A'$ be a conflict-free set and $\ell\in L$.
Then $L\setminus \{\ell\}$ is still-conflict-free.
Now, consider the set $L'= (L\setminus \{\ell\})\cup \{\ell_*\}$ and suppose $L'$ is not conflict-free.
Since $L\setminus \{\ell\}$ is conflict-free, any conflict over arguments in $L'$ must involve the argument $\ell_*$.
By definition, this involves the attack $\{\ell_*,r\}$ for some $r\in L$ such that  $r\in  \{t, t_*\}$ for argument $t\in A$.
The remaining cases ($\{\ell,\ell_*\}, \{\ell,r_*\}$) does not apply as $\ell\not\in L'$.
{However, this is the case iff either $\{\ell,t\}\in R$ when $\ell=s$, or $\{\ell_*,t\}\in R$ when $\ell=s_*$ for some $s\in A$. This contradicts with the conflict-freeness of $L$, since either $t\in L$ or $t_*\in L$ is true.}
\end{proof}

\claimcorrcnfnaive*
\begin{proof}[Proof of Claim]
For any satisfying assignment $\theta$ for $\varphi$, one can construct a naive set $S$ in $F$ (and hence also in $F'$) of size $m$ due to Claim~\ref{claim:sat-naive}. 
Then, following Claim~\ref{claim:shadow}, we let $S$ as one $\sem$-extension and its shadow $S_*$ as another.
Since $|S|= m$ and $S\cap S_*=\emptyset$, we have that $S$ and $S_*$ are $2m$-diverse.

Conversely, suppose $F$ admits two $\sem$-extensions $S$ and $T$ of diversity $2m$.
From $S$ and $T$, we construct a satisfying assignment for $\varphi$.

First, we consider the sets $S_o\dfn S\setminus T$ and $T_o \dfn T\setminus S$.
We have that $|S_o\cup T_o|\geq 2m$ by our assumption.
As a result, either both sets have size at least $m$, or at last one of these sets has size larger than $m$.
The case when both sets are of smaller size is contrary to our assumption that their union has size $\geq 2m$.
We assume wlog that $|S_o|\geq m$ and let $S^I_o = \{s\mid s\in S_o\cap A\} \cup \{t \mid t_*\in S_o\}$. 
It is easy to observe that $S_o^I\subseteq A$.
From the conflict-freeness of $S$ and the fact that $|S^I_o|\geq m$, it follows that $S^I_o$ is a satisfying assignment for $\varphi$ (as we outline next).
First, it is easy to see that $S^I_o$ is a set of literals over $X$ by definition of $S^I_o$.

Now, we prove that $S^I_o$ gives a valid assignment, i.e., it does not contain two literals of opposite parity.
We prove that no two literals in $S^I_o$ are negations of each other.
Suppose $s,t\in S^I_o$. 
Then $S_o$ contains one of the pair $\{s,t\}, \{s,t_*\}, \{s_*,t\}$, or $\{s_*,t_*\}$ and this is also true for $S$ since $S_o\subseteq S$.
Therefore, $\{x,y\}\not\in R$ follows for each $x\in \{s,s_*\}$ and $y\in\{t,t_*\}$ since $S$ is conflict-free.
Consequently, we have that $s\neq \bar t$ for any literal $s$ and $t$ in $S^I_o$ by definition of $R$ and hence $S^I_o$ is valid assignment.
That $S^I_o$ satisfies $\varphi$ follows due to the size argument since  $|S^I_o|=m$ and hence contains one literal (argument) from each clause (triangle of attacks).
\end{proof}

\claimdom*
\begin{proof}[Proof of Claim]
Notice that here we require naive semantics due to the subset-maximality as it guarantee that for every argument $v\in A$ and a naive set $S\subseteq A$, either $S$ contains $v$, or $S\cup\{v\}$ is not conflict-free and hence there is some $u\in S$ with $\{u,v\}\in R$.
To prove the claim, we observe that a dominating set $S$ in $G$ of size at most $k$ gives a naive extension $S\cup\{a\}$ in $F$ of size at most $k+1$. 
Then, our second naive extension is the set $\{b\}$ and this proves the claim.
Conversely, $\{b\}$ is the only naive set in $F$ containing $b$. 
Hence, to achieve two $(k+2)$-diverse naive extensions for arguments $a$ and $b$, there must be a naive set $S$ in $F$ of size at most $k$.
Moreover, for any set $S$ with $a\in S$, $S$ is naive iff for each argument $x\not\in S$, there is some $y\in S$ such that $\{x,y\}\in R$ . As a result, $S$ is a dominating set of size at most $k$.
\end{proof}

\subsection{Omitted Proofs of Section~\ref{sec:adm}}

\thmdivarg*
\begin{proof}
Let $F=(A,R)$ be an instance and $a,b\in A$ be two arguments.
For membership, one can guess two witness sets $S_a, S_b\subseteq A$ such that (i) $S_a$ and $S_b$ are $\sem$-extension, (ii) $a\in S_a$, $b\in S_b$, and  (iii) $d(S_a,S_b)= k$. 
The verification requires polynomial time for each $\sem\in\{\adm,\comp,\stab\}$.
For $\abovekdiversearg_\sem$ and $\belowkdiversearg_\sem$, we replace equality in (iii) by $\geq k$ and $\leq k$, respectively, which can still be performed in polynomial time.

For hardness, we reduce SAT to the existence problem of $k$-diverse extensions containing two arguments.
Let $\varphi=\{C_1,\dots, C_m\}$ be a SAT instance where each $C_i$ is a clause for $i\leq m$ over variables $V=\{x_1,\dots,x_n\}$.
Moreover, let $F_\varphi=(A,R)$ be the AF generated from $\varphi$ via the standard translation for admissible semantics.
We construct an AF $F'=(A',R')$ where $A'=A\cup \{b\}$, for a fresh argument $b\not\in A$, $R'=R\cup \{(\varphi,b)\}\cup \{(b,x) \mid x\in A\}$ and let $k=n+2$.
The set $\{b\}$ is trivially a $\sem$-extension in $F'$ for each $\sem\in\{\adm,\comp,\stab\}$ and the following statements are equivalent.
\begin{itemize}
	\item The formula $\varphi$ is satisfiable.
	\item 
	$F$ has some $\sem$-extension $S$ with $\varphi\in S$.
	\item $F'$ has some $\sem$-extension $S$ with $\varphi\in S$ such that  $d(S,B)= k$ for $B=\{b\}$.
\end{itemize}
As a result, $\varphi$ is satisfiable 
if and only if arguments $\varphi$ and $b$ are contained in two $k$-diverse $\sem$-extensions of $F'$ for $\sem\in\{\adm,\comp,\stab\}$.

Regarding the claim for $\belowkdiversearg_\sem$ and $\abovekdiversearg_\sem$: we observe that the same reduction as illustrated above for $k$-diverse extensions applies as we are forced to take an extension containing $\varphi$ which is possible iff $\varphi$ is satisfiable.
\end{proof}

\thmdivext*
\begin{proof}
For membership, guess two witness sets as before.

For hardness, we reconsider the reduction used in the proof of Theorem~\ref{thm:div-arg}.
Observe that, while $\{b\}$ is already an extension, we are looking for another extension, which can only come from $F_\varphi$. Hence the exact same reduction applies.
Here, the trick is that $b$ attacks all the arguments in $F_\varphi$ and is in turn only attacked by the argument $\varphi$. This eliminates all $\sem$-extensions of $F_\varphi$ that do not contain $\varphi$.

Suppose $\varphi$ is satisfiable.
Then, there exists a $\sem$-extension $S$ containing $\varphi$.
This yields two $k$-diverse $\sem$-extensions, namely $S$ and $\{b\}$ in $F$.
Conversely, suppose $\varphi$ is not satisfiable.
Then, $F$ only admits one $\sem$-extension, which is $\{b\}$.
Hence, $F$ does not admits two $k$-diverse $\sem$-extensions for any $\sem\in \{\adm,\comp,\stab\}$.
The argument for $k$ follows by observing that any extension containing $\varphi$ in our AF must contain $\varphi$, and literal arguments corresponding to a satisfying assignment, thus has size $n{+}1$ where $n$ is the number of variables in $\varphi$.
\end{proof}

\subsection{Omitted Proofs of Section~\ref{sec:semi}}

\lemrange*
\begin{proof}
For brevity, we write $S_{\bar a} \dfn S\setminus \{a\} $ and $S_{\bar a b}\dfn S_{\bar a}\cup \{b\}$ for the set $S$ with $a\in S$.

\textbf{For (1)}: the first claim is trivial for the conflict-free semantics as we observe next.
Let $S\subseteq A$ be conflict-free such that $a\in S$, then $S_{\bar a}$ is also conflict-free, as $S_{\bar a}\subseteq S$. This implies that $S_{\bar a b}$ is also conflict-free since $b$ and $a$ have same incoming and outgoing attacks.
Analogous reasoning applies to the converse direction by reversing the roles of $a$ and $b$.

We now consider the case of admissibility.
Suppose $S$ is admissible and assume there are arguments $s\in S_{\bar a}$ and $c\in A\setminus S_{\bar a}$ such that $(c,s)\in R$. 
Since $S$ is admissible, we have two cases:
(i) either $s$ is defended by some argument in $S_{\bar a}$, and hence $S_{\bar a}$ is admissible, or  
(ii) $s$ was defended by $a$ and hence $(a,c)\in R$.
We consider the second case. 
By our assumption, $(b,c)\in R$ and thus the set $S_{\bar a b}\dfn S_{\bar a}\cup \{b\}$ defends $s$ against the attack by $c$.
Moreover, $S_{\bar a b}$ is also conflict-free by our first claim.
Since this holds for every argument $s\in S_{\bar a}$, the set $S_{\bar a b}$ is admissible in both cases.
For the reverse direction, similar reasoning applies by changing the roles of $a$ and $b$.

\textbf{For (2)}: let $x\in S^+_R$.
If $x\in S$ then we consider two cases: 
(i-a) if $x=a$, then $x\in S'^+_R$ since $b\in S'$ and $(b,a)\in R$.
(i-b) if  $x\neq a$, then $x\in S'$ and hence $x\in S'^+_R$ since $S\subseteq S'^+_R$ holds for any set $S\subseteq A$ of arguments.
Next, if $x\in S^+_R\setminus S$, we again consider the following two cases:
(ii-a) if $(a,x)\in R$, then $(b,x)\in R$ by our assumption and hence $x\in S'^+_R$ since $b\in S'$.
(ii-b) if $(y,x)\in R$ for some $y\neq a$, then $y\in S'$ and hence $x\in S'^+_R$ once again.
\end{proof}

\claimsemisigmap*
\begin{proof}[Proof of Claim]
The first equivalence follows by the definition of the standard reduction~\cite{DVORAK2010425}.
For the second, we observe that any $\sem$-extension in $F_\Phi$ is also a $\sem$-extension in $F'$ since the only difference between the two AFs lies in the argument $\bar\varphi_c$, which is attacked by (and attacks) both $\varphi$ and $\bar\varphi$ and it does not participate in any other attack.
Therefore, if $\Phi$ is true, then no $\sem$-extension in $F'$ contains $\bar\varphi$ and the same applies to its copy argument $\bar\varphi_c$ in $F'$ due to Lemma~\ref{lem:range}.
As a result, no extension in $F'$ contains either argument and hence $\bar\varphi$ and $\bar\varphi_c$ are not $2$-diverse in $F'$.
Conversely, suppose $\Phi$ is false.
Hence $\bar\varphi$ is contained in some $\sem$-extension $S$ of $F'$.
We let $S'\dfn S\setminus \{\bar\varphi\} $ and note that $S_c\dfn S' \cup \{\bar\varphi_c\}$ is a $\sem$-extension in $F'$ due to the 2nd item in Lemma~\ref{lem:range}.
Since $\bar\varphi$ does not attack any argument in $F_\Phi$ except $\varphi$, no argument in $S'$ requires $\bar\varphi$ to defend it. 
The claim that $S_c$ is semi-stable (resp., stage) follows from the fact that $S$ is semi-stable (stage) and that both sets have the exact same range, i.e., $S^+_{R'}= (S_c)^+_{R'}$.
Here, we once again apply 2nd item in Lemma~\ref{lem:range} for the range of two sets.
We conclude by observing that $d(S, S_c)=2$.
\end{proof}

\thmdivextsemi*
\begin{proof}
For membership, guess two witness sets as before.

For hardness, we reconsider the reduction from the proof of Theorem~\ref{thm:div-arg-semi} an update it as follows.
Let $F_\Phi=(A,R)$ be the AF constructed via the standard translation for the semi-stable (stage) semantics.
Then, we let $k= 2|X|+1$ and construct $F'=(A',R')$ as follows:
\begin{align*}
	A'\dfn & A\cup \{\bar\varphi_i \mid i\leq k\}, \\
	R'\dfn & R \cup \{(\varphi,\bar\varphi_i), (\bar\varphi_i,\varphi), (\bar\varphi,\bar\varphi_i),(\bar\varphi_i,\bar\varphi) \mid i\leq k\}.
\end{align*}
Our reduction follows the same intuition as before, but now the set of arguments $\{\bar\varphi_1,\dots,\bar\varphi_k\}$ serves as copies of $\bar\varphi$.
The correctness also follows by the same reasoning as in Claim~\ref{claim:semi-sigmap} as we prove in the following claim.

\begin{claim}
	$\Phi$ is false iff $F'$ admits two  $(k+1)$-diverse $\sem$-extensions for  $\sem\in\{\semi,\stag\}$.
\end{claim}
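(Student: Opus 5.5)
The plan is to prove both directions of the claim by transferring extensions between $F_\Phi$ and $F'$, in the same way as in Claim~\ref{claim:semi-sigmap}. We use Lemma~\ref{lem:range} for the copies $\bar\varphi_1,\dots,\bar\varphi_k$, which are pairwise non-attacking and $R'$-equivalent to $\bar\varphi$ up to the mutual attacks between $\bar\varphi$ and the copies. We then bound the size of extensions when $\Phi$ is true.

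\emph{($\Rightarrow$)} Suppose $\Phi$ is false.
\begin{enumerate}
\item By the first equivalence of Claim~\ref{claim:semi-sigmap}, some $\sem$-extension $S$ of $F_\Phi$ contains $\bar\varphi$.
\item $S$ remains a $\sem$-extension in $F'$. Indeed, $\bar\varphi\in S$ attacks every copy, so $S^+_{R'}=S^+_R\cup\{\bar\varphi_i\mid i\le k\}$, which is the largest range any set can have on the new arguments. Range-maximality is therefore inherited, using the same correspondence of extensions between $F_\Phi$ and $F'$ that was used in Claim~\ref{claim:semi-sigmap}.
\item Let $T\dfn (S\setminus\{\bar\varphi\})\cup\{\bar\varphi_1,\dots,\bar\varphi_k\}$. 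The set $T$ is conflict-free because the copies attack only $\varphi$ and $\bar\varphi$, and $\varphi\notin S$.
\item $T$ is admissible where needed. The copies are attacked only by $\varphi$ and $\bar\varphi$, and the copies attack both of these. No other argument of $S$ relied on $\bar\varphi$ for defence, because $\bar\varphi$ attacks only $\varphi$ and the copies.
\item $T$ has the same range as $S$. $T$ attacks $\varphi$ and $\bar\varphi$ and contains all copies, and otherwise its range agrees with that of $S$. This is the argument of item~2 of Lemma~\ref{lem:range}, applied once per copy.
\item Hence $T$ is semi-stable (stage) exactly as $S$ is, and $d(S,T)=1+k$.
\end{enumerate}

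\emph{($\Leftarrow$)} Suppose $\Phi$ is true. We show that every pair of $\sem$-extensions of $F'$ has diversity at most $2|X|<k+1$.
\begin{enumerate}
\item No $\sem$-extension of $F'$ contains $\bar\varphi$ or any $\bar\varphi_i$. For $\bar\varphi$ this follows from the known property of the reduction, transferred to $F'$ as in Claim~\ref{claim:semi-sigmap}. For $\bar\varphi_i$: if an extension $T$ contained some copies, then swapping all copies in $T$ for $\bar\varphi$ would give a set with the same range by Lemma~\ref{lem:range}. That set would be an extension containing $\bar\varphi$, a contradiction.
\item Consequently every extension contains $\varphi$, since otherwise $\varphi$ could not be in its range.
\item Such an extension contains no $C_i$, because each $C_i$ attacks $\varphi$. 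It contains neither $b$ nor any argument of $Y_*\cup\bar Y_*$, since these are self-attacking. By conflict-freeness it contains at most one of $x,\bar x$ for each $x\in X$.
\item Thus every extension is $\{\varphi\}$ plus at most $|X|$ literal arguments. Any two extensions share $\varphi$, so their distance is at most $2|X|=k-1<k+1$.
\end{enumerate}

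The main obstacle is justifying that extensions of $F_\Phi$ and $F'$ correspond, with range-maximality preserved in both directions. For stage semantics the comparison must range over all conflict-free sets, which may contain copies. I would handle this once, generally: every conflict-free set of $F'$ can be replaced by one without copies (or with all copies) having range at least as large. This relies on the fact that a set containing $\bar\varphi$ or any copy attacks both $\varphi$ and $\bar\varphi$, and that adding all copies never hurts. With that normalisation, maximality of ranges coincides in the two frameworks.
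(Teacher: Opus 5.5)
Your proposal is correct and follows the paper's proof. When $\Phi$ is false, you replace $\bar\varphi$ in a witnessing extension by all $k$ copies to get distance $k+1$; when $\Phi$ is true, every extension is $\varphi$ plus at most $|X|$ literal arguments, so any two are at most $2|X|$-diverse. You are also more careful than the paper about the extension correspondence between $F_\Phi$ and $F'$ and about the copies not being exactly $R'$-equivalent to $\bar\varphi$.

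One small repair is needed in step~2 of the reverse direction. The reason you give is wrong, because the clause arguments $C_i$ also attack $\varphi$, so $\varphi$ can lie in the range of a set that does not contain it. Argue instead that $\bar\varphi$ is attacked only by $\varphi$ and the copies: a set containing none of $\varphi,\bar\varphi,\bar\varphi_i$ can therefore be enlarged by $\bar\varphi$, giving a strictly larger range. Alternatively, simply cite the known property that $\varphi$ lies in every extension when $\Phi$ is true.
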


\begin{proof}[Proof of Claim]
	Suppose, $\Phi$ is true. 
	Then no $\sem$-extension in $F'$ contains $\bar\varphi$ and the same applies to its copy arguments $\bar\varphi_i$ in $F'$ due to Lemma~\ref{lem:range}.
	Observe that, any $\sem$-extension in $F'$ containing $\varphi$ arises due to a satisfying assignment for $\varphi$, and must be of size $|X|$. 
	Hence two $\sem$-extensions in $F'$ can only be maximally $2|X|$-diverse (when two satisfying assignments give completely diverse extensions, but still contain $\varphi$ as a common argument).
	Consequently, $F'$ does not admit two $(k+1)$-diverse extensions.
	
	Conversely, suppose $\Phi$ is false. 
	Hence, $\bar\varphi$ is contained in some $\sem$-extension $S$ of $F'$.
	We let $S'\dfn S\setminus \{\bar\varphi\} $ and observe that $S_c\dfn S' \cup \{\bar\varphi_i\mid i\leq k\}$ is a $\sem$-extension in $F'$ following the same reasoning as in Claim~\ref{claim:semi-sigmap}.
	We conclude by observing that $d(S, S_c)=k+1$.
	Moreover, it is easy to see that the reduction also applies to ${\geq k}$-diverse extensions.
\end{proof}
\end{proof}

\divargpref*
\begin{proof}
For membership, guess two witness sets as before.

For hardness, we utilize the reduction from~\cite[Reduction 3.7]{flap/DvorakD17} proving $\PiP$-hardness of skeptical acceptance with preferred semantics.
Given a QBF $\Phi = \forall Y \exists Z.\varphi$ where $\varphi\dfn\bigwedge_{i=1}^m C_i$ is a CNF-formula with clauses $C_i$ over variables $X = Y \cup Z$. 
The standard translation for preferred semantics yields the AF $F_\Phi=(A,R)$, where:
\begin{align*}
	A \dfn & \{\varphi,\bar \varphi\} \cup \{C_1 , \dots, C_m\} \cup X \cup \bar X, \\
	R \dfn &\{\,(C_i,\varphi)\mid i\leq m\,\}\cup \{\,(x,\bar x),(\bar x,x)\mid x\in X\,\}\\
	&\cup\{\,(x,C_i)\mid x\in C_i\,\}\cup\{\,(\bar x,C_i)\mid \bar x\in C_i\,\} \\
	& \cup \{(\varphi, \bar\varphi), (\bar\varphi, \bar\varphi)\,\}\cup \{\,(\bar\varphi, z), (\bar\varphi, \bar z)\mid z\in Z\,\}.
\end{align*}
It is known that every assignment $\theta_Y$ over $Y$ (seen as a set of literals) is an admissible set in $F_\Phi$.
Moreover, $\theta_Y$ yields a preferred extension in $F_\Phi$ iff there is a preferred extension in $F_\Phi$ (namely, $\theta_Y$) not containing the argument $\varphi$ iff the formula $\forall Y \exists Z.\varphi$ is false.

To complete our construction, we consider the AF $F'=(A',R')$ by taking a fresh argument $b\not\in A$, such that: 
\begin{align*}
	A'\dfn & A\cup \{b,\bar b\} \\
	R'\dfn & R \cup \{(\varphi,b)\} \cup \{(b,x) \mid x\in A\}\\
	& \cup\{(b,\bar b), (\bar b, b)\} \cup \{(y,b),(\bar y,b)\mid y\in Y\}
\end{align*}
Intuitively, the set $\{b\}$ is trivially a preferred extension in $F'$ whereas $\bar b$ alone is (although admissible) not a preferred set itself.
Moreover, if some assignment $\theta_Y$ over $Y$ is a preferred extension in $F_\Phi$, then $\theta_Y\cup\{\bar b\}$ is a preferred set in $F'$.
\future{\yasir{check later: maybe we dont need attacks from Y to b as we have $\bar b$.}}
We let $k= |Y|+2$ and prove the following claim.
\begin{claim}
	$\Phi$ is false iff argument $b$ and $\bar b$ are $k$-diverse in $F'$ under preferred semantics.
\end{claim}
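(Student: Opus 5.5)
The plan is to pin down exactly which preferred extensions of $F'$ contain $b$ and which contain $\bar b$, and then compare their sizes. Together with the facts about $F_\Phi$ recalled above, this gives the claim by counting.

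\textbf{Step 1 (extensions containing $b$).} I would show that $\{b\}$ is the only preferred extension of $F'$ containing $b$.
\begin{itemize}
\item $\{b\}$ is conflict-free, since $b$ has no self-attack.
\item Its attackers are $\varphi$, $\bar b$ and all literals $y,\bar y$ with $y\in Y$. All of these lie in $A\cup\{\bar b\}$, and $b$ attacks each of them, so $\{b\}$ defends itself and is admissible.
\item Every other argument of $A'$ lies in $A\cup\{\bar b\}$ and is attacked by $b$. Hence no proper superset of $\{b\}$ is conflict-free, and $\{b\}$ is the unique preferred extension containing $b$.
\end{itemize}

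\textbf{Step 2 (extensions containing $\bar b$).} I would establish a bijection $E\mapsto E\cup\{\bar b\}$ from $\pref(F_\Phi)$ onto the preferred extensions of $F'$ containing $\bar b$.
\begin{itemize}
\item $\bar b$ has no attacks to or from $A$, and its only attacker $b$ is counter-attacked by $\bar b$ itself.
\item So for $E\subseteq A$, the set $E\cup\{\bar b\}$ is admissible in $F'$ iff $E$ is admissible in $F_\Phi$. The new attacks from $b$ onto $A$ are all defended by $\bar b$, and the attacks into $b$ only help.
\item Any admissible set of $F'$ not containing $b$ can be extended by $\bar b$ and remain admissible. Hence the preferred extensions avoiding $b$ all contain $\bar b$, and restricting to $A$ preserves $\subseteq$-maximality in both directions.
\end{itemize}
Since $b\notin E\cup\{\bar b\}$, we get $d(\{b\},E\cup\{\bar b\})=|E|+2$. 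The claim therefore reduces to: $F_\Phi$ has a preferred extension of size exactly $|Y|$ iff $\Phi$ is false.

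\textbf{Step 3 (sizes of preferred extensions of $F_\Phi$).} This is the main obstacle, since the recalled facts only speak about $\theta_Y$ and about $\varphi$, not about sizes. I would classify $E\in\pref(F_\Phi)$ into two cases.
\begin{itemize}
\item \emph{Case $\varphi\notin E$.} Then no $z$-literal can be defended against $\bar\varphi$, because only $\varphi$ attacks $\bar\varphi$. The clause arguments and $\bar\varphi$ are never in admissible sets here. So $E$ consists of $Y$-literals only, and maximality forces exactly one literal per $y$. Thus $E=\theta_Y$ with $|E|=|Y|$.
\item \emph{Case $\varphi\in E$.} Defending $\varphi$ against the clause arguments, together with maximality (all literals are now defended against $\bar\varphi$), forces a full assignment over $X$. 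Hence $|E|=|X|+1>|Y|$, provided $Z\neq\emptyset$.
\end{itemize}
The condition $Z\neq\emptyset$ can be assumed w.l.o.g. by adding a dummy existential variable. With this, a preferred extension of size $|Y|$ exists iff some $\theta_Y$ is preferred in $F_\Phi$, which by the recalled property of the standard translation holds iff $\Phi$ is false.

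\textbf{Step 4 (conclusion).} By Step 1 any witness pair containing $b$ and $\bar b$ has the form $\{b\}$ and $E\cup\{\bar b\}$, at distance $|E|+2$, so the distance equals $k=|Y|+2$ exactly when $|E|=|Y|$. Combining Steps 1–3 proves the claim. $\SigmaP$-hardness of $\kdiversearg_\pref$ then follows.

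The same construction handles $\belowkdiversearg_\pref$, because every candidate distance is either $|Y|+2$ or $|X|+3>k$. Membership follows by guessing both extensions and checking preferredness with a $\co\NP$ oracle.
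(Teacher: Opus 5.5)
Your route is the paper's own: $\{b\}$ is the only preferred extension containing $b$, those containing $\bar b$ are $E\cup\{\bar b\}$ with $E$ preferred in $F_\Phi$, and one compares $|E|$ with $|Y|$ versus $|X|+1$. Steps~1 and~2, and the case $\varphi\in E$ of Step~3, are correct and more careful than the paper's sketch.

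\textbf{The gap.} It is in the case $\varphi\notin E$ of Step~3. The assertion that clause arguments never lie in such admissible sets is false for a clause all of whose literals are universal. Suppose such a $C_i$ is falsified by $\theta_Y$. Every attacker $\ell\in C_i$ is counter-attacked by $\bar\ell\in\theta_Y$, and $C_i$ attacks only $\varphi$. So $\theta_Y\cup\{C_i\}$ is admissible and $\theta_Y$ is not preferred. Concretely, take $\Phi=\forall y\exists z.\,(y)\wedge(z)$ with $C_1=(y)$; this $\Phi$ is false. The preferred extensions of $F_\Phi$ are $\{\bar y,C_1\}$ and $\{y,z,\varphi\}$. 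Hence in $F'$ the arguments $b$ and $\bar b$ are $4$- and $5$-diverse but not $3$-diverse, where $k=|Y|+2=3$. So the claim as stated fails on such inputs. The property you recall, that some $\theta_Y$ is itself preferred iff $\Phi$ is false, breaks exactly here. The paper's proof relies on the same hidden assumption when it writes ``(namely, $\theta_Y$)'' and then uses $\theta_Y\cup\{\bar b\}$ as the witness.

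\textbf{The repair.} Assume every clause contains an existential literal. You can ensure this by replacing each purely universal clause $C$ with $(C\vee z_C)\wedge(\bar z_C)$ for a fresh existential $z_C$, which preserves truth. Alternatively, give clause arguments self-attacks, as the paper itself does later for semi-stable and stage semantics. Then, when $\varphi\notin E$, a clause argument would need defence against its existential attacker $\ell$. That requires $\bar\ell\in E$, which in turn requires $\varphi\in E$ to counter $\bar\varphi$, a contradiction. So $E=\theta_Y$, as you claim.

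You should also prove the property about $\theta_Y$ rather than cite it.
\begin{itemize}
\item If $\theta_Y$ has no satisfying extension, consider an admissible superset containing $\varphi$. It must defend $\varphi$ against every $C_i$ using a conflict-free set of literals extending $\theta_Y$, which would yield a satisfying extension. Without $\varphi$, no $Z$-literal or clause argument can be added. Hence $\theta_Y$ is preferred.
\item If $\Phi$ is true, then for a suitable $\theta_Z$ the set $\theta_Y\cup\theta_Z\cup\{\varphi\}$ is an admissible proper superset of $\theta_Y$.
\end{itemize}
With this normalization and argument in place, your Steps~1--4 go through unchanged.
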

\begin{proof}[Proof of Claim]
	If $\Phi$ is false then there is a preferred extension $\theta_Y$ in $F_\Phi$ which does not contain $\varphi$.
	Hence $F'$ admits a preferred extension $\theta_Y\cup\{\bar b\}$.
	Together with $B=\{b\}$, this yields two $k$-diverse extensions in $F'$.
	
	Conversely, if $\Phi$ is true, then every preferred extension in $F_\Phi$ contains $\varphi$.
	Each such preferred set $S$ also yields a preferred extension $S\cup\{\bar b\}$ in $F'$. 
	Here, the argument $b$ is not conflict-free with any $S$ as each $S$ contains $\varphi$.
	As a result, the only preferred set in $F'$ containing $b$ is the set $B=\{b\}$.
	Furthermore, each preferred set $S$ in $F'$ corresponding to a satisfying assignment must have size $|X|+2$ (one literal per each variable, and arguments $\{\varphi, \bar b\}$).
	Consequently, $F'$ can not have two $k$-diverse preferred extensions for $b$ and $\bar b$ as $|X|>|Y|$ (assuming that $Z\neq\emptyset$).
\end{proof}

We conclude by observing that the reduction also applies to $\belowkdiversearg_\pref$.
This holds since $\Phi$ is false iff $F'$ admits two preferred extensions containing arguments $b$ and $\bar b$, respectively, which are $\leq k$ diverse.
Here, we once again utilize the fact that even if $\varphi$ is satisfiable, any preferred set arising from a satisfying assignment must have size $|X|+2$.
\end{proof}

\thmdivextpref*
\begin{proof}
For membership, guess two witness sets as before.

For hardness, we use the similar idea as the construction in proof of Theorem~\ref{thm:div-arg-pref}.
Precisely, we do not use the argument $\bar b$.
A possible issue is that although $\{b\}$ is a preferred extension. 
Other preferred sets in $F'$ can either have size $|Y|$ (if $\Phi$ is false) or $|X|+1$ (otherwise).
Hence, $F'$ might already have two $|Y|$-diverse extensions coming from two satisfying assignments which share certain arguments and the equivalence between evaluation of $\Phi$ and achievable diversity breaks down.
Nevertheless, we adjust our reduction to model that both $Y$ and $X$ are of even cardinality.
This way, one extension containing only literals over $Y$, and another one  as $\{b\}$ would yield a diversity of $|Y|+1$, which is an odd number.
In contrast, any two preferred extensions corresponding to satisfying assignments of $\Phi$ would contain all the literals, and hence would be of even size. In such an event, whenever two preferred extensions in $F'$ share arguments, the diversity decreases by two and hence still remain an even number.


Formally, we consider the AF as in the proof of Theorem~\ref{thm:div-arg-pref} and construct an AF $F'=(A',R')$, where,
\begin{align*}
	A'\dfn & A\cup \{b\}\cup \{y_d \text{ if $|Y|$ is odd}\}\cup \{z_d \text{ if $|Z|$ is odd} \} \\
	R'\dfn & R \cup \{(\varphi,b)\} \cup \{(b,x) \mid x\in A\}\cup \{(b,y_d),(b,z_d)\}\\
	& \cup \{(y,b),(\bar y,b)\mid y\in Y\}
\end{align*}
Intuitively, $\{b\}$ is a preferred extension in $F'$ as before.
Whereas the remaining fresh arguments (if any) are only attacked by $b$, and are used to reason about the size of preferred extensions in $F'$.
For brevity, we assume henceforth that both sets $Y$ and $Z$ have odd sizes, and write $Y_d$ to mean $Y\cup\{y_d\}$  and an assignment $\theta_{Y_d}$ over $Y$ to indicate $\{y_d\} \cup \theta_Y$.

We let $k= |Y_d|+1$ (which is now an odd number) and prove the following claim.

\begin{claim}
	$\Phi$ is false iff $F'$ admits two $k$-diverse preferred extensions.
\end{claim}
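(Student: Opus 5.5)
The plan is to establish the two directions of the claim separately, using the characterisation from the proof of Theorem~\ref{thm:div-arg-pref}: $\Phi$ is false iff some assignment $\theta_Y$ over $Y$ is a preferred extension of $F_\Phi$ not containing $\varphi$.

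\textbf{Step 1: classify the preferred extensions of $F'$.} The set $\{b\}$ is preferred: it attacks every argument of $A$ as well as $y_d,z_d$, and it defends itself against $\varphi$ and the $Y$-literals. Any other preferred set $S$ must avoid $b$. Since the only attackers of $y_d,z_d$ are $b$, the fresh arguments are defended by $S$ as soon as $S$ attacks $b$, that is, as soon as $S$ contains $\varphi$ or a $Y$-literal. Maximality then forces $y_d,z_d\in S$, because every nonempty admissible set here contains such an argument. Hence the preferred extensions of $F'$ are exactly $\{b\}$ and the sets $S\cup D$, where $S$ is preferred in $F_\Phi$ and $D$ is the set of fresh parity arguments. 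By the properties of the translation in~\cite[Reduction 3.7]{flap/DvorakD17}, such an $S$ is either some $\theta_Y$ (possible only when $\Phi$ is false) or contains $\varphi$ together with a literal for every variable of $X$.

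\textbf{Step 2: compute sizes and parities.} Extensions of the first kind have size $|Y_d|$ plus possibly $z_d$. Extensions of the second kind have size $|X|+1+|D|$, and by the padding $|Y_d|$ and $|Z_d|$ are even. Two extensions of the second kind have equal size, so their symmetric difference is even. An extension of the second kind together with $\{b\}$ gives its size plus $1$, which is a fixed number far from $k$ when $Z\neq\emptyset$.

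\textbf{Step 3: the two directions.} If $\Phi$ is false, pair $\theta_Y\cup D$ with $\{b\}$; they are disjoint, so the diversity is $|\theta_Y\cup D|+1$. If $\Phi$ is true, only $\{b\}$ and extensions of the second kind exist, and by Step 2 their pairwise diversities are either even or equal to the single value $|X|+|D|+2$. Neither equals the odd $k$.

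\textbf{Main obstacle.} The delicate point is the exact bookkeeping of the fresh arguments. As Step 1 shows, $z_d$ is also defended by $\theta_Y$, so it appears in the $\theta_Y$-extension too, not only in the satisfying-assignment extensions. I would first verify whether $k=|Y_d|+1$ really matches $|\theta_Y\cup D|+1$, or whether it must be adjusted to count $z_d$. If it must be adjusted, I would instead make $z_d$ attacked by $\bar\varphi$ or by some $Z$-related argument, so that only satisfying-assignment extensions can contain it. I would then re-run the parity argument so that the target value $k$ is odd while every diversity available when $\Phi$ is true is even or equals the single excluded value $|X|+|D|+2$.
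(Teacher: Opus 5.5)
Your Step 1 is correct, and it already decides the obstacle you defer, against the paper's construction. In the case the paper fixes ($|Y|$ and $|Z|$ both odd), $z_d$ is attacked only by $b$, and every $Y$-literal attacks $b$. So $\theta_Y\cup\{y_d,z_d\}$ is admissible, and $\theta_{Y_d}=\theta_Y\cup\{y_d\}$ is \emph{not} preferred in $F'$. The paper's proof of the direction ``$\Phi$ false $\Rightarrow$ two $k$-diverse extensions'' pairs exactly this non-preferred set with $\{b\}$, so that step fails.

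It is worse than a bookkeeping slip. Every preferred extension of $F'$ then has odd size: $\{b\}$ has size $1$, the refuting extensions $\theta_Y\cup\{y_d,z_d\}$ have size $|Y|+2$, and the satisfying extensions $\theta\cup\{\varphi,y_d,z_d\}$ have size $|X|+3$. Hence every diversity $|S|+|T|-2|S\cap T|$ is even, and the odd value $k=|Y_d|+1$ is never attained, whatever the truth value of $\Phi$. The same happens whenever $|Z|$ is odd and $|Y|$ is even. So the claim, for this $F'$ and this $k$, is false whenever $|Z|$ is odd, and your Step 3 cannot be completed for it. Adjusting only $k$ to $|Y_d|+2$ does not help either: that value is even, so your parity argument for the case where $\Phi$ is true is lost, since two satisfying-assignment extensions can have any even diversity up to $2|X|$.

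Your proposed repair is sound. Let $\bar\varphi$ also attack $z_d$. Since $\bar\varphi$ is self-attacking and $b$ attacks $z_d$, only $\varphi$ can defend $z_d$. The refuting extensions then become $\theta_Y\cup\{y_d\}$, which give exactly $k$ when paired with $\{b\}$. When $\Phi$ is true, all extensions other than $\{b\}$ have the common size $|X_d|+1$, so their mutual diversities are even, and pairing one with $\{b\}$ gives $|X_d|+2$, which is also even.

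An even simpler repair is to drop $z_d$ altogether. Satisfying-assignment extensions always have equal size, so their mutual diversities are even regardless of $|Z|$. Pairing one with $\{b\}$ gives $|X|+|D|+2=|Y_d|+|Z|+2>k$. The only parity requirement is therefore that $|Y|+|D|$ be even.

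One further caveat you (and the paper) inherit from the standard translation: the dichotomy ``$\theta_Y$, or $\varphi$ plus a full assignment'' needs every clause to contain a $Z$-literal. A clause argument over $Y$ only that is falsified by $\theta_Y$ is defended by $\theta_Y$ against both its literals and $b$, so it would join the extension and shift its size. This is harmless, because such a non-tautological clause makes $\Phi$ trivially false and can be removed in preprocessing.
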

\begin{proof}[Proof of Claim]
	If $\Phi$ is false then there is a preferred extension $\theta_Y$ in $F_\Phi$ and hence a preferred extension $\theta_{Y_d}$ in $F'$ which does not contain $\varphi$.
	This yields two $k$-diverse preferred extensions $\theta_{Y_d}$ and $\{b\}$ in $F'$.
	
	Conversely, if $\Phi$ is true, then every preferred extension in $F_\Phi$ contains $\varphi$.
	Each such preferred set $S$ also yields a preferred extension in $F'$ as the argument $\varphi\in S$ attacks the only new attacker $b$ in $F'$.
	Furthermore, each preferred set $S$ in $F'$ must have a size $|X_d|+1$ where $X_d= Y_d\cup Z_d$. 
	Since, $\varphi \in S$ for each $S$, any two preferred extensions in $F'$ must be at most $2|X_d|$-diverse. 
	Finally, if two preferred sets $S_1,S_2$ in $F'$ share arguments, the diversity decreases by $2$ as the common argument is dropped from both sets.
	Consequently, $F$ can not have two $k$-diverse preferred extensions since $k=|Y|+1$ is an odd integer.
\end{proof}
%
%
\end{proof}

\subsection{Omitted Proofs of Section~\ref{sec:max}}
For proofs in this section, we  exploit the reductions used in earlier sections for corresponding semantics and reduce from SAT-UNSAT problem for (quantified) Boolean formulas.
The idea is to (i) encode two SAT (or QBF) instances to one AF and (ii) define the diversity threshold $k$ in such a way that only the \emph{correct} formula is satisfied (true).
Precisely, the resulting AF admits two $k$-diverse extensions iff the first instance is satisfied (true) and $F$ does not admits any $k'$-diverse extension for $k'>k$ iff the second instance is not satisfiable (true).

In this section, we construct final AFs for our proofs by taking union of certain \emph{sub-AFs}. Moreover, we will also write ``$\sem$-extension'' to denote $\sem$-extensions of a sub-AF.
In each proof below, we establish an intuition on how the sub-AFs are combined and outline all the possible issues for any technical trick that we might require.
This intuition is followed by a formal definition of the constructed AF in each case.

\thmconfmaxkdiv*
\begin{proof}
For membership, observe that an AF $F=(A,R)$ has maximum $k$-diverse $\sem$-extensions iff 
(i) $F$ admits a pair $S_p, T_p$ of $\sem$-extensions with $d(S_p,T_p)=k$, and 
(ii) for each pair $S_n,T_n$ of $\sem$-extensions in $F$, $d(S_n,T_n)\leq k$.
Here, (i) is an $\NP$ and (ii) is a $\co\NP$ problem. Thus the problem $\maxkdiversecover_\sem$ is in $\DP$ for each $\sem\in \{\cnf,\nai\}$.

For hardness, we reduce from SAT-UNSAT, following the reduction idea used in the proof of Theorem~\ref{thm:conf-atleast-div-arg} as an intermediate step.
Let $\varphi \dfn \bigwedge_{i\leq m} C_i$ where $C_i= (p_{i1}\lor p_{i2}\lor p_{i3})$, and $\psi \dfn \bigwedge_{i\leq n} D_i$ where $D_i= (q_{i1}\lor q_{i2}\lor q_{i3})$ be two CNFs over variables $X$ and $Y$, respectively.
We construct the two sub-AFs $F_\varphi=(A_\varphi, R_\varphi)$ and $F_\psi=(A_\psi,R_\psi)$ via the translation for conflict-free semantics.
\begin{align*}
	A_\varphi\dfn & \{p_{ij} \mid 1\leq i\leq m, 1\leq j\leq 3 \},  \\
	R_\varphi \dfn & \{\{p_{i1}, p_{i2}\}, \{p_{i2}, p_{i3}\}, \{p_{i3}, p_{i1}\} \mid 1\leq i\leq m\} \\
	& \cup \{\{p,p'\} \mid  p=x, p'=\bar x \text{ for some } x\in X\} \\
	A_\psi \dfn & \{q_{ij} \mid 1\leq i\leq n, 1\leq j\leq 3 \}, \\
	R_\psi \dfn &  \{\{q_{i1}, q_{i2}\}, \{q_{i2}, q_{i3}\}, \{q_{i3}, q_{i1}\} \mid 1\leq i\leq n\} \\
	& \cup \{\{q,q'\} \mid  q=y, q'=\bar y \text{ for some } y\in Y\}. 
\end{align*}%
Due to Claim~\ref{claim:sat-naive}, we know that $\varphi$ (resp., $\varphi$) is satisfiable iff $F_\varphi$ ($F_\psi$) admits a conflict-free set of size $m$ ($n$).
However, when $\varphi$ is false, we don't know (at most) how many clauses of $\varphi$ can still be satisfied. Hence, we have no control over the size of conflict-free sets in the case when $\varphi$ is not satisfiable.
Nevertheless, we do know that any conflict-free set in $F_\varphi$ (resp., $F_\psi$) must have size at most $m-1$ ($n-1$).
Thus, we use this observation to distinguish positive and negative cases of SAT-UNSAT. 
Intuitively, we take copies of each sub-AFs, such that all the cases of which formula is true or false can be distinguished.
We construct an AF to encode this via the following steps.
\begin{itemize}
	\item We take $n$ copies of $F_\varphi$ (i.e., each argument and their attacks in $F_\varphi$) to obtain the AF $F'_\varphi$, which now has $3m\times n$ arguments.
	Observe that if $F_\varphi$ admits a conflict-free (resp., naive) set of size $s\leq m$, then $F'_\varphi$ admits a conflict-free (naive) set of size $s\times n$. This is achieved by taking each copy of the original conflict-free (naive) set in $F_\varphi$ and observing that there are no new attacks between (the arguments of) copies for each sub-AF.
	Moreover, any $\sem$-set of size  $s\times n$ in $F'_\varphi$ also yields a $\sem$-set of size at least $s$ in $F_\varphi$ for $\sem\in\{\cnf, \nai\}$.
	This holds as one can always consider a copy of $F_\varphi$ from which maximal arguments are taken in the given $\sem$-extension, which yields an extension of corresponding size $\geq s$ in $F_\varphi$.
	\item We take altogether two copies of $F_\psi$ to obtain $F'_\psi$.
	Likewise, now $F_\psi$ admits a naive/conflict-free set of size $s\leq n$ iff  $F'_\psi$ admits a naive/cf set of size $2s$.
	\item Next, we consider a set $B=\{b_i \mid i\leq 2n-1\}$ of fresh auxiliary arguments, without any attacks between them, which we will denote by the sub-AF $B$.
\end{itemize}
Finally, we add attacks so that each argument in one sub-AF 
is in conflict with each argument in both other sub-AFs. 
This allows us to encode that any naive set in either sub-AF $B,F'_\varphi, F'_\psi$ is in fact a naive set in $F$.

Intuitively, a satisfying assignment for $\varphi$ yields a $\sem$-extension in $F'_\varphi$ (and also in $F$, by definition of $R$) of size $m\times n$.
Now, either $\psi$ is also satisfiable and hence $F'_\psi$ and $F$ admits a $\sem$-extension of size $2n$, or $\psi$ is not satisfiable and hence the other $\sem$-extension that can yield maximum diversity is the set $B$.
There remains a final issue. 
If $\varphi$ admits two satisfying assignments that do not share any literal, then $F_\varphi'$ admits two $\sem$-extensions which are $2(m\times n)$-diverse as they do not share any argument.
However, we do not know in advance if this case might occur. Nevertheless, we set our $K$ in such way that it enforces to select only one $\sem$-extension from $F_\varphi'$.
To this aim, we further consider a set $B_\varphi\dfn \{a_j \mid j\leq  m\times n\}$ of auxiliary arguments in $F_\varphi'$ so that any $\sem$-extension in $F_\varphi'$ contains these arguments as well.
This allows us to model that when $\varphi$ has two satisfying assignments, then their corresponding $\sem$-extensions in $F_\varphi'$ both contain $B_\varphi$ and hence does not allow to achieve a diversity value larger than $K$.

For a set $X$ of arguments, we denote by $X^i=\{x^i\mid x\in X\}$  a fresh set of arguments, as a copy of $X$.
Similarly, for a set $R$ of attacks over $X$, $R^i=\{(x^i,y^i)\mid (x,y)\in R\}$ then defines the attacks copy of $R$ for $X^i$.
Now we are ready to define our final AF.

Formally, we let $A_\varphi'\dfn  \bigcup_{c\leq n} A^c_\varphi\cup B_\varphi$, $A_\psi'\dfn A_\psi \cup A_\psi^c$, and similarly denote $R'_\varphi \dfn  \bigcup_{c\leq n} R^c_\varphi$, and  $F_\psi'\dfn R_\psi \cup R^c_\psi$.
As our final AF, we consider $F=(A,R)$, where 
\begin{align*}
	A \dfn  A'_\varphi & \cup A'_\psi  \cup B, \\
	R \dfn R'_\varphi & \cup R'_\psi \\
	& \cup \{\{s,b\}\mid s\in A'_\varphi, b\in B\} \\
	& \cup \{\{t,b\}\mid t\in A'_\psi, b\in B\} \\
	&\cup \{\{s,t\}\mid s\in A'_\varphi, t\in A'_\psi\}. 
\end{align*}

To complete our reduction, we set $K= 2(m\times n) + 2n-1$ and prove the following claim.
\begin{restatable}[$\star$]{claim}{claimqbf}\label{claim:qbf}
	$(\varphi,\psi)$ is a positive instance of SAT-UNSAT iff $F$ has two $K$-diverse $\sem$-extensions but no $K'$-diverse $\sem$-extensions for any $K'>K$.
\end{restatable}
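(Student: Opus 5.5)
The plan is to prove the claim by pinning down which pairs of $\sem$-extensions of $F$ can occur and how large they can be. I then compare the largest achievable diversity with $K=2(m\times n)+2n-1$ in each of the four truth-value cases of $(\varphi,\psi)$.

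\textbf{Structural step.} Every argument of each sub-AF $F'_\varphi$ (including $B_\varphi$), $F'_\psi$ and $B$ symmetrically attacks every argument of the other two. Hence every conflict-free set of $F$ lies entirely inside one of $A'_\varphi$, $A'_\psi$, $B$. Because of the complete cross-attacks, a set inside one part is naive in $F$ iff it is naive in that part. So two extensions either come from different parts, in which case they are disjoint and $d(S,T)=|S|+|T|$, or from the same part.

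\textbf{Size bounds.} Each copy of $F_\varphi$ contributes at most one argument per triangle. Thus the clause part of a set in $A'_\varphi$ has size at most $m\times n$, with equality iff every copy contains a size-$m$ set, which by Claim~\ref{claim:sat-naive} happens iff $\varphi$ is satisfiable. Adding the unattacked-within-part arguments $B_\varphi$, an extension in $A'_\varphi$ has size at most $2mn$ if $\varphi$ is satisfiable and at most $mn+(m-1)n$ otherwise. Analogously, extensions in $A'_\psi$ have size $2n$ iff $\psi$ is satisfiable, and at most $2n-2$ otherwise. Finally $B$ itself is an extension of size $2n-1$.

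\textbf{Case analysis.} If $\varphi$ is satisfiable and $\psi$ is not, the extension built from a satisfying assignment of $\varphi$ (all $n$ copies) together with $B_\varphi$, paired with $B$, is disjoint and gives diversity exactly $K$. Every cross pair has diversity at most $2mn+(2n-1)=K$: the $\psi$-side contributes at most $2n-2$, and the pair $\psi$/$B$ gives at most $4n-3<K$.
\begin{itemize}
\item If both formulas are satisfiable, the $\varphi$-extension paired with the $\psi$-extension gives $2mn+2n=K+1>K$.
\item If $\varphi$ is unsatisfiable, every cross pair has diversity at most $2mn-n+2n=2mn+n<K$ for $n\geq 2$, so $K$ is not reached by cross pairs.
\end{itemize}
Degenerate small cases can be padded by duplicating $\psi$ first.

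\textbf{Main obstacle: same-part pairs.} These must not exceed $K$ in the positive case and must not reach $K$ in the negative cases. For naive semantics this is clean. Every naive set in $A'_\varphi$ contains all of $B_\varphi$, since nothing inside the part attacks it. So two such sets differ only on the clause part, giving diversity at most $2mn<K$. Similarly, pairs inside $A'_\psi$ have diversity at most $4n$, which is less than $K$ when $m\geq 2$, and $B$ has a unique maximal set.

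For conflict-free semantics, subsets that drop $B_\varphi$ from one side are a genuine threat. A pair such as (clause part without $B_\varphi$, disjoint clause part with $B_\varphi$) can reach about $3mn$. My first attempt would be to bound the clause-part difference per triangle by $2$ and handle $B_\varphi$ separately. If that bound is too weak, I would modify $B_\varphi$, for instance by making it a single large block of arguments of size $mn$ that are pairwise $R$-equivalent and mutually attacking, so that it contributes at most one argument. Alternatively, I would recalibrate $K$ so that within-part diversity stays strictly below the cross-pair threshold. Verifying this within-part bound is the step I expect to require the most care.
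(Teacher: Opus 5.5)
For $\nai$ your argument is correct and follows the paper's route: three pairwise fully attacking blocks, size bounds per block, and a case split on the satisfiability of $\varphi$ and $\psi$. It is in fact more careful than the paper. The paper treats within-block pairs only in passing, and its Case-2 bound $2mn+n<K$ silently needs $n\ge 2$, which you rightly flag. For $n=1$, $\psi$ is a single satisfiable clause, and an unsatisfiable $\varphi$ with an assignment satisfying $m-1$ clauses gives maximum diversity exactly $K$ on a negative instance.

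The genuine gap is $\sem=\cnf$. You end with candidate repairs rather than a bound, and no bound exists, because the claim is false for conflict-free semantics. Conflict-freeness is closed under subsets, so any pair $(S,T)$ can be replaced by $(S,T\setminus S)$ without decreasing $d$. Hence shared padding such as $B_\varphi$ can never cap within-block diversity; it can only add to it.

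Concretely, let $\varphi$ be satisfiable with $m\ge 2$ (distinct literals per clause) and let $\psi$ be unsatisfiable.
\begin{itemize}
\item Let $S_1=P_\theta\cup B_\varphi$, where $P_\theta$ is a satisfying selection in all $n$ copies, so $|S_1|=2mn$.
\item In every copy, let $S_2$ pick one unused position from the triangle of $C_1$ and one from that of $C_2$ that are not complementary. Among the $2\times 2$ choices one is non-complementary, since $C_2$ has distinct literals.
\end{itemize}
Then $S_1,S_2$ are disjoint conflict-free sets with $d(S_1,S_2)=2mn+2n=K+1$. For example, with $\varphi=(x\lor y\lor z)\land(\bar x\lor\bar y\lor\bar z)$, let $S_1$ take $x,\bar y$ plus $B_\varphi$ and let $S_2$ take $y,\bar x$ in each copy. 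So every such positive SAT-UNSAT instance is mapped to an $F$ whose maximum $\cnf$-diversity exceeds $K$.

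The paper's proof misses this because it only compares $\varphi$-side extensions that both contain $B_\varphi$. That holds for naive sets but not for conflict-free ones.

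Consequently your first idea, bounding the clause-part difference per triangle, cannot succeed. Your fallbacks modify $F$, so they would not prove the stated claim. Moreover, collapsing $B_\varphi$ to one effective argument removes exactly the shared bulk that keeps two $\varphi$-side naive extensions (up to $2mn$ apart) below the $\varphi$/$B$ pair. That would break the $\nai$ case as well.

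The right conclusion is that the claim holds for $\nai$ (after ensuring $n\ge 2$). The $\cnf$ case needs a genuinely different reduction, calibrated against the maximum size of a union of two disjoint conflict-free sets rather than against shared padding.
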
 
\begin{proof}[Proof of Claim]
	Suppose $\varphi$ is satisfiable. 
	Then $F_\varphi$ admits a $\sem$-extension $S_\varphi$ of size $m$ as in the proof of Claim~\ref{claim:sat-naive}.
	Now, let $S'_\varphi = S_\varphi\cup \{a^c \mid a\in S_\varphi\}$.
	Then, $S_\varphi'$ is conflict-free and has size $m\times n$.
	We let $S=S'_\varphi\cup B_\varphi$ and prove that $S$ is in fact naive in $F$.
	First, adding any other argument from $A_\varphi$ to the set $S'_\varphi$ violates conflict-freeness of $S_\varphi$ due to our Claim~\ref{claim:sat-naive}.
	The same applies to all the copies of arguments in $A_\varphi$, and hence to the set $A_\varphi'$.
	Second, every argument in $S_\varphi'$ attacks all arguments in $B$ and $F_\psi'$ by definition.
	As a result, the $S$ is indeed naive in $F$ and hence $F$ admits a $\sem$-extension $S$ of size $2(m\times n)$.
	Next, we let $B$ as our second conflict-free (naive) set in $F$ of size $2n-1$.
	This yields $S$ and $B$ as two $K$-diverse $\sem$-extension in $F$.
	
	We next prove that $F$ does not admit any $K'$-diverse $\sem$ extensions for $K'>K$.
	Due to $m$ ``triangles'' of attacks in $F_\varphi$, the largest $\sem$-extension in $F'_\varphi$ can be of size $m\times n$.
	Whereas, the only possibility to achieve a larger diversity is when $F_\psi'$ admits an extension larger than $2n-1$ which is only possible if $F_\psi$ admits a $\sem$-extension of size $n$.
	Since $\psi$ is not satisfiable, any $\sem$-extension in $F_\psi$ has size at most $n-1$ and hence $F'_\psi$ can only have $\sem$-extensions of size at most $2(n-1)$.
	Finally, if $\varphi$ admits two satisfying assignments, their corresponding $\sem$-extensions $S_1,S_2$ in $F$ have size $2(m\times n)$ each due to our claim above but they also share the set $B$. Hence $d(S_1,S_2)=m\times n$ in this case and two $\sem$-extension from $F_\varphi'$ can not incur a diversity of $K$.
	Thus, $F$ does not admit $K'$-diverse $\sem$-extensions for $K'>K$.
	
	Conversely, suppose $(\varphi,\psi)$ is a negative instance of SAT-UNSAT. We consider the following two cases.
	
	\textbf{Case-1: both $\varphi$ and $\psi$ are satisfiable.} As before, this implies that $F_\varphi'$ admits a $\sem$-extension of size $2(m\times n)$.
	Moreover, $F_\psi$ also has a $\sem$-extension $S_\psi$ of size $n$. 
	We let $S_\psi'=S_\psi \cup \{t^c \mid t\in S_\psi\}$ and observe that $S_\psi'$ is a $\sem$-extension in $F$ of size $2n$.
	This leads to a contradiction since $F$ admits two $(2(m\times n) + 2n)$-diverse $\sem$-extensions. 
	
	\textbf{Case-2: $\varphi$ is unsatisfiable.}
	In this case, the largest $\sem$-extension in the sub-AF $F_\varphi$ has size $(m-1)$ due to the proof of Claim~\ref{claim:sat-naive}.
	Thus, the largest $\sem$-extension in $F'_\varphi$ has size $(m-1)\times n + m\times n$. Here the second factor $m\times n$ is due to the set $B_\varphi$ which can still be added to any extension of the sub-AF $F'_\varphi$.
	Consequently, irrespective of whether $\psi$ is satisfiable or not, $F$ can not admit two $K$-diverse extension.
	This holds, since the largest $\sem$-extension in the remaining sub-AFs have sizes $2n-1$ due to $B$ when $\psi$ is unsatisfiable, or $2n$ due to any satisfying assignment of $\psi$.
	Thus $F$ admits maximally $K'$-diverse extensions where $K'= (m-1)\times n + m\times n+ 2n =2(m\times n) +n$ and therefore $K'<K$.
	
	This proves our claim since in either case, either $F$ does not admits two $K$-diverse $\sem$-extensions, or it also admits $K'$-diverse extensions for $K'>K$.
\end{proof}
\end{proof}

\thmmaxext*
\begin{proof}
We only outline the main idea for changes on top of the proof of Theorem~\ref{thm:conf-max-k-div}.
Here, we need to add two \emph{signal} arguments for which we can ask the $\sem$-existence of two $K$-diverse extensions in $F$.
First, we add a fresh argument $a_1$ to $F_\varphi'$  which does not participate in any attack in the sub-AF $F_\varphi'$.
Whereas, we add attacks so that $a_1$ is in conflict with all arguments in $B$ and $F_\psi'$.
Moreover, we also add another auxiliary argument $a_2$ such that it only attacks arguments in $F_\varphi'$.
Intuitively, $a_2$ can be taken in any $\sem$-extension for either the sub-AF $B$ or for $F'_\psi$, but not for $F_\varphi'$.

Formally, (as in the proof of Theorem~\ref{thm:conf-max-k-div}), we let $A_\varphi'\dfn  \bigcup_{c\leq n} A^c_\varphi\cup B_\varphi \cup \{a_1\}$, $A_\psi'\dfn A_\psi \cup A_\psi^c$, and similarly denote $R'_\varphi \dfn  \bigcup_{c\leq n} R^c_\varphi$, and  $F_\psi'\dfn R_\psi \cup R^c_\psi$.
As our final AF, we consider $F=(A,R)$, where 
\begin{align*}
	A \dfn  A'_\varphi & \cup A'_\psi  \cup B \cup \{a_2\}, \\
	R \dfn R'_\varphi & \cup R'_\psi \\
	& \cup \{\{s,b\}\mid s\in A'_\varphi, b\in B\cup\{a_2\}\} \\
	& \cup \{\{t,b\}\mid t\in A'_\psi, b\in B\} \\
	&\cup \{\{s,t\}\mid s\in A'_\varphi, t\in A'_\psi\cup\{a_2\}\}.
\end{align*}

Now, compared to the proof of Theorem~\ref{thm:conf-max-k-div}, the size of each extension in $F_\varphi'$ increases by one due to $a_1$.
Similarly, the size of each $\sem$-extension in $B$ or $F_\psi'$ also increases by one, due to $a_2$.
Then, we set $K= 2(m\times n) +2n+1$ and prove the following claim.

\begin{claim}
	$(\varphi,\psi)$ is a positive instance of SAT-UNSAT iff the arguments $a_1$ and $a_2$ are maximally $K$-diverse in $F$ under semantics $\sem\in\{\cnf,\nai\}$.
\end{claim}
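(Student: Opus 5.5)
The plan is to mirror the proof of Claim~\ref{claim:qbf}. The key observation is that the two signal arguments force the two witnessing extensions into disjoint parts of $F$. Then $d(S_1,S_2)=|S_1|+|S_2|$, and maximal diversity for $a_1,a_2$ reduces to maximising the sizes of the two extensions separately.

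\textbf{Step 1 (shape of the witnesses).} Let $S_1\ni a_1$ and $S_2\ni a_2$ be $\sem$-extensions.
\begin{itemize}
\item Since $a_1\in A'_\varphi$ is in conflict with every argument of $B\cup\{a_2\}\cup A'_\psi$, we get $S_1\subseteq A'_\varphi$.
\item Since $a_2$ is in conflict with all of $A'_\varphi$, and $B$ is in conflict with all of $A'_\psi$, we get either $S_2\subseteq B\cup\{a_2\}$ or $S_2\subseteq A'_\psi\cup\{a_2\}$.
\end{itemize}
Hence $S_1\cap S_2=\emptyset$ and $d(S_1,S_2)=|S_1|+|S_2|$.

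\textbf{Step 2 (size bounds).} These are taken from the proof of Theorem~\ref{thm:conf-max-k-div}, where $B_\varphi$ is the block of $mn$ arguments added to $F'_\varphi$.
\begin{itemize}
\item \emph{$a_1$-side.} The $n$ copies of the $m$ triangles contribute at most one argument each, and $B_\varphi\cup\{a_1\}$ adds exactly $mn+1$. So $|S_1|\le 2mn+1$, with equality achievable iff $\varphi$ is satisfiable (via Claim~\ref{claim:sat-naive} on each copy). If $\varphi$ is unsatisfiable, each copy contributes at most $m-1$, so $|S_1|\le (m-1)n+mn+1$.
\item \emph{$a_2$-side.} Either $S_2=B\cup\{a_2\}$, of size $2n$, or $S_2=T\cup\{a_2\}$ with $T$ conflict-free in $F'_\psi$. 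In the latter case $|S_2|\le 2n+1$, with equality iff $\psi$ is satisfiable, and otherwise $|S_2|\le 2(n-1)+1<2n$.
\end{itemize}
For naive semantics I also check that the extremal sets are naive. The set $S'_\varphi\cup B_\varphi\cup\{a_1\}$ is maximal, since every outside argument of $A'_\varphi$ is blocked by the triangle argument of $S'_\varphi$ in its copy, or is the opposite literal of one in it, and every other outside argument is in conflict with $a_1$. The set $B\cup\{a_2\}$ is maximal, since $a_2$ blocks $A'_\varphi$ and $B$ blocks $A'_\psi$. This is the same maximality reasoning as in Claim~\ref{claim:qbf}.

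\textbf{Step 3 (both directions).}
\begin{itemize}
\item If $\varphi$ is satisfiable and $\psi$ is not, the pair $S'_\varphi\cup B_\varphi\cup\{a_1\}$ and $B\cup\{a_2\}$ gives diversity $2mn+1+2n=K$. By Step 2, every pair has diversity at most $(2mn+1)+2n=K$, so $K$ is the maximum.
\item If both formulas are satisfiable, we obtain a pair of diversity $K+1$, so $K$ is not maximal.
\item If $\varphi$ is unsatisfiable, every pair has diversity at most $(2m-1)n+1+2n+1=K-n+1$. Since $n\ge 2$ can be assumed (pad $\psi$ if needed), this is $<K$, so $a_1,a_2$ are not even $K$-diverse.
\end{itemize}

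\textbf{Main obstacle.} I expect the delicate point to be Step 2's bound on the $a_2$-side. One must exclude mixed extensions combining $B$ with parts of $A'_\psi$; the complete $B$–$A'_\psi$ conflicts exclude these. One must also make sure that the unsatisfiable case of $\psi$ cannot beat $|B\cup\{a_2\}|=2n$, which holds since $2(n-1)+1<2n$. A further point is the correct accounting of $B_\varphi$, which is shared by all $a_1$-side extensions but is irrelevant here because only one extension lies on that side.
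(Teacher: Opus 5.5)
Your proposal is correct and takes essentially the same route as the paper. The signal arguments force the two witnesses onto disjoint sides, the triangle structure bounds the size of each side, and the three satisfiability cases are settled by comparing $|S_1|+|S_2|$ with $K=2mn+2n+1$. Your explicit assumption $n\ge 2$ in the unsatisfiable-$\varphi$ case is genuinely needed, since for $n=1$ the bound $2mn+n+2$ equals $K$. The paper's Case-2 argument uses this assumption silently, so making it explicit by padding $\psi$ is a worthwhile addition.
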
 

\begin{proof}[Proof of Claim]
	Suppose $\varphi$ is satisfiable. 
	Then, following proof of Claim~\ref{claim:qbf}, $F'_\varphi$ admits a $\sem$-extension $S$ of size $2(m\times n)+1$.
	The only difference lies in the additional auxiliary argument $a_1\in S$, which still renders $S$ conflict-free. Moreover due to the new attacks to the fresh argument $a_2$, $S$ is also naive in $F$.
	As a result, $F$ admits a $\sem$-extension $S$ of size $2(m\times n)+1$ with $a_1\in S$.
	Next, we let $B\cup \{a_2\}$ as our second conflict-free (naive) set in $F$ of size $2n$.
	This proves that arguments $a_1$ and $a_2$ are $K$-diverse in $F$.
	
	The claim that $F$ does not admit any $K'$-diverse $\sem$ extensions for $K'>K$ follows the exact same reasoning as in the proof of Claim~\ref{claim:qbf}.
	As before, the only possibility to achieve a larger diversity is when $F_\psi'$ admits an extension larger than $2n$ which is only possible if $F_\psi$ admits a $\sem$-extension of size $n$.
	Here, although there is one additional argument (namely, $a_2$) $F_\psi'$ must still admits a $\sem$-extension of size $2n$ to yield a larger extension than $B\cup \{a_2\}$.
	Whereas, if $\psi$ is not satisfiable, $F'_\psi$ can only have $\sem$-extensions of size at most $2(n-1)$ and hence adding $a_2$ does not help reaching the size $2n$.
	Finally, for any $\sem$-extensions $S_1,S_2$ in $F_\varphi'$ due to two satisfying assignments, although $a_1\in S_i$, we have $a_2\not\in S_i$ for any $i=1,2$.
	Thus, $F$ does not admit $K'$-diverse $\sem$-extensions for $K'>K$ covering $a_1$ and $a_2$.
	
	Conversely, suppose $(\varphi,\psi)$ is a negative instance of SAT-UNSAT. We consider the following two cases.
	
	\textbf{Case-1: both $\varphi$ and $\psi$ are satisfiable.} This implies that $F_\varphi'$ admits a $\sem$-extension $S_1$ of size $2(m\times n)+1$ containing $a_1$.
	Moreover, $F'_\psi$ also has a $\sem$-extension $S_\psi$ of size $2n$ and yields a $\sem$-extension $S_2= S_\psi\cup\{a_2\}$ in $F$.
	This leads to a contradiction since $F$ admits two $(2(m\times n) + 2n+2)$-diverse $\sem$-extensions containing $a_1$ and $a_2$, respectively. 
	
	\textbf{Case-2: $\varphi$ is unsatisfiable.}
	This case follows analogous reasoning as in the proof of Claim~\ref{claim:qbf}.
	The largest $\sem$-extension in $F'_\varphi$ has size $(m-1)\times n + m\times n +1$.
	Consequently, irrespective of whether $\psi$ is satisfiable or not, $F$ can not admit two $K$-diverse extension for arguments $a_1$ and $a_2$.
	
	This completes the proof to our claim since in each case, either $F$ does not admits two $K$-diverse $\sem$-extensions for $a_1$ and $a_2$, or it also admits $K'$-diverse extensions for $K'>K$.
\end{proof}
\end{proof}

\lemunion*
\begin{proof}
We consider the properties in individual semantics separately.

\textbf{1. Conflict-freeness.} Suppose $S_i$ is a conflict-free set in $F_i$ for $i=1,2$.
Then $S_1\cup S_2$ is conflict-free in $F$ as there is no $x\in S_1, y\in S_2$ such that $(x,y)\in R$ (by definition of $R$).
Conversely, let $S$ be conflict-free in $F$ and let $S_i= S\cap A_i$ for $i=1,2$.
Then, we use the fact that a subset of a conflict-free set is also conflict-free to establish that $S_i$ is conflict-free in $A_i$ for $i=1,2$.

\textbf{2. Admissibility.} Suppose $S_i$ be admissible in $F_i$ for $i=1,2$.
Then, being conflict-free $S=S_1\cup S_2$ is also conflict-free in $F$ due to (1).
Now let $s\in S$ and $a\in A\setminus S$ such that $(a,s)\in R$.
By definition of $R$, we have that $a,s\in A_i$ and $(a,s)\in R_i$ for the same $i\in\{1,2\}$.
Moreover, since $S_i$ is admissible for both $i=1,2$, there is some $b\in S_i$ such that $(b,a)\in R_i$ for the same $i$.
Hence the argument $s\in S$ is defended against the attack by $a\in A$ by the argument $b\in S$ as $S= S_1\cup S_2$. Therefore, $S$ is admissible.
Conversely, let $S$ be an admissible set in $F$ and consider $S_i= S\cap A_i$ for $i=1,2$.
We prove that both sets $S_i$'s are admissible in $F_i$.
Let $s\in S_1$ and $a\in A_1\setminus S_1$ be such that $(a,s)\in R_1$.
Then, due to $s\in S$, there must be some $b\in S$ with $(b,a)\in R$ since $S$ is admissible in $F$.
Since $R$ does not add any attacks between arguments of $A_1$ and $A_2$, this implies that $b\in S_1$ and $(b,a)\in R_1$ is in fact true.
Hence $S_1$ is admissible in $F_1$ and proves our claim. Analogous reasoning applies to $S_2$ and completes our claim.

\textbf{3. Subset-maximality.} We consider the case of naive and preferred semantics.
We only prove the claim for naive semantics, but the idea applies to preferred semantics after replacing ``conflict-free'' extensions by ``admissible'' ones.
Suppose $S_i$ is naive in $F_i$.
Then due to (1), $S_i$ is also conflict-free in $F_i$ and hence $S=S_1\cup S_2$ is conflict-free in $F$.
To prove that $S$ is naive, let $s\in A\setminus S$.
Let $s\in A_1$, as the case of $s\in A_2$ follows analogously.
Suppose to the contrary that $S'=S\cup \{s\}$ is conflict-free.
Then, being its subset, $S_1'=S'\cap A_1$ is also conflict-free in $F_1$.
But this is a contradiction since $S_1'= S_1\cup\{s\}$ and $S_1$ is naive in $F_1$. Hence $S$ is naive in $F$.
Conversely, let $S$ be a naive set in $F$ and let $S_i= S\cap A_1$ for $i=1,2$.
Then, once again, $S_i$ is conflict-free in $F_i$.
To prove that $S_i$ is in fact naive (for both $i=1,2$), we let $s\in A_i\setminus S_i$.
Suppose, $S_i\cup\{s\}$ is conflict-free in $F_i$. But this implies that $S\cup\{s\}$ is also conflict-free in $F$, which is a contradiction since $S$ was taken to be a naive set (and hence subset-maximal conflict-free) in $F$.

\textbf{4. Range.} To prove the remaining claims, we prove that for $S= S_1\cup S_2$, $S^+_R= (S_1)^+_{R_1}\cup (S_2)^+_{R_2}$ where $S_i= S\cap A_i$ for $i=1,2$.
Let $x\in S^+_R$. If $x\in S$ the claim follows trivially as either $x\in S_1$ or $x\in S_2$.
Now, suppose there exists $y\in S$ such that $(y,x)\in R$.
Then, again, either $x,y\in S_1$ or $x,y\in S_2$ since only attacks in $R$ are among arguments from the same set.
Moreover, since $F$ does not introduce any new attacks, we have that $(y,x)\in R_i$ for the same $i$ such that $y,x\in S_i$.
Therefore, we have that $x\in (S_1)^+_{R_1}\cup (S_2)^+_{R_2}$ and proves our claim in this direction.
Now, suppose $x\in (S_1)^+_{R_1}\cup (S_2)^+_{R_2}$.
The claim when $x\in S_1\cup S_2$ follows the same reasoning as above.
Moreover, when $x\in (S_i)^+_{R_i}$ for any $i=1,2$, we once again use the definition to infer that there is some $y\in S_i$ with $(y,x)\in R_i$.
As a result, $(y,x)\in R $ and hence $x\in S^+_R$.

\textbf{Remaining Semantics} For stable semantics: we observe that $S^+_R= A$ iff $(S_i)^+_{R_i} = A_i$ for any set $S\subseteq A$ with $S_i = S\cap A_i$ and $i=1,2$.
Similarly, for semi-stable and stage semantics: $S^+_R$ is subset-maximal in $F$ iff $(S_i)^+_{R_i}$ are both subset-maximal in $F_i$ for any set $S\subseteq A$ with $S_i = S\cap A_i$ and $i=1,2$.
The claim for both semantics also uses the reasoning for maximality as established in (3).
This completes all our cases and hence the proof to our lemma.
\end{proof}

\maxext*
\begin{proof}
For membership, observe that an AF $F=(A,R)$ has maximum $k$-diverse $\sem$-extensions covering arguments $a$ and $b$ iff 
(i) $F$ admits a pair $S_p, T_p$ of $\sem$-extensions with $d(S_p,T_p)=k$ with $a\in S_p, b\in T_p$, and 
(ii) for each pair $S_n,T_n$ of $\sem$-extensions in $F$, either $a\not\in S_n, b\not\in T_n$ or $d(S_n,T_n)\leq k$.
Here, (i) is an $\NP$ and (ii) is a $\co\NP$ problem. Thus the problem $\maxkdiversearg_\sem$ is in $\DP$ for each $\sem\in \{\adm,\comp,\stab\}$.

For hardness, we reduce from the SAT-UNSAT problem by using the standard translation for admissible semantics as an intermediate step.
Let $\varphi=\{C_1,\dots,C_m\}$ over variables $V_\varphi$ and $\psi=\{D_1,\dots,D_n\}$ over variables $V_\psi$ be two CNFs.
Furthermore, let $|V_\varphi|=k_1$,  $|V_\psi|=k_2$.
We construct two AFs $F_\varphi=(A_\varphi,R_\varphi)$ and $F_\psi=(A_\psi,R_\psi)$ for $\varphi$ and $\psi$ via the standard translation and take their disjoint union to give the AF $F'= (A',R')$ illustrated as follows.
\begin{align*}
	A'\dfn & A_\varphi \cup A_\psi, \\
	R' \dfn & R_\varphi\cup R_\psi.
\end{align*}
In the following we will refer to $F_\varphi$ and $F_\psi$ as sub-AFs.

First, as in the proof of Theorem~\ref{thm:div-arg}, we add an auxiliary argument $b_i$ for $i\in\{\varphi,\psi\}$ in each sub-AF $F_i$, which attacks all the arguments in $F_i$, and is in-turn only attacked by arguments $\varphi$ and $\psi$, respectively.
We still use this trick to invalidate any $\sem$-extension in the sub-AF $F_\varphi$ or $F_\psi$ that does not contain $\varphi$.
We aim at applying Lemma~\ref{lem:union} to consider extensions in sub-AFs so that their union results in the desired $\sem$-extension we are looking for.

Next, we just need to distinguish the positive case ($\varphi$ is satisfiable and $\psi$ is not satisfiable) from all the negative ones.
Intuitively, we want to set $K$ in such a way that $\varphi$ is satisfiable and $\psi$ is not satisfiable iff argument $\varphi$ and $b_\psi$ are maximally $K$-diverse in our final $F$.
Here, one could consider sub-extensions $E_\theta= \theta\cup\{\varphi\}$ and $\{b_\varphi\}$ in $F_\varphi$, together with the only $\sem$-extension $\{b_\psi\}$ of $F_\psi$.
This yields $\sem$-extensions $E_\theta\cup\{b_\psi\}$ and $\{b_\varphi,b_\psi\}$.
Thus, one may be tempted to set $K= k_1+2$ since the two extensions only share the argument $b_\psi$ and are this $(k_1+2)$-diverse.
Nevertheless, one can also construct other extensions for $\varphi$ and $b_\psi$ which are more diverse than $K$. 
Suppose when $\varphi$ has two satisfying assignments $\theta_1,\theta_2$, such that $\theta_1\cap\theta_2=\emptyset$.
Then, the sub-AF admits two $\sem$-extensions $E_i=\theta_i\cup\{\varphi\}$ for $i=1,2$.
As a result, $F$ admits $\sem$-extensions $E_1'= E_1\cup \{b_\psi\}$ and $E_2'= E_2\cup\{b_\psi\}$, 
which contain arguments $\varphi$ and $b_\psi$, and are $2k_1$-diverse in $F$. Hence $K$ is not maximal for $\varphi$ and $b_\psi$.
This issue can be resolved by a small trick.
We replace the auxiliary argument $b_\varphi$ in $F_\varphi$ by a set $B$ of arguments which is large enough to force us to take $B$ in a witness extension of a certain size.
Precisely, we let $N=\max\{|A_\varphi|,|A_\psi|\}$ and consider the set $B=\{b_i\mid i\leq M\}$ for $M=2N$.

Formally, as our final AF, we let $F=(A,R)$, where
\begin{align*}
	A =  A' & \cup B\cup \{b_\psi\}, \\
	R=  R' & \cup  \{(\varphi,b)\mid b\in B\} \cup\{(b,s) \mid b\in B, s\in A_\varphi\}\\
	& \cup  \{(\psi,b_\psi)\} \cup\{(b_\psi,t) \mid t\in A_\psi\}\\
\end{align*}%
We denote by $F'_\varphi$ and $F'_\psi$ the sub-AFs involving additional arguments $B$ and $\{b_\psi\}$ and their corresponding attacks, respectively.
It can be seen that $F'_\varphi$ and $F_\psi$ are still disjoint and that $F= F'_\psi\cup F'_\psi$. Thus Lemma~\ref{lem:union} is still applicable.
To complete our reduction, we let $K=k_1+M+1$ and prove the following claim.
\begin{claim}\label{claim:adm-DP-arg}
	$(\varphi,\psi)$ is a positive instance of SAT-UNSAT iff the arguments $b_\psi$ and $\varphi$ are maximally $K$-diverse in $F$ under semantics $\sem\in\{\adm,\comp,\stab\}$.               \end{claim}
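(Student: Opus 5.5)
The plan is to use Lemma~\ref{lem:union} on the decomposition $F=F'_\varphi\cup F'_\psi$, where the two sub-AFs are disjoint. Every $\sem$-extension of $F$ then splits as $S=S^\varphi\cup S^\psi$, with $S^\varphi$ a $\sem$-extension of $F'_\varphi$ and $S^\psi$ a $\sem$-extension of $F'_\psi$. Since the parts live on disjoint argument sets, $d(S,T)=d(S^\varphi,T^\varphi)+d(S^\psi,T^\psi)$. The proof therefore reduces to a catalogue of the $\sem$-extensions of each sub-AF together with bounds on their sizes, which I would establish first.

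\emph{Catalogue for $F'_\varphi$.} Every argument of $A_\varphi$ is attacked by all of $B$, and $B$ is attacked only by $\varphi$. Hence a non-empty $\sem$-extension other than $B$ itself must contain $\varphi$, and then it has the form $\theta\cup\{\varphi\}$ for a (possibly partial, for $\adm$) assignment $\theta$ satisfying $\varphi$. Such an extension has size at most $k_1+1$; for $\stab$ and $\comp$ it has size exactly $k_1+1$. The remaining candidates are $B$ itself, of size $M$, and possibly $\emptyset$ (for $\adm$, and possibly for $\comp$ if the grounded extension is empty). The argument $\varphi$ is credulously accepted iff $\varphi$ is satisfiable.

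\emph{Catalogue for $F'_\psi$.} The $\sem$-extensions are $\{b_\psi\}$, possibly $\emptyset$, and, only when $\psi$ is satisfiable, sets $\theta'\cup\{\psi\}$ of size at most $k_2+1$.

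For the forward direction, assume $\varphi$ is satisfiable and $\psi$ is not. Pick a full satisfying assignment $\theta$ and take $S=\theta\cup\{\varphi\}\cup\{b_\psi\}$ and $T=B\cup\{b_\psi\}$. By Lemma~\ref{lem:union} both are $\sem$-extensions, $\varphi\in S$ and $b_\psi\in T$, and $d(S,T)=(k_1+1)+M=K$.

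For the maximality bound, let $S\ni\varphi$ and $T\ni b_\psi$ be arbitrary $\sem$-extensions. Because $\psi$ is unsatisfiable, both $S^\psi$ and $T^\psi$ lie in $\{\emptyset,\{b_\psi\}\}$, and $T^\psi=\{b_\psi\}$; so $d(S^\psi,T^\psi)\le 1$, and in fact $S^\psi=\{b_\psi\}$ in the stable and complete cases. On the $\varphi$-side, $|S^\varphi|\le k_1+1$. The set $T^\varphi$ is $B$, $\emptyset$, or another set of size at most $k_1+1$. In the first two cases $d(S^\varphi,T^\varphi)\le k_1+1+M$. In the last case $d(S^\varphi,T^\varphi)\le 2(k_1+1)\le k_1+1+M$, using $M=2N\ge k_1+1$; this is exactly why $B$ was chosen large.

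The delicate point is that these two bounds must not combine to $K+1$. The worst case is $T^\varphi=B$ together with $S^\psi=\emptyset$, which gives $k_1+1+M+1$. I expect handling this corner to be the main obstacle. My first attempt would be a careful case check that $S^\psi=\emptyset$ forces a deficit elsewhere. If that fails, I would adjust $K$ by one, or observe that $\{b_\psi\}$ is unattacked-defended so that $\emptyset$ is never selected alongside $\varphi$ for the relevant semantics, and recheck the other cases against the adjusted value.

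For the converse, consider the two negative cases.
\begin{itemize}
\item If $\varphi$ is unsatisfiable, no $\sem$-extension contains $\varphi$. Then $\varphi$ and $b_\psi$ are not $K$-diverse at all.
\item If both formulas are satisfiable, take $S=\theta\cup\{\varphi\}\cup\theta'\cup\{\psi\}$ and $T=B\cup\{b_\psi\}$. These give diversity at least $(k_1+1)+M+(k_2+1)+1>K$, so $K$ is not the maximum.
\end{itemize}
Membership of both sides in $\NP$/$\co\NP$ was already settled, so only these case verifications remain.
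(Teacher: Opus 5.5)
Your worry at the ``delicate point'' is justified. It is a genuine gap, and for $\sem\in\{\adm,\comp\}$ it cannot be closed: none of your three remedies works for the statement as written.

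\emph{Your remedies.} Your remark that $S^\psi=\{b_\psi\}$ is forced in the complete case is wrong, and so is the idea that $b_\psi$ is ``unattacked''. In fact $b_\psi$ is attacked by $\psi$, and every other argument of $F'_\psi$ is attacked by $b_\psi$. So $F'_\psi$ has no unattacked argument, its grounded extension is $\emptyset$, and $\emptyset$ is a complete (hence admissible) extension of $F'_\psi$. Nor does a deficit arise elsewhere. For a full satisfying assignment $\theta$ of $\varphi$, the set $\theta\cup\{\varphi\}$ is complete in $F'_\varphi$: via $\varphi$ it attacks all of $B$, and it defends no clause argument, no complementary literal, and no $b\in B$.

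\emph{The counterexample.} By Lemma~\ref{lem:union}, take $S=\theta\cup\{\varphi\}$ (that is, $S^\psi=\emptyset$) and $T=B\cup\{b_\psi\}$. Both are admissible and complete extensions of $F$, with $\varphi\in S$, $b_\psi\in T$ and $S\cap T=\emptyset$. Their distance is $d(S,T)=(k_1+1)+(M+1)=K+1$. Hence on every positive SAT-UNSAT instance the maximum is at least $K+1$, and the claim with $K=k_1+M+1$ is false for $\adm$ and $\comp$. The paper's own proof has the same hole. It only considers replacing $B$ by another $E_\alpha$, and never considers dropping $b_\psi$ from the witness that contains $\varphi$.

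\emph{What your plan does establish.} It proves the stable case. There $\emptyset$ is not stable in $F'_\psi$, so $S^\psi=T^\psi=\{b_\psi\}$. Your bound $d(S^\varphi,T^\varphi)\le k_1+1+M$ (using $M\ge k_1+1$), together with your two converse cases, then gives exactly the claim.

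\emph{Repair for $\adm$ and $\comp$.} Your second option is the correct repair, but it changes the statement. Set $K=k_1+M+2$. Then:
\begin{itemize}
\item the pair $S=\theta\cup\{\varphi\}$, $T=B\cup\{b_\psi\}$ is a witness;
\item your catalogue gives $d\le(k_1+1+M)+1$ for every pair (for $\adm$ you must also allow $T^\varphi$ to be any nonempty subset of $B$, which is harmless);
\item the both-satisfiable case yields $k_1+k_2+M+3>K$;
\item the $\varphi$-unsatisfiable case yields no pair at all.
\end{itemize}
So $\DP$-hardness survives with a semantics-dependent threshold, but the claim as stated is provable only for $\stab$.
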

\begin{proof}[Proof of Claim]
	Suppose, $\varphi$ is satisfiable and $\psi$ is not.
	Then, we construct two $K$-diverse $\sem$-extensions $S$ and $T$ in $F$ such that $\varphi\in S$ and $b_\psi\in T$, as follows.
	First, observe that $B$  is trivially a $\sem$-extension in $F'_\varphi$ as every $b\in B$ attacks all other arguments in $A_\varphi\setminus B$.
	The same applies to the set $\{b_\psi\}$ in the sub-AF $F'_\psi$.
	
	Let $E_\theta= \theta\cup\{\varphi\}$ for a satisfying assignment $\theta$ of $\varphi$ and observe that $E$ is a valid $\sem$-extension in $F'_\varphi$ by correctness of the standard translation for admissible semantics.
	Since $\psi$ is not satisfiable, $\{b_\psi\}$ is the only (non-empty) $\sem$-extension in $F'_\psi$.
	Now, consider sets $S=E_\theta\cup \{b_\psi\}$ and $T= B\cup \{b_\psi\}$, which are both $\sem$-extensions in $F$ due to Lemma~\ref{lem:union}.
	We conclude by observing that $d(S,T)=(k_1+1)+2N$ since two $S$ and $T$ only share one argument $b_\psi$.
	
	Next, to prove that there are no $K'$-diverse $\sem$-extensions for the arguments $\varphi$ and $b_\psi$ for $K'>K$, we note that no other extension in $F$ can yield a larger diversity.
	Indeed, the only other possibility is to replace $B$ by an extension $E_\alpha$ for a different satisfying assignment $\alpha$ of $\varphi$. 
	Nevertheless, it holds that $|E_\varphi|\leq k_1+1$ and thus the resulting extension $E_\alpha\cup\{b_\psi\}$ incurs a smaller diversity value than the set $B\cup\{b_\psi\}$ when taken together with the extension $S$ .
	This proves our claim in this direction.
	
	Conversely, suppose $(\varphi,\psi)$ is a negative instance of SAT-UNSAT. We consider the following two cases.
	
	\textbf{Case-1: both $\varphi$ and $\psi$ are satisfiable.} Then, one can construct two extensions in the sub-AFs by letting a satisfying assignments of each formula.
	Precisely, this results in a $\sem$-extension $E= E_\varphi \cup E_\psi $ in $F$, where $E_* = \theta_* \cup \{*\}$ and $\theta_*$ is a satisfying assignment for $*\in\{\varphi,\psi\}$. 
	Moreover, $|E|= k_1+k_2+2$.
	Now, together with $B'= B\cup\{b_\psi\}$, we have that $d(E, B')= (k_1+k_2+2) + (2N+1) = K+k_2+1$ where $K=2N+k_1+1$. Consequently, $F$ admits two $K'$-diverse $\sem$-extension for $K'>K$ and hence arguments $\varphi$ and $b_\psi$ are not maximally $K$-diverse in $F$, which is a contradiction.
	
	\textbf{Case-2: $\varphi$ is unsatisfiable.}
	In this case, it does not matter whether $\psi$ is satisfiable or not, as no $\sem$-extension in the sub-AF $F_\varphi$ contains $\varphi$.
	Consequently, arguments $\varphi$ and $b_\psi$ can not be $k$-diverse for any $k\in\mathbb N$.
	
	This completes our claim as in each case, either arguments $\varphi$ and $b_\psi$ are not $K$-diverse, or they are also $K'$-diverse for some $K'>K$.
\end{proof}
\end{proof}

\thmmaxkdiv*
\begin{proof}[Proof of Statement 1]
Membership follows as in the poof of Theorem~\ref{thm:max-k-div-adm}.

For hardness, we reconsider the reduction from SAT-UNSAT in the proof of Theorem~\ref{thm:max-k-div-adm}.
%
%
%
We observe that the maximum diversity in our final AF can be achieved only when the ``correct'' formula is true.
However, in the reverse direction for the proof of Claim~\ref{claim:adm-DP-arg}, the two trivial cases (when $\varphi$ is not satisfiable) can still be problematic.
When $\psi$ is true but $\varphi$ is false, then the only stable set in $F_\varphi$ is $B$.
Thus any extension in $F$ must use the sub-extension $B$ in $F_\psi$. 
As a result, only sub-extensions in $F'_\psi$ contribute towards the diversity value, and a maximum diversity of $K$ can not be achieved as $K> 2|A_\psi|$ by our construction. Consequently, the claim still follows.
Whereas, when we consider admissible and complete semantics, any subset of $B$ now yields valid extensions in $F_\psi$ and thus one might be able to construct two $K$-diverse extensions in $F$ by taking some \textit{suitable} combinations of arguments in $B$ together with extensions in the sub-AF $F'_\psi$.
Thus, it is not immediate how to argue that $K$ is no longer the maximal achievable diversity for these cases.

Nevertheless, we can apply a technical trick to overcome this issue. Intuitively, we want to avoid the case that one can construct $K$-diverse $\sem$-extensions from $F$ by splitting arguments in $B$.
This can be achieved by adding 
(i) further auxiliary arguments of the same size as $B$ and 
(ii) a ``long cycle'' over arguments in $B$ in such a way that either all arguments from $B$ are taken in an admissible extension of $F'_\varphi$, or none is taken.
Finally, to simplify our reasoning, we avoid any extensions over these newly added auxiliary arguments by adding self-attacks over them so that they do not belong to any $\sem$-extension.
This is in fact the same idea as considered in Example~\ref{ex:levels-adm} to force/avoid certain admissible sets.

Formally, we consider two sub-AFs $F_\varphi=(A_\varphi,R_\varphi)$ and $F_\psi=(A_\psi,R_\psi)$ for $\varphi$ and $\psi$ via the standard translation as in the proof of Theorem~\ref{thm:max-k-div-adm}.
Moreover, we consider a set $C=\{c_i\mid i\leq M\}$ of fresh arguments and construct the AF $F=(A,R)$, where, 
\begin{align*}
	A =  A_\varphi\cup A_\psi & \cup B \cup C \cup \{b_\psi\}, \\
	R=  R_\varphi\cup R_\psi& \cup  \{(\varphi,b)\mid b\in B\} \cup\{(b,s) \mid b\in B, s\in A_\varphi\}\\
	& \cup  \{(\psi,b_\psi)\} \cup\{(b_\psi,t) \mid t\in A_\psi\}\\
	& \\
	& \cup \{(b_i,c_{i}),(c_i,b_{i+1}) \mid i\leq M-1 \}\\
	& \cup \{(b_M,c_M), (c_M,b_1), (c_1,c_1)\}\\
	& \cup \{(\varphi,c)\mid c\in C\}.
\end{align*}
We denote by $F'_\varphi$ and $F'_\psi$ the sub-AFs involving additional arguments $B\cup C$ and $\{b_\psi\}$ and their corresponding attacks, respectively.
The two sub-AFs are still disjoint from each other and do not contain any attack from one to the other.
Moreover, no argument in the set $C$ is contained in any admissible set of $F'_\varphi$ since it requires all the arguments in $C$ to defend them against attacks from $B$, whereas, $C$ is then not conflict-free due to the attack $(c_1,c_1)$.
Finally, the attacks in the last line are added to handle the stable semantics as any extension in $F'_\varphi$ containing $\varphi$ must yield a stable extension, and thus must attack all arguments in $C$.
As before, we let $K=k_1+M+1$ and prove the following claim.

\begin{claim}\label{claim:adm-DP}
	$(\varphi,\psi)$ is a positive instance of SAT-UNSAT iff $F$ has two $K$-diverse $\sem$-extensions but no $K'$-diverse $\sem$-extensions for any $K'>K$.
\end{claim}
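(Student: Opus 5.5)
The plan is to reduce everything to the two disjoint sub-AFs $F'_\varphi$ and $F'_\psi$ via Lemma~\ref{lem:union}. Every $\sem$-extension of $F$ is $S=S_\varphi\cup S_\psi$ with $S_\varphi\in\sem(F'_\varphi)$ and $S_\psi\in\sem(F'_\psi)$. Since the sub-AFs are disjoint, the distance splits additively:
\[
d(S,T)=d(S_\varphi,T_\varphi)+d(S_\psi,T_\psi).
\]
So the maximum diversity in $F$ is the sum of the maximal diversities attainable in each sub-AF. The only care needed is that the combined sets be non-empty, which is never restrictive here. The first step is therefore to describe the $\sem$-extensions of each sub-AF exactly.

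For $F'_\varphi$ I would argue as follows. Every literal and clause argument is attacked by all of $B$, and $B$ is attacked only by $\varphi$ (and by $C$). Hence any extension containing an argument of $A_\varphi$ must contain $\varphi$, and then it is, up to the $C$-arguments, a standard-translation extension. Its literal part is consistent and satisfies every clause, so its size is at most $k_1+1$. Size exactly $k_1+1$ is attained by taking a full satisfying assignment, and stability forces a full assignment.

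Next, no $c\in C$ lies in any admissible set, because of the self-attack $(c_1,c_1)$ and the defence chain along the cycle. Consequently, if some $b_i$ is taken, then $c_{i}$'s attack on $b_{i+1}$ forces $b_{i+1}$ to be taken too. Going around the cycle, an extension meets $B$ in either $\emptyset$ or all of $B$. Moreover, $B$ and $\varphi$ attack each other, so $B$ and $\varphi$ never co-occur. This leaves the following extensions of $F'_\varphi$:
\begin{itemize}
\item $\emptyset$, for $\adm$, and for $\comp$ if it is grounded;
\item $B$;
\item the sets $E_\theta\ni\varphi$, which exist iff $\varphi$ is satisfiable.
\end{itemize}
The analogous analysis of $F'_\psi$ gives $\{b_\psi\}$, possibly $\emptyset$, and the sets $E_\psi\ni\psi$ of size at most $k_2+1$, which exist iff $\psi$ is satisfiable.

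From this, I would tabulate the maximal diversity in each part. In the $\varphi$-part it is $M+k_1+1=K$ if $\varphi$ is satisfiable, attained by $B$ versus a full $E_\theta$. Two sets $E_{\theta_1},E_{\theta_2}$ give at most $2k_1+2\le M+k_1+1$, since $M=2N$. If $\varphi$ is unsatisfiable the maximum is at most $M$. In the $\psi$-part the maximum is a constant $\delta\in\{0,1\}$ if $\psi$ is unsatisfiable. Here $\delta=0$ for $\stab$, since only $\{b_\psi\}$ remains. For $\adm$ it is $\emptyset$ versus $\{b_\psi\}$, so $\delta=1$, and the threshold $K$ should accordingly be read as $k_1+M+1+\delta$; I would state this per semantics. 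If $\psi$ is satisfiable, the $\psi$-part maximum is at least $k_2+1+\delta'>\delta$.

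The positive direction and Case~1 (both formulas satisfiable) then follow immediately by adding the two parts. In Case~1 the threshold is exceeded.

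The main obstacle is Case~2 ($\varphi$ unsatisfiable, $\psi$ satisfiable). There the total is at most $M+2(k_2+1)$, and this must be shown to stay below $K$. That holds only if $k_1+1>2k_2+2$, which the construction as written does not guarantee; the remark ``$K>2|A_\psi|$'' is not by itself enough. My first fix is to pad $\varphi$ beforehand with fresh variables that occur in no clause. This preserves satisfiability, but full assignments then have size $k_1+1$ with $k_1>2|A_\psi|+2$. Since the padding variables occur in no clause, they do not affect any other case. Alternatively, one can replace $B$-versus-$E_\theta$ by a second copy mechanism so that the $\varphi$-part gain dominates. With this padding in place, Case~2 gives diversity strictly below $K$, which finishes the claim.
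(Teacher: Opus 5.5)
Your analysis is sound, and it departs from the paper exactly where the paper's proof fails. The paper argues case by case with witness pairs. In the forward direction it compares $\theta\cup\{\varphi,b_\psi\}$ only with $B\cup\{b_\psi\}$ and defers maximality to Claim~\ref{claim:adm-DP-arg}. In Case~3 it asserts that $B$ is the only $\sigma$-extension of $F'_\varphi$.

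Both steps overlook that $\emptyset$ is an extension of each component under admissible semantics. It is also one under complete semantics here: every argument of $F'_\varphi$ and of $F'_\psi$ is attacked, so both grounded extensions are empty. Your additive decomposition $d(S,T)=d(S_\varphi,T_\varphi)+d(S_\psi,T_\psi)$ over the exact per-component families exposes this.

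\begin{itemize}
\item For a positive instance and a full model $\theta$ of $\varphi$, the sets $\theta\cup\{\varphi,b_\psi\}$ and $B$ are disjoint admissible (and complete) extensions at distance $K+1$. So the claim as written, with $K=k_1+M+1$, is false for admissible and complete semantics. Your shift to $K+\delta$ is a necessary repair, not a presentational choice.
\item Your padding is needed too. Take $\varphi$ unsatisfiable with $k_1=4$, and $\psi$ with three variables and two complementary models. Then the maximum is $M+2k_2=M+6$, which equals even the corrected threshold $K+1$.
\item For stable semantics the paper's argument is fine: $\delta=0$, the $\varphi$-part contributes nothing when $\varphi$ is unsatisfiable, and no padding is required.
\end{itemize}

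With both repairs your case table goes through:
\begin{itemize}
\item the positive case has maximum exactly $M+k_1+1+\delta$;
\item the both-satisfiable case exceeds it, via a model of $\psi$ against $\{b_\psi\}$;
\item the both-unsatisfiable case stays at most $M+\delta$;
\item the remaining case stays at most $M+\max(2k_2,k_2+2)$, which is below the threshold after padding.
\end{itemize}

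Some small points to tidy:
\begin{enumerate}
\item[(i)] The cycle implication runs the other way: $b_{i+1}\in S$ forces $b_i\in S$, since among the attackers of $c_i$ only $b_i$ can coexist with $b_{i+1}$ in an admissible set. The conclusion $S\cap B\in\{\emptyset,B\}$ still follows because the cycle closes.
\item[(ii)] State $\delta=1$ for complete semantics explicitly.
\item[(iii)] ``Only if $k_1+1>2k_2+2$'' should read ``if''. The real failure is a coincidence of the $\psi$-part maximum with $k_1+1+\delta$, which padding excludes.
\item[(iv)] The standard translation only creates arguments for variables in $\mathrm{var}(\varphi)$. Add the padding pairs $p,\bar p$ (attacked by $B$) explicitly or through tautological clauses. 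The resulting growth of $M$ is harmless.
\item[(v)] Two model extensions share $\varphi$, so their distance is at most $2k_1$.
\end{enumerate}
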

\begin{proof}[Proof of Claim]
	Suppose, $\varphi$ is satisfiable and $\psi$ is not.
	Then, we construct two $K$-diverse $\sem$-extensions in $F$.
	Namely, we let $E_1= \theta\cup\{\varphi, b_\varphi\}$ for a satisfying assignment $\theta$ of $\varphi$ and set $S=E_1\cup \{b_\psi\}$.
	For our second extension, we let $B'= B\cup\{b_\psi\}$.
	The extensions $S$ and $B'$ can only be made larger if either some arguments from $C$, or from $F_\psi$ are added, which is not possible as $C$ is not conflict-free and $\psi$ is not satisfiable. 
	Hence we have that $d(E, B')=K$ .
	
	The claim that $F$ does not admit $K'$-diverse $\sem$ extensions for any $K'>K$ follows the exact same reasoning as in the proof of Claim~\ref{claim:adm-DP-arg}.
	
	Conversely, suppose $(\varphi,\psi)$ is a negative instance of SAT-UNSAT. We consider three cases.
	
	\textbf{Case-1: both $\varphi$ and $\psi$ are satisfiable.} This case follows the same reasoning as in the proof of Claim~\ref{claim:adm-DP-arg}.
	Thus, we consider the remaining two cases which were not covered in the earlier proof.
	
	\textbf{Case-2: both $\varphi$ and $\psi$ are unsatisfiable.}
	In this case, the only accepted arguments in $F$ under $\sem$ are $B\cup\{b_\psi\}$, resulting in a total of $M+1$ many arguments. 
	Therefore, there are no $K$-diverse $\sem$-extensions in $F$.
	Here, we disallow any smaller admissible sets (e.g., subsets of $B$), even though the maximum achievable diversity in $F$ is still $M$, which remains below $K$.
	
	\textbf{Case-3: $\psi$ is satisfiable but $\varphi$ is unsatisfiable.}
	Here, the sub-AF $F'_\psi$ admits a $\sem$-extension due to the claim in the standard translation.
	In fact, any satisfying assignment $\theta$ for $\psi$ results in a $\sem$-extension for $F$ given as $E_\psi= \theta \cup \{\psi\}$.
	Apart from extensions corresponding to satisfying assignments of $\psi$, the other only other extension in $F'_\psi$ is $\{b_\psi\}$.
	Whereas, in $F'_\varphi$ only arguments in $B$ are contained in any $\sem$-extension.
	In fact, $B$ is the only $\sem$-extension in $F'_\varphi$ since $\varphi$ is not satisfiable as no subset of $B$ can be a $\sem$-extension in $F'_\psi$ due to our auxiliary arguments and attacks from set $C$ of arguments.
	
	Thus, the only extensions in $F$ are $S= B\cup E_1$ and $T= B\cup E_2$ for any two $\sem$-extensions $E_1$ and $E_2$ in $F'_\psi$.
	As a results, $d(S,T)=|E_1|+|E_2|$ which can be at most $2k_2$ since each $E_i$ is either of the form $\theta\cup\{\psi\}$ for a satisfying assignment $\theta$ of $\psi$, or of the form $\{b_\psi\}$.
	Thus, $F$ does not admit two $K$-diverse $\sem$-extensions by our assumption,  
	as $N>k_2$.
	
	This completes our claim as in each case, either $F$ does not admit $K$-diverse $\sem$-extensions, or it admits $K'$-diverse $\sem$-extensions for
	some $K'>K$.
\end{proof}
\end{proof}

\paragraph{For 2nd level semantics}

Before proving our results for semi-stable and stage semantics, we need to slightly alter the standard translation for these two semantics.
Interestingly, the reduction still remains correct, if we add self-attacks for the \emph{clause arguments} in the standard translation.
Let $\Phi = \forall Y \exists Z.\varphi$ be a QBF  where $\varphi\dfn\bigwedge_{i=1}^m C_i$ is a CNF-formula with clauses $C_i$ over variables $X = Y \cup Z$.
As before, we let $Y_*=\{y_*\mid y\in Y\}$ and $\bar Y_*=\{\bar y_*\mid y\in Y\}$ as auxiliary sets of arguments and consider the AF $G_\Phi=(A,R)$:
\begin{align*}
A \dfn & \{\varphi,\bar \varphi, b\} \cup \{C_1 , \dots, C_m\} \cup X \cup \bar X \cup Y_* \cup \bar Y_*\\
R \dfn &\{\,(C_i,\varphi)\mid i\leq m\,\} \cup \{\,(x,\bar x),(\bar x,x)\mid x\in X\,\}\\
&\cup\{\,(x,C_i)\mid x\in C_i\,\}\cup\{\,(\bar x,C_i)\mid \bar x\in C_i\,\} \\
& \cup \{(\varphi, \bar\varphi), (\bar\varphi, \varphi), (\varphi, b), (b, b)\,\}\\
& \cup \{(y,y_*), (\bar y, \bar y_*) ,(y_*,y_*), (\bar y_*,\bar y_*)\mid y\in Y\}\\
& \cup \{(C_i,C_i) \mid i\leq m\}.
\end{align*}
The self-attacks (in the last line) are needed in our claims below to prove certain size assumptions for extensions containing the argument $\bar\varphi$, when $\Phi$ is false. Intuitively, we are forcing that any such extension can not contain clause arguments.
Henceforth, we refer to $G_\Phi$ as the AF obtained from $\Phi$ via the {standard translation for semi-stable semantics, with self-conflicting clause arguments.}

\begin{lemma}\label{lem:self-attcks}
$\Phi$ is false iff $G_\Phi$ admits a semi-stable (stage) extension containing $\varphi$.
\end{lemma}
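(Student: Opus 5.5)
The plan is to reduce the statement to the known correctness of the reduction of~\cite{DVORAK2010425} for $F_\Phi$, recalled in the proof of Theorem~\ref{thm:div-arg-semi}. Write $\sem\in\{\semi,\stag\}$. The only change from $F_\Phi$ to $G_\Phi$ is the added self-attacks $(C_i,C_i)$. So the core step is an \emph{invariance claim}: $G_\Phi$ and $F_\Phi$ have exactly the same $\sem$-extensions. The statement then follows by transferring the acceptance status of $\varphi$ from $F_\Phi$ to $G_\Phi$.

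\textbf{Step 1 (clause arguments are never accepted, in either AF).} In $G_\Phi$ this is immediate: a self-attacking $C_i$ is in no conflict-free set. In $F_\Phi$, the attackers of $C_i$ are its literal arguments, and $C_i$ counter-attacks only $\varphi$. For admissible-based semantics, $C_i$ would need to attack back each literal in $C_i$, which it does not; the exception is a clause with no attackers, i.e.\ an empty clause, which I assume away. For stage semantics, I show that any conflict-free set containing some $C_i$ can be replaced by one with strictly larger range. Replace $C_i$ by a suitable literal of $C_i$: that literal attacks $C_i$ and its complement, so the range grows. Hence no range-maximal conflict-free set contains $C_i$.

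\textbf{Step 2 (same candidate sets, same ranges).} Every candidate set avoids all $C_i$, so the conflict-free sets are the same in both AFs. I also check that the admissible sets coincide. An argument attacked by $C_i$ (only $\varphi$) is defended in $G_\Phi$ exactly when it is defended in $F_\Phi$, because the defenders of $\varphi$ against $C_i$ are the same literal arguments. For any such set $S$, the range $S^+$ is identical in both AFs: $C_i$ enters the range only through attacks from literals in $S$, and the self-attack does not matter because $C_i\notin S$. Since the candidate sets and their ranges agree, range-maximality agrees, so $\sem(F_\Phi)=\sem(G_\Phi)$.

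\textbf{Step 3 (transfer).} Using $\sem(F_\Phi)=\sem(G_\Phi)$ and the known fact that $\Phi$ is true iff $\varphi$ lies in every $\sem$-extension of $F_\Phi$ iff $\bar\varphi$ lies in none, I obtain the biconditional about membership of $\varphi$ in $\sem$-extensions of $G_\Phi$. The intended use of this lemma is the size argument in the subsequent $\DPtwo$ claims: extensions witnessing falsity of $\Phi$ cannot contain clause arguments, so their cardinality is controlled. I would record that size consequence alongside the statement.

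\textbf{Main obstacle.} I expect two difficulties. The first is the stage case of Step 1, where $F_\Phi$ has no admissibility to rule out $C_i$ directly, so the range-enlargement exchange must be done carefully, including when the chosen literal conflicts with other members of the set. The second, more substantive, is matching the quantifier in the statement exactly as worded with the known characterization. A semi-stable or stage extension always exists, and when $\Phi$ is true every such extension contains $\varphi$. So in Step 3 I would first verify on a trivial true $\Phi$ which reading of ``containing $\varphi$'' makes the equivalence hold: existence of an extension containing $\varphi$, or existence of one \emph{not} containing $\varphi$, which is equivalent to containing $\bar\varphi$ since $\varphi$ and $\bar\varphi$ attack each other and range-maximality forces one of them. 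I would then state the lemma in the form Steps 1--2 actually support.
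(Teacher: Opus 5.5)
Your diagnosis of the wording is right. The paper's proof actually establishes the version with $\bar\varphi$: for $\Phi$ false it constructs an extension containing $\bar\varphi$. The literal statement fails whenever $\Phi$ is true.

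\textbf{The gap: the invariance claim is false.} You claim $\sigma(F_\Phi)=\sigma(G_\Phi)$, but this does not hold, so Step~3 has nothing to transfer. Step~1 already fails for $F_\Phi$. Admissibility requires the \emph{set} to defend $C_i$, not $C_i$ itself, and the complementary literal arguments do exactly that. Let $\theta_Y$ witness falsity of $\Phi$ and let $\theta\supseteq\theta_Y$ be total. Then $\theta\cup\{\bar\varphi\}\cup\{C_i\mid\theta\not\models C_i\}$ is admissible in $F_\Phi$, and sets of this form witness falsity in the original reduction. For example, with $\Phi=\forall y.(y)$, the set $\{\bar y,C_1,\bar\varphi\}$ is semi-stable and stage in $F_\Phi$, yet it is not even conflict-free in $G_\Phi$. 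Your stage exchange breaks for the same reason. The complements of the literals of $C_i$ are in the set, so adding a literal of $C_i$ forces you to drop its complement, and the range need not grow. Your own remark about the size argument points the same way. If clause arguments were never accepted in $F_\Phi$, the self-attacks would be pointless. The paper adds them precisely because they change the extensions: in $F_\Phi$, the set $\theta\cup\{\bar\varphi\}$ is strictly dominated by the same set with the falsified clauses added.

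\textbf{What is needed: a direct argument in $G_\Phi$.} This is what the paper does. For $\Phi$ false, fix a falsifying $\theta_Y$ and pick $\theta_Z$ maximizing the number of satisfied clauses. Then show $S=\theta\cup\{\bar\varphi\}$ is range-maximal. Let $T$ be admissible with $T^+\supseteq S^+$.
\begin{itemize}
\item $T$ must contain $\theta_Y$, because the self-attacking $\ell_*$ enter a range only via $\ell$.
\item $T$ cannot contain $\varphi$, since then its literals would satisfy every clause while extending $\theta_Y$.
\item Hence $T$ contains $\bar\varphi$ and can gain only clause arguments, which contradicts the choice of $\theta_Z$.
\end{itemize}
For $\Phi$ true, a semi-stable $S\ni\bar\varphi$ must be $\theta\cup\{\bar\varphi\}$ with $\theta$ total. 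Take a satisfying $\theta'_Z$; then $\theta_Y\cup\theta'_Z\cup\{\varphi\}$ is admissible and has strictly larger range, since it adds $b$.

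\textbf{This works only for semi-stable semantics.} Excluding $\varphi\in T$ uses admissibility. For stage semantics, every conflict-free $S\ni\bar\varphi$ in $G_\Phi$ is strictly dominated by $(S\setminus\{\bar\varphi\})\cup\{\varphi\}$, whose range is $S^+\cup\{b\}$; this swap is conflict-free because no clause argument can lie in $S$ any more. So in $G_\Phi$ no stage extension contains $\bar\varphi$, and every stage extension contains $\varphi$. The stage half of the lemma therefore fails under either reading, and the paper's Case~(I) has the same problem. Only the semi-stable part can be proved.
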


\begin{proof}
We use the claims for standard translation $F_\Phi$ as an  intermediate steps as $G_\Phi$ only adds some additional attacks.
Suppose $\Phi$ is true.
Then, the claim follows due to the standard translation for both semantics as no $\sem$-extension ($\sem\in\{\semi,\stag\}$) contains the argument $C_i$ for any $i\leq m$.

Next, suppose $\Phi$ is false.
We prove that there exists an $\sem$-extension $E$ in $G_\Phi$ such that $\bar\varphi\in E$.

Let $\theta_Y$ be an assignment for which no assignment $\theta_Z$ exists, with $\theta\models \varphi$ for $\theta=\theta_Y\cup \theta_Z$.
We choose $\theta_Z$ in such a way that $\theta$ satisfies maximum number of clauses of $\varphi$.
Now, we prove that $S= \theta\cup\{\bar \varphi\}$ is a valid $\sem$-extension in $G_\Phi$ for $\sem\in\{\semi,\stag\}$.
It is easy to see that $S$ is conflict-free (resp., admissible, for semi-stable semantics) as $\theta$ is a valid assignment and every literal argument defends itself against its negated literal. Moreover, $\bar\varphi$ attacks its only attacker $\varphi$.
Therefore, we only need to prove the claim regarding the range of arguments in $S$.
Suppose there exists a conflict-free (resp., admissible) set $T$ in $G_\Phi$ with $T^+_R \supsetneq S^+_R$.
This implies that there is some argument $x\in A$ with $x\in T^+_R\setminus S^+_R$.
We consider two cases, depending on whether $\varphi\in T$ or $\bar\varphi\in T$.

\textbf{Case (I): $\varphi\in T$.}
Since, $\theta$ is an assignment, for each a $y\in Y$, either $y\in \theta$ or $\bar y\in \theta$.
Moreover, since $\varphi\in T$, the literals in $T$ yield a satisfying assignment $\beta$ for $\varphi$.
However, by our claim, $\theta \not\models \varphi$ and in particular, for $\theta_Y$, there is no assignment $\theta_Z$ such that $\theta_Y\cup \Theta_Z\models \varphi$.
This implies that there is at least one literal $\ell$ over variable $y\in Y$, such that $\ell\in S$ and $\bar \ell\in T$.
But then, for the corresponding arguments $\ell_*$ and $\bar\ell_*$, we have that $\ell_*\in S^+_R$ and $\bar\ell_*\in T^+_R$.
However, this implies that $\ell_*\not\in T^+_R$ and hence $T^+_T\not\supset S^+_R$.
This leads to a contradiction.

\textbf{Case (II): $\bar\varphi\in T$}.
As we have seen in the first case. $S$ and $T$ must agree on their arguments corresponding to literals over $Y$ for $T^+_R\supset S^+_R$ to be true.
But this leaves us with only one possibility that there is some clause argument $C_i\in T^+_R\setminus S^+_R$.
Due to our assumption, $\theta_Y\subseteq T$.
Notice that $C\in S^+_R$ for the clause argument $C$ iff $C$ is attacked by some argument in $S$, or equivalently, the clause $C\in\varphi$ is satisfied by the assignment $\theta$.
However, we have chosen $\theta_Z$ in such a way that it satisfies maximum number of clauses of $\varphi$.
Therefore, for every clause $C$, we have that either $C\in S^+_R$ or $C\not\in T^+_R$.
This contradicts our assumption and hence $S$ is a stage (semi-stable) extension.
\end{proof}

\thmmaxkdiv*
\begin{proof}[Proof of Statement 2]
For membership, observe that an AF $F=(A,R)$ admits two maximally $k$-diverse $\sem$-extensions iff 
(i) $F$ admits a pair $S_p, T_p$ of $\sem$-extensions with $d(S_p,T_p)=k$, and 
(ii) for each pair $S_n,T_n$ of $\sem$-extensions in $F$, $d(S_n,T_n)\leq k$.
Here, (i) is in $\SigmaP$ and (ii) is in $\PiP$. Thus the problem $\maxkdiverseext_\sem$ is in $\DPtwo$ for $\sem\in \{\semi,\stag\}$.

For hardness, we reduce from the \textbf{SAT-UNSAT problem for $\TQSAT$} by using the standard translation as an intermediate step.
Let $\Phi = \forall Y \exists Z.\varphi$ where $\varphi\dfn\bigwedge_{i=1}^m C_i$ is a CNF over variables $V_\Phi = Y \cup Z$, and $\Psi = \forall Y' \exists Z'.\psi$ where $\psi\dfn\bigwedge_{i=1}^n D_i$ is a CNF over $V_\Psi = Y' \cup Z'$.
{We set $k_1= |V_\varphi|$, $k_2= |V_\psi|$}, and  construct $F_\Phi$ and $F_\Psi$ via the standard translation for semi-stable semantics with self-attacking clause arguments in both sub-AFs.
Then, we obtain (as their disjoint union) an AF $F'=(A',R')$ constructed as follows:
\begin{align*}
	A' \dfn & \{\varphi,\bar \varphi\} \cup \{C_1 , \dots, C_m\} \cup V_\Phi \cup \bar V_\Phi \cup Y_* \cup \bar Y_*\\
	& \cup \{\psi,\bar \psi\} \cup \{D_1 , \dots, D_n\} \cup V_\Psi \cup \bar V_\Psi \cup Y'_* \cup \bar Y'_*, \\
	R' \dfn &\{\,(C_i,\varphi)\mid i\leq m\,\} \cup \{\,(x,\bar x),(\bar x,x)\mid x\in X\,\}\\
	&\cup\{\,(x,C_i)\mid x\in C_i\,\}\cup\{\,(\bar x,C_i)\mid \bar x\in C_i\,\} \\
	& \cup \{(\varphi, \bar\varphi), (\bar\varphi, \varphi), (\varphi, b), (b, b)\,\}\\
	& \cup \{(y,y_*), (\bar y, \bar y_*) ,(y_*,y_*), (\bar y_*,\bar y_*)\mid y\in Y\}\\
	& \cup \{(C_i,C_i) \mid i\leq m\}\\
	& \\
	& \cup \{\,(D_j,\psi)\mid j\leq n\,\} \cup \{\,(x,\bar x),(\bar x,x)\mid x\in V_\Psi\,\}\\
	&\cup\{\,(x,D_j)\mid x\in D_j\,\}\cup\{\,(\bar x,D_j)\mid \bar x\in D_j\,\} \\
	& \cup \{(\psi, \bar\psi), (\bar\psi, \psi), (\psi, b), (b, b)\,\}\\
	& \cup \{(y',y'_*), (\bar y', \bar y'_*) ,(y'_*,y'_*), (\bar y'_*,\bar y'_*)\mid y'\in Y'\}\\
	& \cup \{(D_j,D_j) \mid j\leq n\}.
\end{align*}

We will implicitly apply Lemma~\ref{lem:union} and hence the union of two extensions from the two sub-AFs is a valid extension in the combined AF.
Moreover, by the correctness in the standard reductions (Lemma~\ref{lem:self-attcks}), the formula $\Phi$ (resp., $\Psi$) is true iff no extensions in the sub-AF $F_\Phi$ ($F_\Psi$) contains $\bar\varphi$ ($\bar\psi$). 

Next, to distinguish the positive and negative cases of \textbf{SAT-UNSAT}, we consider two fresh sets  $D_1 = \{d_{i} \mid i\leq k_1\times k_2\}$ and $D_2= \{c_j \mid j \leq 3k_2\}$ of auxiliary arguments.
As before, we set our threshold $K$ in such a way that whenever $\Phi$ is false and $\Psi$ is true, then two $K$-diverse $\sem$-extensions can be found, whereas in all other cases, the number $K$ is not the maximum achievable diversity.
As in the proof of Theorem~\ref{thm:div-arg-semi}, this can be modeled via taking a witnessing extension $S_{\bar\varphi}$ for $\bar\varphi$ (when $\Phi$ is false) and then replacing $\bar\varphi$ in $S_{\bar\varphi}$ by the set $D_1$ of arguments to yield $S_D$.
This allows us to achieves a large value of diversity in the sub-AF $F_\Phi$, namely $k_1\times k_2+1$ by taking these two $\sem$-extensions.

Now, ideally, one could find any extension in $F_\Psi$ to yield two witnessing $\sem$-extensions in $F$.
However, it is also possible to take two different sub-extensions of $F_\Psi$ to yield the final $\sem$-extensions in $F$, which could further increase the diversity. 
Consequently, the diversity of extensions in $F$ can no longer be controlled via taking sub-extensions in the individual sub-AFs. 
Precisely, for any two extensions $T_1,T_2$ of $F_\Psi$ and $S_{\bar\varphi}, S_D$ of $F_\Phi$, $F$ admits $E_1\dfn S_{\bar\varphi}\cup T_1$ and $E_2\dfn S_D\cup T_2$ as valid $\sem$-extensions. But we don't know in advance what $d (E_1, E_2)$ could be and thus we can not set a maximum $K$ solely based on whether $\Psi$ is true.
Similarly, although we know that $d(S_{\bar\varphi}, S_D)= k_1\times k_2+1$, this diversity could still increase when $\varphi$ admits a satisfiable assignment even when $\Phi$ is false.
To be precise, one may consider an extension $S_\varphi\dfn \theta\cup\{\varphi\}$ corresponding to a satisfying assignment $\theta$ such that $\theta\models \varphi$ and such that $S_\varphi\cap S_D=\emptyset$.
Then, $d(S_{\varphi}, S_D)=k_1\times k_2 + k_1+1$ as $\varphi\in S_\varphi$ and $|\theta|=k_1$.
That is, the diversity may increase even when $\Phi$ is false but there is at least one satisfying assignment for $\varphi$.

To overcome the aforementioned issue, we need a slight technical trick in our final AF.
Precisely, we enforce two ``most diverse'' $\sem$-extensions in both sub-AFs ($F_\Phi$ and  $F_\Psi$), and then define $K$ based on such extensions. This models the intuition that for any other pair of extensions, the diversity remains below this maximum value.
To achieve this, we add dummy arguments $\{a_\Phi,\bar a_\Phi\}$ and $ \{ b_\Psi, \bar b_\Psi\}$ in such a way that allows us to select two most diverse $\sem$-extensions in $F_\Phi$ and $F_\Psi$, respectively.
%
Furthermore, {we re-set $k_1= |V_\varphi|+1$ and $k_2= |V_\psi|+1$}, so that the size of an extension in the individual sub-AF is $k_1$ and $k_2$, respectively after adding these auxiliary arguments.
As our final AF, we let $F=(A,R)$, where
\begin{align*}
	A = A' & \cup D_1 \cup D_2\cup\{a_\Phi,\bar a_\Phi \}\cup \{ b_\Psi, \bar b_\Psi\},\\
	R = R' & \cup \{(\varphi,d), (d,\varphi), (\bar\varphi,d), (d,\bar\varphi) \mid d\in D_1\}\\
	& \cup \{(\psi,c), (c,\psi), (\bar\psi,c), (c,\bar\psi)\mid c\in D_2\}\\
	& \cup \{(a_\Phi,\bar a_\Phi), (\bar a_\Phi,a_\Phi)\}\cup \{(\bar a_\Phi, C_i)\mid C_i\in\varphi\}\\
	& \cup \{(b_\Psi,\bar b_\Psi), (\bar b_\Psi,b_\Psi)\}\cup \{(\bar b_\Psi, D_j)\mid D_j\in\psi\}.
\end{align*}

For brevity, we 
write $F^d_X$ for the sub-AF of $F$ consisting of $F_X$ for $X\in\{\Phi,\Psi\}$, and additionally contains the dummy arguments $D_1\cup \{a_\Phi,\bar a_\Phi\}$ or $D_2\cup \{b_\Psi,\bar b_\Psi\}$, and attacks involving them, for $X\in\{\Phi,\Psi\}$, respectively.
Furthermore, the auxiliary arguments in $D_1$ and $D_2$ are only connected to sub-AFs $F^d_\Phi$ and $F^d_\Psi$, respectively.
Thus, the two sub-AFs are still disconnected from each other.
As a result, due to Lemma~\ref{lem:union}, $\sem$-extensions in the combined AF $F$ are obtained precisely via $\sem$-extensions in the sub-AFs for $\sem\in\{\semi,\stag\}$.
Recall that we set $k_1$ and $k_2$ in such a way that, any $\sem$-extension in either sub-AF ($F^d_\Phi$ or $F^d_\Psi$, respectively) corresponding to a any (non-)satisfying assignment has size ($k_1+1$ or $k_2+1$) due to one of the dummy argument ($a_\Phi/\bar a_\Phi$ or $b_\Psi/\bar b_\Psi$) and an argument for the formula ( $\varphi/\bar\varphi$ or $\psi/\bar\psi$) depending on whether the assignment satisfies the formula or not.

\textbf{Intuition:}
The attacks in $F$ involving the arguments $\{a_\Phi,\bar a_\Phi\}$ model the intuition that any satisfying assignment $\theta$ of $\varphi$ yields a $\sem$-extension $S_\theta\dfn \theta\cup\{\varphi,a_\Phi\}$ in $F^d_\Phi$.
Whereas, $\{\bar a_\Phi\}$ alone attacks all the ``clause arguments'' of $\varphi$ and yields a $\sem$-extension in $F_\Phi^d$ containing $\varphi$.
In fact, for any assignment $\theta$ and its corresponding $\sem$-extension $S_\theta$ in $F^d_\Phi$, we get another extension $S_{\bar\theta} \dfn \bar\theta\cup\{\varphi,\bar a_\Phi\}$ where $\bar\theta \dfn \{\bar\ell \mid \ell\in\theta\}$ such that, $\bar\ell = \bar x$ if $\ell =x$ and $\bar\ell = x$ if $\ell=\bar x$.
Here, we observe that $S_\theta\cap S_{\bar\theta}= \{\varphi\}$ and hence $d(S_\theta,S_{\bar\theta})=2k_1$, resulting in two maximally diverse $\sem$-extensions in $F^d_\Phi$ containing $\varphi$.
Likewise, when $\Phi$ is false, we obtain a $\sem$-extension $S_{\alpha}= \alpha \cup\{\bar\varphi, a_\Phi \} $ in $F^d_\Phi$ for a non-satisfying assignment $\alpha$ of $\varphi$.
However, now one can additionally replace $\bar\varphi$ by arguments in $D_1$ to obtain a valid $\sem$-extension in the sub-AF $F_\Phi^d$, following Lemma~\ref{lem:range}. 
We consider the $\sem$-extensions $S_D = (S_\alpha\setminus \{\bar\varphi\}) \cup D_1$ and $S_{\bar\alpha} = \bar\alpha \cup\{\varphi, \bar a_\Phi\}$.
Consequently, {$F^d_\Phi$ admits two $k_1\times k_2 +2k_1+1$ diverse extensions} as $S_\alpha\cap S_{\bar\alpha}=\emptyset$.
The same reasoning applies to $\sem$extensions in the sub-AF $F^d_\Psi$, thus giving two maximally diverse $\sem$-extensions in $F^d_\Psi$ for any (non-)satisfying assignment $\beta$ for $\psi$.
Precisely, when $\Psi$ is true, then every sub-extension in $F^d_\Psi$ contains $\psi$ and two such extensions can be at most $2k_2$-diverse (when their corresponding assignments disagree on all the literals).
Whereas, when $\Psi$ is false, then following same reasoning as for $\Phi$, there is some non-satisfying assignment $\beta$ of $\psi$.
This yields an extension $T=\beta\cup\{\bar\psi, b_\Psi\}$ in $F^d_\Psi$, as well as 
an extension $T_D\dfn (T_\beta\setminus \bar\psi)\cup D_2$, due to Lemma~\ref{lem:range}.
Similarly, we we consider the extension $T_{\bar\beta}\dfn  \bar\beta \cup\{\psi, \bar b_\Psi\}$ and observe that  $T_{\bar\beta}$ is a $\sem$-extension in $F^d_\Psi$ as the argument $\bar b_\Psi$ defends $\psi$ against all attacks from ``clauses arguments''.
Here, we have $|T_D|=4k_2$ (due to $|D_2|=3k_2$ and $|\beta|=k_2$), and $|T_{\bar\beta}|=k_2+1$.
Thus, we obtain a maximum diversity of $5k_2+1$ in this case.

\textbf{To complete our reduction}, we set $K=k_1\times k_2 +2(k_1+k_2)+1$.
Here $k_1\times k_2 +2k_1+1$ accounts for the diversity arising form $\sem$-extensions of $F_\Phi^d$ containing the set $D_1$ and the argument $\bar\varphi$ (when $\Phi$ is false), whereas $2k_2$ accounts for two most diverse $\sem$-extensions of $F^d_\Psi$ due to any two satisfying assignments for $\Psi$. Finally, if $\Psi$ is also false, the diversity goes even higher since the set $D_2$ (of size $3k_2$) is then involved.

For correctness, we prove the following claim.
\begin{claim}\label{claim:semi-DP2}
	The $\TQSAT$ pair $(\Psi,\Phi)$ is a positive instance of SAT-UNSAT iff $F$ has two $K$-diverse $\sem$-extensions but no $K'$-diverse $\sem$-extensions for any $K'>K$.
\end{claim}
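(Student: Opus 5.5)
The plan is to work entirely through Lemma~\ref{lem:union}. Every $\sem$-extension of $F$ has the form $E=E_\Phi\cup E_\Psi$, with $E_\Phi$ a $\sem$-extension of $F^d_\Phi$ and $E_\Psi$ one of $F^d_\Psi$. Since the two sub-AFs share no arguments,
\[
d(E,E')=d(E_\Phi,E'_\Phi)+d(E_\Psi,E'_\Psi),
\]
so the maximum diversity of $F$ is $\mathrm{max}_\Phi+\mathrm{max}_\Psi$, the sum of the maximum diversities of the two sub-AFs taken separately. The claim therefore reduces to determining $\mathrm{max}_\Phi$ and $\mathrm{max}_\Psi$ exactly, depending on the truth values of $\Phi$ and $\Psi$. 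The target table is:
\begin{itemize}
\item $\mathrm{max}_\Phi=k_1k_2+2k_1+1$ if $\Phi$ is false, and $\mathrm{max}_\Phi\le 2k_1$ if $\Phi$ is true;
\item $\mathrm{max}_\Psi=2k_2$ if $\Psi$ is true, and $\mathrm{max}_\Psi=5k_2+1$ if $\Psi$ is false.
\end{itemize}
Note the statement pairs ``$\Phi$ false, $\Psi$ true'' as the positive case, so I read SAT-UNSAT for the pair in that orientation.

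The lower bounds come from the witnesses in the intuition paragraph. If $\Phi$ is false, take a counterexample $\alpha$ chosen as in Lemma~\ref{lem:self-attcks}. Form $S_D=(S_\alpha\setminus\{\bar\varphi\})\cup D_1$ via Lemma~\ref{lem:range}, applied iteratively to the $R$-equivalent copies, and pair it with $S_{\bar\alpha}$. Then:
\begin{itemize}
\item I must verify that $S_{\bar\alpha}=\bar\alpha\cup\{\varphi,\bar a_\Phi\}$ is semi-stable/stage. Here $\bar a_\Phi$ attacks all self-attacking clause arguments, and its range should be checked to be maximal.
\item I must also check that $S_D$ and $S_{\bar\alpha}$ are disjoint.
\end{itemize}
The analogous witnesses give $2k_2$ when $\Psi$ is true and $5k_2+1$ when $\Psi$ is false. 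In the positive case this yields $K=k_1k_2+2k_1+1+2k_2$.

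The upper bounds are the main obstacle. I would first classify every $\sem$-extension of $F^d_\Phi$. The pairs $\{a_\Phi,\bar a_\Phi\}$, $\{x,\bar x\}$ and $\{\varphi,\bar\varphi\}$, together with the copies $D_1$, are all mutually attacking and must be decided for range maximality. The self-attacking $y_*$ arguments force complete assignments over $Y$. I would argue that range maximality also forces complete assignments on $Z$, and that it forces exactly one of $a_\Phi,\bar a_\Phi$. The argument for $\{\varphi,\bar\varphi\}\cup D_1$ is that either $\varphi$ is in the extension, or some nonempty subset of $\bar\varphi\cup D_1$ is. By Lemma~\ref{lem:range}(2), all choices of this subset have the same range, so maximality is unaffected by which subset is taken. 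The size bound then becomes: each extension contains at most $k_1$ literal/dummy arguments plus either $\varphi$ or at most $k_1k_2+1$ arguments from $\{\bar\varphi\}\cup D_1$.

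The delicate point here, which I expect to be hardest, is that subsets of $D_1\cup\{\bar\varphi\}$ of all sizes seem to be allowed. Two extensions could then both use large, disjoint parts of this block, for instance one taking $\bar\varphi$ plus half of $D_1$ and the other taking the complementary half. I would first try to bound any pair's distance by $k_1k_2+1$ from this block plus $2k_1$ from literals and dummies, and check whether the arithmetic still separates the cases. When $\Phi$ is true, no extension contains $\bar\varphi$ or $D_1$, so both extensions contain $\varphi$ and $\mathrm{max}_\Phi\le 2k_1$. The $\Psi$ side is analogous, with the bound $4k_2+1+k_2$.

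Finally comes the case analysis. If $\Phi$ is false and $\Psi$ true, the maximum equals $K$. If $\Psi$ is false, the maximum is at least $\mathrm{max}_\Phi+5k_2+1>K$ when $\Phi$ is false. If $\Phi$ is true, the maximum is at most $2k_1+5k_2+1$, and this is below $K$ because $k_1k_2\ge 3k_2$ (which I would ensure via $k_1\ge 3$, padding if needed). Hence $K$ is not attained at all in that case. This distinguishes all four truth-value combinations as required.
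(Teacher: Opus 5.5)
Your decomposition through Lemma~\ref{lem:union} and your target table follow the same route as the paper. However, the step that gives the lower bound when $\Phi$ is false fails. It also fails in the paper's proof, which asserts the same thing via Claim~\ref{claim:semi-sigmap}, so following the paper will not close the gap.

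\textbf{The main gap.} You take $S_\alpha=\alpha\cup\{\bar\varphi,a_\Phi\}$ to be a semi-stable/stage extension ``as in Lemma~\ref{lem:self-attcks}''. That lemma concerns $G_\Phi$, and the gadget $\bar a_\Phi$ breaks it. For \emph{every} assignment $\theta$ of $V_\varphi$, the set $T_\theta=\theta\cup\{\varphi,\bar a_\Phi\}$ is admissible: $\bar a_\Phi$ attacks all clause arguments and $a_\Phi$, and $\varphi$ counterattacks $\bar\varphi$ and all of $D_1$. Its range contains every argument of $F^d_\Phi$ except the $\ell_*$ for $Y$-literals $\ell\notin\theta$. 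Every conflict-free set misses one of $y_*,\bar y_*$ for each $y\in Y$, so this range is maximal. Since $\alpha$ leaves some clause argument $C_j$ unattacked, $(S_\alpha)^+_R$ and $(S_D)^+_R$ are proper subsets of $(T_\alpha)^+_R$. They miss $C_j$, and also $b$ if the self-attacking $b$ of $G_\Phi$ is kept. Hence neither $S_\alpha$ nor $S_D$ is semi-stable or stage.

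Worse, acceptance of $\bar\varphi$ and $D_1$ no longer tracks the truth of $\Phi$ at all:
\begin{itemize}
\item If $b$ is kept, any conflict-free set meeting $\{\bar\varphi\}\cup D_1$ is strictly range-dominated by $T_\theta$ for a $\theta$ extending its literals. Then $\mathrm{max}_\Phi\le 2k_1$ and, symmetrically, $\mathrm{max}_\Psi\le 2k_2$, so $F$ never has two $K$-diverse extensions.
\item If $b$ is dropped, $\theta\cup\{\bar a_\Phi\}\cup D_1$ is an extension for every $\theta$, and $\beta\cup\{b_\Psi\}\cup D_2$ is one for every $\beta\models\psi$. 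So $\mathrm{max}_\Psi=5k_2+1$ already when $\Psi$ is true.
\end{itemize}
Either way your table is false, and so is the claim for this $F$. The construction itself must change so that the disjoint ``complementary'' witness exists without letting a non-satisfying assignment defend $\varphi$.

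\textbf{Two smaller points.}
\begin{itemize}
\item Your ``delicate point'' is not a real obstacle, but Lemma~\ref{lem:range}(2) is not the reason. That lemma needs mutually attacking $R$-equivalent arguments. The $d_i$ do not attack one another, and $\bar\varphi$ is not $R$-equivalent to $d_i$, since it also attacks the other $d_j$. Argue directly instead. $\bar\varphi$ conflicts with every $d\in D_1$. Replacing a proper nonempty $D'\subsetneq D_1$ by $D_1$ keeps admissibility and adds $D_1\setminus D'$ to the range. So in a range-maximal set this block is exactly $\{\varphi\}$, $\{\bar\varphi\}$ or $D_1$, and it contributes at most $k_1k_2+1$ to a distance.
\item $k_1\ge 3$ only gives $2k_1+5k_2+1\le K$, with equality at $k_1=3$. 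You need $k_1\ge 4$ for $K$ to be unattained when $\Phi$ is true and $\Psi$ is false.
\end{itemize}
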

\begin{proof}[Proof of Claim]
	\textbf{Suppose $\Psi$ is true and $\Phi$ is false.}
	Then, by Lemma~\ref{lem:self-attcks}, $F_\Phi$ admits a $\sem$-extension $S'_\theta= \theta \cup \{\bar\varphi\}$, and hence $F^d_\Phi$ admits a $\sem$-extension $S_\theta = S'_\theta\cup \{a_\Phi\}$.
	By Claim~\ref{claim:semi-sigmap}, the set $S_D\dfn (S_\theta\setminus \{\bar\varphi\}) \cup D_1$ is a valid $\sem$-extension for $F_\Phi$.
	As before, we define $\bar\theta \dfn \{\bar\ell \mid \ell\in\theta\}$ such that, $\bar\ell = \bar x$ if $\ell =x$ and $\bar\ell = x$ if $\ell=\bar x$.
	Next, we let $S_{\bar\theta} \dfn \bar\theta\cup \{ \bar a_\Phi,\varphi\}$ and observe that $S_{\bar\theta}$ is a $\sem$-extension in $F^d_\Phi$. 
	This holds since the argument $\bar a_\Phi$ alone attacks all the ``clause arguments'' in $F^d_\Phi$ and the remaining literals are taken in such a way to attain the maximum range.
	Moreover, since $\Psi$ is true, any $\sem$-extension $T'_\beta$ for $F_\Psi$ contains $\psi$ and can be written as $T'_\beta\dfn \beta\cup\{\psi\}$ for some assignment $\beta$ such that $\beta\models \psi$.
	We let $T_\beta = T'_\beta \cup \{b_\Psi\}$ and observe that $T_\beta$ is a valid $\sem$-extension in $F^d_\Psi$.
	Likewise, $T_{\bar\beta}\dfn \bar\beta\cup \{\bar b_\Psi\}$ is also a $\sem$-extension in $F^d_\Psi$ following the same reasoning as for $F^d_\Phi$.
	
	To complete the claim proof for this direction, we consider $\sem$-extensions
	$E_1= S_D\cup T_\beta$ and $E_2= S_{\bar\theta}\cup  T_{\bar \beta}$ in $F$. Observe that:
	(i) $|S_D|=k_1\times k_2+k_1$,
	(ii) $|T_\beta|= k_2+1= |T_{\bar\beta}|$, and
	(iii) $|S_{\bar\theta}|= k_1+1$.
	Furthermore, we have
	(iv) $T_{\beta} \cap T_{\bar\beta} = \{\psi\}$, and therefore
	(v) $d(T_{\beta}, T_{\bar\beta})= 2k_2$,
	(v) no $S_i$ shares any argument with $T_j$ for $i\in\{D, \bar\theta\}$ and $j\in\{\beta,\bar \beta\}$.
	As a consequence, we have $d(E_1,E_2)= (k_1\times k_2 + k_1) +(k_2) + (k_1+1) + (k_2)= K$.
	This proves our claim that \textbf{$F$ admits two $K$-diverse $\sem$-extension} for $\sem\in \{\semi,\stag\}$.
	
	We next prove that $K$ is maximal and hence for any extensions $S_1,S_2$ of $F^d_\Phi$ and $T_1,T_2$ of $F^d_\Psi$, $d(E_1,E_2)\leq K$ for $E_i = S_i\cup T_i$ and $i=1,2$.
	
	First, observe that $d(T_1,T_2)\leq 2k_2$ as no $\sem$-extension in $F^d_\Psi$ contains $\bar\psi$ and hence also no argument in $D_2$.
	Then, we consider a case distinction on extensions $S_i$ in $F^d_\Phi$.
	
	\textbf{Case-1} 
	Both $S_i$ correspond to some satisfying assignments of $\varphi$. Then $S_1\cap S_2 = \{\varphi\}$ and we have that $d(S_1,S_2)\leq 2k_1$ as no $S_i$ can contain either the argument $\bar \varphi$ or any argument in $D_1$.
	Then, due to $d(T_1,T_2)\leq 2k_2$, we have that $d(E_1,E_2) \leq 2k_1 + 2k_2$ where $E_1 \dfn S_1\cup T_1$ and $ E_2\dfn S_2\cup T_2$.

	\textbf{Case-2} 
	At least one of $S_i$ (say $S_1$) corresponds to a non-satisfying assignment and thus takes the form $S_1\dfn \theta\cup\{\bar\varphi,a_\Phi\}$ due to Lemma~\ref{lem:self-attcks}.
	First, we construct $S_D\dfn (S_1\setminus \{\varphi\}) \cup D_1$ from $S_1$ and observe that $|S_1|=k_1\times k_2+ k_1$.
	Then, we consider the following two sub-cases to see at most how much diversity can be achieved due to $S_D$ and $S_2$ in the sub-AF $F^d_\Phi$.
	\textbf{(I)} If $S_2\dfn \alpha \cup\{\bar\varphi, a_\Phi\}$, then replacing $\bar\varphi$ by $D_1$ in $S_2$ would decrease the diversity of the pair as $D_1\subseteq S_D\cap S_2$. Thus, the maximum diversity is in fact obtained by the set $S_2$ it self, which results in $d(S_D, S_2) = (k_1\times k_2 + k_1)+ (k_1 + 1)$.
	Finally, \textbf{(II)} if $S_2\dfn \alpha \cup\{\varphi, x\}$ for $x\in\{a_\Phi, \bar a_\Phi\}$. Then, we still have either $d(S_D,S_2)= |S_D|+|S_2|$ if $x= \bar a_\Phi$ or $d(S_D,S_2)= |S_D|+|S_2|-2$ if if $x=  a_\Phi$.
	Together with $d(T_1,T_2)\leq 2k_2$, we obtain that $d(E_1,E_2) \leq (k_1\times k_2 + k_1)+ (k_1 + 1) + 2k_2$ where $E_1 \dfn S_D\cup T_1$ and $ E_2\dfn S_2\cup T_2$.
	
	Hence, in any event, we have $d(E_1,E_2) \leq K$ where $E_1 \dfn S_1\cup T_1$ and $ E_2\dfn S_2\cup T_2$ for any pair of $\sem$-extensions $S_1,S_2$ in $F_\Phi^d$ and $T_1,T_2$ in $F_\Psi^d$.
	This completes the proof to our claim in this direction.
	
	{Conversely, suppose $(\Psi,\Phi)$ is a negative instance of SAT-UNSAT.}
	Then, we have one of the following three cases.
	
	\textbf{Case-1: both $\Phi$ and $\Psi$ are true.}
	Then, each $\sem$-extension $S'$ and $T'$ of the sub-AF $F_\Phi$ and $F_\Psi$ contains $\varphi$ and $\psi$, respectively.
	This yields $\sem$-extensions $S\dfn S'\cup\{a_\Phi\}$ and $T\dfn T' \cup\{b_\Psi\}$ in $F^d_\Phi$ and $F^d_\Psi$, of sizes $k_1+1$ and $k_2+2$, respectively. 
	Hence one can achieve a maximum diversity of $2(k_1+k_2)$ at most, by considering two completely different $\sem$-extensions in both sub-AFs.
	Consequently, $F$ does not have two $K$-diverse $\sem$-extensions.
	
	\textbf{Case-2: both $\Phi$ and $\Psi$ are false.}
	Here, (sub-)AFs $F^d_\Phi$ and $F^d_\Psi$ yield extensions $S_{\theta} \dfn \theta\cup \{\bar\varphi, a_\Phi\}$ and $T_{\beta}\dfn \beta\cup \{\bar\psi, b_\Psi\}$, corresponding to failing assignments $\theta$ and $\beta$ of $\varphi$ and $\psi$, respectively.
	Then, we let $S_D\dfn (S_\theta\setminus\{\bar\varphi\})\cup D_1$ and $T_D\dfn (T_\beta\setminus\{\bar\psi\})\cup D_2$.
	As our first extension in $F$, we let $E_1 = S_D\cup T_D$.
	Moreover, we consider $S_{\bar\theta}\dfn \bar\theta \cup \{\bar a_\Phi, \varphi\}$ and $T_{\bar\beta}\dfn \bar\beta \cup \{\bar b_\Psi, \psi\}$ as two additional $\sem$-extensions in $F^d_\Phi$ and $F^d_\Psi$, respectively.
	As our second witness, we let $E_2= S_{\bar\theta}\cup T_{\bar\beta}$.
	Clearly, $E_1\cap E_2 = \emptyset$ and hence $d(E_1,E_2)= |E_1|+|E_2|$.
	We conclude by observing that $|E_1|= |S_D|+|T_D|= (k_1\times k_2 +k_1)+ (3k_2+k_2)$ and $|E_2|= |S_{\bar\theta}| + |T_{\bar\beta}|= (k_1+1)+(k_2+1)$ and hence $d(E_1,E_2)> K$.
	As a result, we obtain two $\sem$-extensions in $F$ which are $K'$-diverse where $K'>K$.
	This is a contradiction to $K$ being the maximal achievable diversity.
	
	\textbf{Case-3: $\Phi$ is true and $\Psi$ is false.}
	In this case, any $\sem$-extension $S$ for $F_\Phi$ contains $\varphi$ and hence no $\sem$-extension contains either $\bar\varphi$ or any argument in $D_1$.
	Therefore, any extension of $F^d_\Phi$ has size $k_1+1$ at most.
	Two most diverse extensions in $F^d_\Phi$ are thus $S_\theta \dfn \theta\cup \{a_\Phi, \varphi\}$ and $S_{\bar\theta} \dfn \bar\theta\cup  \{\bar a_\Phi, \varphi\}$ with $d(S_\theta, S_{\bar\theta})=2k_1$.
	
	Then, we construct two most diverse $\sem$-extensions in$F^d_\Psi$.
	Since $\Psi$ is false, there is an assignment $\beta$ such that $T_\beta \dfn \beta \cup \{b_\Psi, \bar\psi\}$ is a $\sem$-extension in $F^d_\Psi$.
	Now, we let $T_D\dfn (T_\beta\setminus\{\bar\psi\})\cup D_2$.
	Moreover, we let $T_{\bar\beta}\dfn \bar\beta\cup\{\bar b_\Psi, \psi\}$ as our second $\sem$-extension (as $\bar b_\Psi$ defends $\psi$) in $F^d_\Psi$.
	It can be observed that $T_D$ and $T_{\bar\beta}$ are two most diverse $\sem$-extensions in $F^d_\Psi$ and that $d(T_D,T_{\bar\beta})=3k_2+(k_2+1)$.
	Now, together with two most diverse $\sem$-extensions $S_\theta,S_{\bar\theta}$ in $F^d_\Phi$, we obtain $E_1= S_\theta \cup T_D$ and $E_2= S_{\bar\theta}\cup T_{\bar\beta}$ as two most diverse $\sem$-extensions in $F$.
	Nevertheless, we have $d(E_1, E_2)=2k_1+4k_2+1$ and one cannot achieve larger diversity values in $F$.
	Hence, once again $F$ does not have two $K$ diverse $\sem$-extensions.
	
	In summary, the only way to achieve a diversity of $K= k_1\times k_2+2(k_1+k_2)+1$ is when $\Phi$ is false and $\Psi$ is true. The table below highlights all the subcases in the proof of our claim.\\
	\begin{tabular}{ccc}
		$\Phi$ & $\Psi$  & max diversity \\
		true & true & $2k_1+2k_2$ \\
		{true} & {false} & $2k_1+4k_2+1$ \\
		false & true &  $k_1\times k_2 + 2(k_1+k_2)+1$ \\
		false & false & $k_1\times k_2 + 2(k_1+k_2)+{3k_2+2}$ \\
	\end{tabular}\\
	
\end{proof}

\end{proof}

We next establish that the argument version of the diversity problem is also $\DPtwo$-complete for $\sem\in\{\semi,\stag\}$.
\begin{restatable}[$\star$]{theorem}{maxext}\label{thm:max-k-div-semi-arg}
The problem $\maxkdiversearg_\sem$ is $\DPtwo$-complete for each $\sem\in\{\semi,\stag\}$.
\end{restatable}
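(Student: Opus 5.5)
Membership follows the pattern used for Theorem~\ref{thm:max-k-div-adm}, lifted one level. Arguments $a,b$ are maximally $k$-diverse iff two conditions hold. \textbf{(i)} There are $\sem$-extensions $S_p,T_p$ with $a\in S_p$, $b\in T_p$ and $d(S_p,T_p)=k$. \textbf{(ii)} Every pair $S_n,T_n$ of $\sem$-extensions with $a\in S_n$, $b\in T_n$ satisfies $d(S_n,T_n)\leq k$. Condition (i) is in $\SigmaP$: guess the two sets and verify range-maximality with an $\NP$ oracle. Condition (ii) is in $\PiP$, since its complement is again a guess-and-check with an $\NP$ oracle. Hence the problem lies in $\DPtwo$.

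For hardness I reduce from SAT-UNSAT for $\TQSAT$ pairs $(\Psi,\Phi)$, reusing the AF $F$ from the proof of Theorem~\ref{thm:max-k-div}, Statement 2. That AF combines $F^d_\Phi$ and $F^d_\Psi$, which are built with the self-attacking clause translation of Lemma~\ref{lem:self-attcks}, together with the copy sets $D_1,D_2$ and the dummies $a_\Phi,\bar a_\Phi,b_\Psi,\bar b_\Psi$. In that proof the maximal pair in the positive case is $E_1=S_D\cup T_\beta$ and $E_2=S_{\bar\theta}\cup T_{\bar\beta}$. The most natural choice of signal arguments is $d_1\in D_1$ for $E_1$ and $\bar a_\Phi$ for $E_2$. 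If a fixed $d_1$ causes trouble, I will instead attach fresh signal arguments $s_1,s_2$ that are $R$-equivalent copies. Concretely, I make $s_1$ a further copy of $\bar\varphi$ and note that Lemma~\ref{lem:range} keeps ranges unchanged; I let $s_2$ be attacked only by $a_\Phi$ and attack $a_\Phi$, mirroring $\bar a_\Phi$.

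With $K$ set as before, I verify the four cases of the table in that proof, now restricted to pairs with $s_1\in E_1$ and $s_2\in E_2$.
\begin{itemize}
\item $\Phi$ false, $\Psi$ true: the witness pair above contains the signals and attains $K$. The upper bound is inherited, because restricting to pairs containing the signals can only lower the maximum.
\item $\Phi$ true: no extension contains $\bar\varphi$ or its copies, so $s_1$ is never accepted and the arguments are not $K$-diverse at all. This mirrors Case-2 of Claim~\ref{claim:adm-DP-arg}.
\item $\Phi$ false, $\Psi$ false: the extensions $E_1=S_D\cup T_D$ and $E_2=S_{\bar\theta}\cup T_{\bar\beta}$ already contain the signals and exceed $K$, so $K$ is not maximal.
\end{itemize}

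The main obstacle is Case-2 of the upper-bound argument. There one must check that allowing $E_1$ and $E_2$ to be any extensions containing the signals does not admit a pair whose $\Phi$-part is more diverse than $k_1k_2+2k_1+1$. A bad instance would be when $s_2$ can coexist with extensions that also contain $D_1$. I would handle this by making $s_2$ conflict with all of $D_1$ and with $\bar\varphi$, and then recheck the size bookkeeping $|S_D|=k_1k_2+k_1$ and $|S_{\bar\theta}|=k_1+1$.
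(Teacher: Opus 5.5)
Your membership argument and hardness strategy match the paper's. The paper's proof also reuses the AF from Statement~2 of Theorem~\ref{thm:max-k-div} and only designates two signal arguments lying in the witness pair; it takes $a_\Phi$ and $\bar a_\Phi$. Granting the properties claimed for that AF, your signal $d_1\in D_1$ works equally well, and it makes the $\Phi$-true cases immediate. The ``main obstacle'' you raise is not one: your own remark that restricting to pairs containing the signals cannot raise the maximum already gives the upper bound, so the extra conflicts for $s_2$ are unnecessary. The fresh copies $s_1,s_2$ are also not size-neutral. Nothing in $S_D$ attacks $s_1$ and nothing in $S_{\bar\theta}$ attacks $s_2$, so range-maximality absorbs them into those sets, and $K$ could not stay ``as before''.

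The genuine gap is the inherited claim that the positive-case witness pair exists. Take the falsifying assignment $\theta$ delivered by Lemma~\ref{lem:self-attcks}. Once the dummies are present, neither $S_\theta=\theta\cup\{\bar\varphi,a_\Phi\}$ nor $S_D=\theta\cup D_1\cup\{a_\Phi\}$ is a semi-stable or stage extension, for two reasons.
\begin{itemize}
\item The argument $\bar a_\Phi$ attacks every clause argument and is attacked only by $a_\Phi$. Hence $\theta\cup\{\varphi,\bar a_\Phi\}$ is admissible for \emph{every} $\theta$; the paper uses exactly this for $S_{\bar\theta}$.
\item The range of $\theta\cup\{\varphi,\bar a_\Phi\}$ contains the range of $S_D$ together with the clause arguments falsified by $\theta$.
\end{itemize}
This domination survives the union with $T_\beta$, so $E_1$ is not an extension.

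No other witness can be substituted either.
\begin{itemize}
\item Suppose the argument $b$ of $G_\Phi$, attacked only by $\varphi$, is kept. Then $\alpha\cup\{\varphi,\bar a_\Phi\}$, with $\alpha$ extending the literals of the given set, range-dominates via $b$ every conflict-free set containing $\bar\varphi$ or an element of $D_1$. So $d_1$ is never accepted, and positive instances become negative. With the paper's signals the diversity likewise stays at most $2k_1+2k_2<K$.
\item Suppose instead $b$ is dropped. Then $\theta\cup\{\bar\varphi,\bar a_\Phi\}$ and $\theta\cup D_1\cup\{\bar a_\Phi\}$ are range-maximal for every $\theta$. So $\bar\varphi$ and $D_1$ are accepted even when $\Phi$ is true, and your second bullet fails.
\end{itemize}

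The dummy pair gives $\varphi$ an unconditional defender, which decouples the acceptance of $\bar\varphi$ from the truth of $\Phi$. This defect sits in the construction of Theorem~\ref{thm:max-k-div}, so the paper's short proof has the same gap. What is missing in both is a gadget that fixes the maximal diversity of the $\Phi$-component exactly without destroying the range argument of the standard translation.
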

\begin{proof}
The proof follows due to the construction in the proof Theorem~\ref{thm:max-k-div}.
Intuitively, we need to select two \emph{signal} arguments which are maximally $K$-diverse  in $F$ iff $\Psi$ is true and $\Phi$ is false.
Interestingly, we can use our auxiliary arguments $a_\Phi,\bar a_\Phi$ in $F^d_\Phi$ to serve this purpose.
Then, we assert the following claim, whose proof follows the exact same reasoning as the proof of Claim~\ref{claim:semi-DP2}.
\begin{claim}\label{claim:semi-DP2-arg}
	The $\TQSAT$ pair $(\Psi,\Phi)$ is a positive instance of SAT-UNSAT iff the argument $a_\Phi$ and $\bar a_\Phi$ are maximally $K$-diverse in $F$  under semantics $\sem\in\{\semi,\stag\}$.
\end{claim}
This holds since the witness pair of $\sem$-extensions contain the arguments $a_\Phi$ and $\bar a_\Phi$, in each case of the proof for Claim~\ref{claim:semi-DP2} .

\end{proof}

\clearpage
\section{Further Details on Experiments}
\begin{figure}[!h]
\centering \includegraphics[width=.75\textwidth]{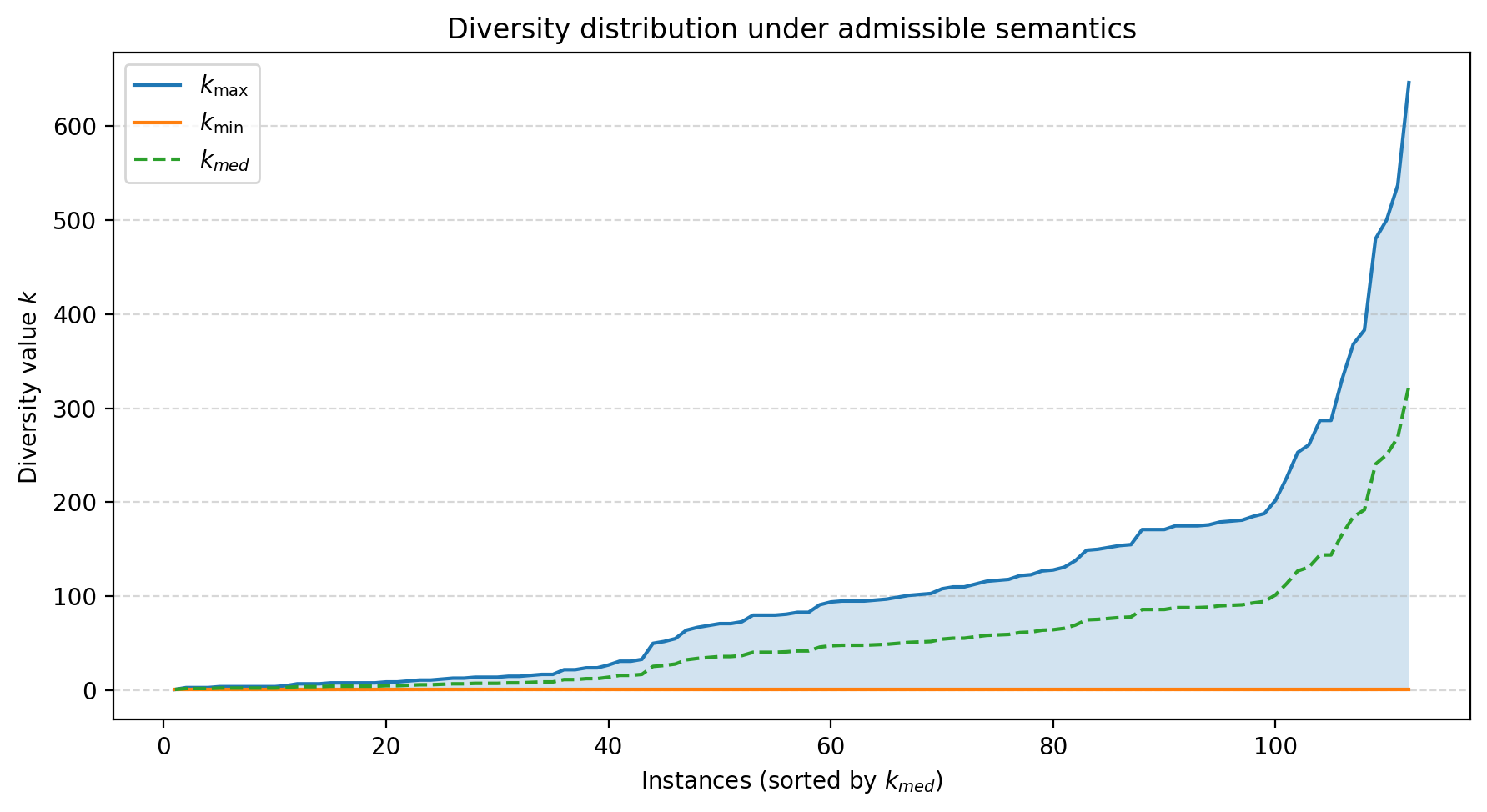}
\caption{Distribution of the minimum, median, and maximum diversity
	for the instances under admissible semantics. The instances are sorted
	by median diversity.}
\label{fig:exfig2}
\end{figure}

\begin{figure}[!h]
\centering \includegraphics[width=.75\textwidth]{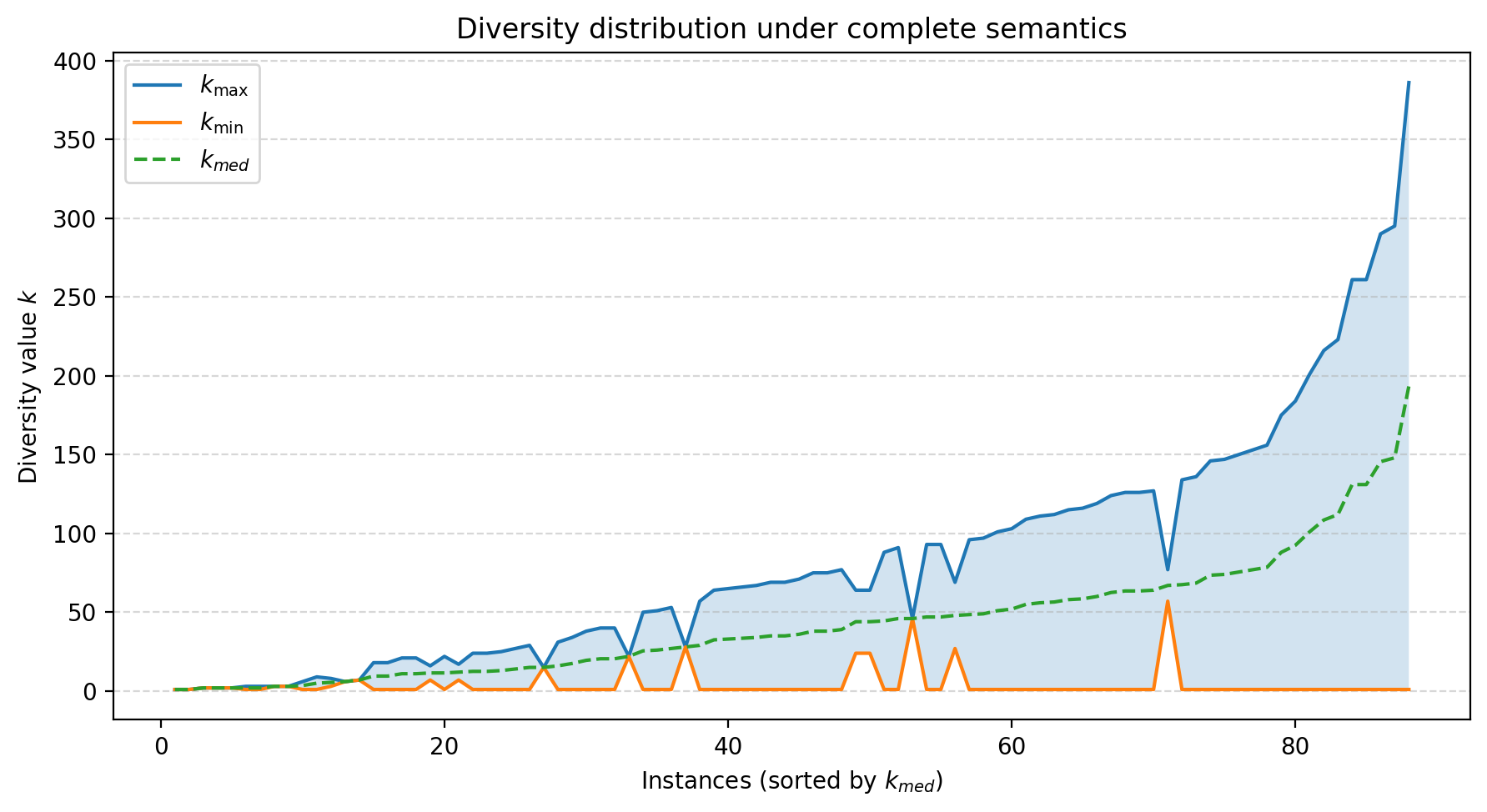}
\caption{Distribution of the minimum, median, and maximum diversity
	for the instances under complete semantics. The instances are sorted
	by median diversity.}
\label{fig:exfig3}
\end{figure}

Figures~\ref{fig:exfig2} and~\ref{fig:exfig3} report the instance-wise distribution of diversity
values under admissible and complete semantics, respectively.
As in the stable case discussed in the main text, diversity values
exhibit a broad range across instances, indicating substantial
variation in the structural differences between extensions.
Under admissible semantics, the minimal diversity remains at one for
almost all instances, reflecting the permissive nature of the
semantics, where nearly identical extensions can coexist.
Nevertheless, the maximal diversity shows a pronounced increase for a
subset of instances, suggesting that even under admissible semantics,
certain frameworks admit highly divergent extensions.
For complete semantics, the observed patterns lie between those of
admissible and stable semantics.
While minimal diversity occasionally attains values larger than one, it
remains relatively small for most instances, whereas maximal diversity
increases gradually and reaches comparatively high values for harder
instances.

Observe that the graphs for stable~(Figure~\ref{fig:exfig}) and complete semantics~(Figure~\ref{fig:exfig3}) look somewhat similar but the one for the case of admissible~(Figure~\ref{fig:exfig2}) appears different.
This can best be explained via the definitions of semantics: admissible extensions only require conflict-freeness and defending their elements, whereas stable and complete also require certain closure properties (stable: attack all arguments not in the set, complete: contain all the arguments defended by the set). 
As a result, some diversity values are only attained by admissible semantics, as it could be that the witnessing admissible extensions are neither complete nor stable. This phenomenon is also illustrated in Example~\ref{ex:levels-adm}.

\end{document}